\def\@seccntformat#1{\@ifundefined{#1@cntformat}%
   {\csname the#1\endcsname\quad}  % default
   {\csname #1@cntformat\endcsname}% enable individual control
}
\let\oldappendix\appendix %% save current definition of \appendix
\renewcommand\appendix{%
    \oldappendix
    \newcommand{\section@cntformat}{\appendixname~\thesection\quad}
}
\DeclareMathOperator*{\argmax}{arg\,max}
\DeclareMathOperator*{\argmin}{arg\,min}
\DeclareMathOperator*{\esssup}{ess\,sup}
\DeclareMathOperator{\Var}{Var}
\DeclareMathOperator{\Mod}{\ \mathrm{mod}}
\theoremstyle{definition}
\newtheorem{myDef}{Definition}
\newtheorem{myThm}{Theorem} 
\newtheorem{myLemma}{Lemma} 
\newtheorem{myCor}{Corollary}
\newtheorem{myRemark}{Remark}
\newtheorem{myObservation}{Observation}
\title{\vspace{-7mm}\fontsize{17pt}{10pt}\selectfont\textbf{Handling Concept Drift via Model Reuse}} % Article title
\author{
Peng Zhao, Le-Wen Cai, and Zhi-Hua Zhou\thanks{Corresponding author. Email: zhouzh@nju.edu.cn}\\ 
\small
National Key Laboratory for Novel Software Technology,\\
\small
Nanjing University, Nanjing 210023, China\\
\small
\{zhaop, cailw, zhouzh\}@lamda.nju.edu.cn\\
%\thanks{A thank you or further information}\\ % Your name
%\normalsize \href{mailto:marco.torres.810@gmail.com}{marco.torres.810@gmail.com}\\[2mm] % Your email address
}
\date{}
\begin{document}
\maketitle

\hrule
\begin{abstract} 
In many real-world applications, data are often collected in the form of stream, and thus the distribution usually changes in nature, which is referred as \emph{concept drift} in literature. We propose a novel and effective approach to handle concept drift via model reuse, leveraging previous knowledge by reusing models. Each model is associated with a weight representing its reusability towards current data, and the weight is adaptively adjusted according to the model performance. We provide generalization and regret analysis. Experimental results also validate the superiority of our approach on both synthetic and real-world datasets.
\end{abstract} 
\hrule
\section{Introduction}

With a rapid development in data collection technology, it is of great importance to analyze and extract knowledge from them. However, data are commonly in a streaming form and are usually collected from non-stationary environments, and thus they are evolving in nature. In other words, the joint distribution between the input feature and the target label will change, which is also referred as \emph{concept drift} in literature~\citep{journals/csur/GamaZBPB14}. If we simply ignore the distribution change when learning from the evolving data stream, the performance will dramatically drop down, which are not empirically and theoretically suitable for these tasks. The concept drift problem has become one of the most challenging issues for data stream learning. It has gradually drawn researchers' attention to design effective and theoretically sound algorithms.

Data stream with concept drift is essentially almost impossible to learn (predict) if there is not any assumption on distribution change. That is, if the underlying distribution changes arbitrarily or even adversarially, there is no hope to learn a good model to make the prediction. We share the same assumption with most of the previous work, that is, \emph{there contains some useful knowledge for future prediction in previous data}. No matter sliding window based approaches~\citep{conf/icml/KlinkenbergJ00,conf/sdm/BifetG07,journals/ida/KunchevaZ09}, forgetting based approaches~\citep{conf/ecai/koychev2000gradual,journals/ida/Klinkenberg04} or ensemble based approaches~\citep{conf/icml/KolterM05,journals/jmlr/KolterM07,journals/tnnls/suny18}, they share the same assumption, whereas the only difference is how to utilize previous knowledge or data. 

Another issue is that most previous work on handling concept drift focus on the algorithm design, only a few work consider the theoretical part~\citep{journals/ml/HelmboldL94,conf/colt/CrammerMEV10,conf/alt/MohriM12}. There are some work proposing algorithms along with theoretical analysis, for example, \cite{conf/icml/KolterM05} provides mistake and loss bounds and guarantees that the performance of the proposed approach is relative to the performance of the base learner. \cite{conf/icml/HarelMEC14} detects concept drift via resampling and provides the bounds on differentiates based on stability analysis. However, seldom have clear theoretical guarantees, or justifications on why and how to leverage previous knowledge to fight with concept drift, especially from the generalization aspect.

In this paper, we propose a novel and effective approach for handling \underline{\textsc{Con}}cept \underline{\textsc{d}}rift via m\underline{\textsc{o}}del \underline{\textsc{r}}euse, or \textsc{Condor}. It consists of two modules, $\mathtt{ModelUpdate}$ module aims at leveraging previous knowledge to help build the new model and update model pool, while $\mathtt{WeightUpdate}$ module adaptively assigns the weights for previous models according to their performance, representing the reusability towards current data. We justify the advantage of $\mathtt{ModelUpdate}$ from the aspect of generalization analysis, showing that our approach can benefit from a good weighted combination of previous models. Meanwhile, the $\mathtt{WeightUpdate}$ module guarantees that the weights will concentrate on the better-fit models. Besides, we also provide the dynamic regret analysis. Empirical experiments on both synthetic and real-world datasets validate the effectiveness of our approach.

In the following, Section~\ref{sec:related-work} discusses related work. Section~\ref{sec:proposed-approach} proposes our approach. Section~\ref{sec:theoretical-analysis} presents theoretical analysis. Section~\ref{sec:experiments} reports the experimental results. Finally, we conclude the paper and discuss future work in Section~\ref{sec:conclusion}.

\section{Related Work}
\label{sec:related-work}
\textsc{Concept Drift} has been well-recognized in recent researches~\citep{journals/csur/GamaZBPB14,journals/csur/GomesBEB17}. Basically, if there is not any structural information about data stream, and the distribution can change arbitrarily or even adversarially, we shall not expect to learn from historical data and make any meaningful prediction. Thus, it is crucial to make assumptions about the concept drift stream. Typically, most previous work assume that the nearby data items contain more useful information w.r.t. the current data, and thus researchers propose plenty of approaches based on the \textit{sliding window} and \textit{forgetting} mechanisms. Sliding window based approaches maintain the nearest data items and discard old items, with a fixed or adaptive window size~\citep{conf/icml/KlinkenbergJ00,journals/ida/KunchevaZ09}. Forgetting based approaches do not explicitly discard old items but downweight previous data items according to their age~\citep{conf/ecai/koychev2000gradual,journals/ida/Klinkenberg04}. Another important category falls into the \textit{ensemble} based approaches, as they can adaptively add or delete base classifiers and dynamically adjust weights when dealing with evolving data stream. A series work borrows the idea from boosting~\citep{journals/ml/Schapire90} and online boosting~\cite{conf/icml/BeygelzimerKL15}, dynamically adjust weights of classifiers. Take a few representatives, dynamic weighted majority ($\mathtt{DWM}$) dynamically creates and removes weighted experts in response to changes~\citep{conf/icdm/KolterM03,journals/jmlr/KolterM07}. Additive expert ensemble ($\mathtt{AddExp}$) maintains and dynamically adjusts the additive expert pool, and provides the theoretical guarantee with solid mistake and loss bounds~\citep{conf/icml/KolterM05}. Learning in the non-stationary environments ($\mathtt{Learn}^{\mathtt{++}}$.$\mathtt{NSE}$) trains one new classifier for each batch of data it receives, and combines these classifiers~\citep{journals/tnn/ElwellP11}. There are plenty of approaches to learning or mining from the evolving data stream, readers can refer to a comprehensive survey~\citep{journals/csur/GamaZBPB14,journals/csur/GomesBEB17}. As for boosting and ensemble approaches, readers are recommended to read the books~\citep{book/MIT/Schapire2012,book/Chapman/zhou2012}. 

Our approach is kind of similar to DWM and AddExp on the surface. We all maintain a model pool and adjust weights to penalty models with poor performance. However, we differ from the model update procedure and they ignore to leverage previous knowledge and reuse models to help build new model and  update model pool. Besides, our weight update strategies are also different. 

\textsc{Model Reuse} is an important learning problem, also named as model transfer, hypothesis transfer learning, or learning from auxiliary classifiers. The basic setting is that one desires to reuse pre-trained models to help further model building, especially when the data are too scarce to directly train a fair model. A series work lies in the idea of \emph{biased regularization}, which leverages previous models as the bias regularizer into empirical risk minimization, and achieves a good performance in plenty of scenarios~\citep{conf/icml/DuanTXC09,conf/cvpr/TommasiOC10,journals/pami/TommasiOC14}. There are also some other attempts and applications like model reuse by random forests~\citep{journals/pami/SegevHMCE17}, and applying model reuse to adapt different performance measures~\citep{journals/pami/LiTZ13}. Apart from algorithm design, theoretical foundations are recently established by stability~\citep{conf/icml/KuzborskijO13}, Rademacher complexity~\citep{journals/mlj/KuzborskijO17} and transformation functions~\citep{conf/nips/DuKSP17}.

Our paper proposes to handle concept drift problem via utilizing model reuse learning. The idea of leveraging previous knowledge is reminiscent of some previous work coping with concept drift by model reuse (transfer), like the temporal inductive transfer ($\mathtt{TIX}$) approach~\citep{conf/sigir/Forman06} and the diversity and transfer-based ensemble learning ($\mathtt{DTEL}$) approach~\citep{journals/tnnls/suny18}. Both of them are batch-style approaches, that is, they need to receive a batch of data each time, whereas ours can update either in an incremental style or a batch update mode. TIX concatenates the predictions from previous models into the feature of next data batch as the new data, and a new model is learned from the augmented data batch. DTEL chooses decision tree as the base learner, and builds a new tree by ``fine-tuning'' previous models by a direct tree structural adaptation. It maintains a fixed size model pool with the selection criteria based on diversity measurement. They both do not depict the reusability of previous models, which is carried out by $\mathtt{WeightUpdate}$ module in our approach. Last but not the least important, our approach is proposed with sound theoretical guarantees, in particular, we carry out a generalization justification on why and how to reuse previous models. Nevertheless, theirs are not theoretically clear in general.

\section{Proposed Approach}
\label{sec:proposed-approach}
In this section, we first illustrate the basic idea, and then identify two important modules in designing our proposed approach, i.e., $\mathtt{ModelUpdate}$ and $\mathtt{WeightUpdate}$. 

Specifically, we adopt a drift detection algorithm to split the whole data stream into epochs in which the distribution underlying is relatively smooth, and we only do the model update when detecting the concept drift or achieving the maximum update period $p$. As shown in Figure~\ref{figure:main-idea}, the drift detector $\mathfrak{D}$ will monitor the concept drift. When the drift is detected, instead of resetting the model pool and incrementally training a new model, we aim at leveraging the knowledge in previous models to enhance the overall performance by alleviating the cold start problem.

Basically, our approach consists of two important modules,\vspace{-2mm}
\begin{enumerate}[itemsep=0.5mm]
    \item[(1)] $\mathtt{ModelUpdate}$ by model reuse:  we leverage previous models to build the new model and update model pool, by making use of biased regularization multiple model reuse.
    \item[(2)] $\mathtt{WeightUpdate}$ by expert advice:  we associate each previous model with a weight representing the reusability towards current data. The weights are updated according to the performance of each model, in an exponential weighted average manner.
\end{enumerate}

\begin{figure}[!t]
\centering
\includegraphics[width=0.9\textwidth]{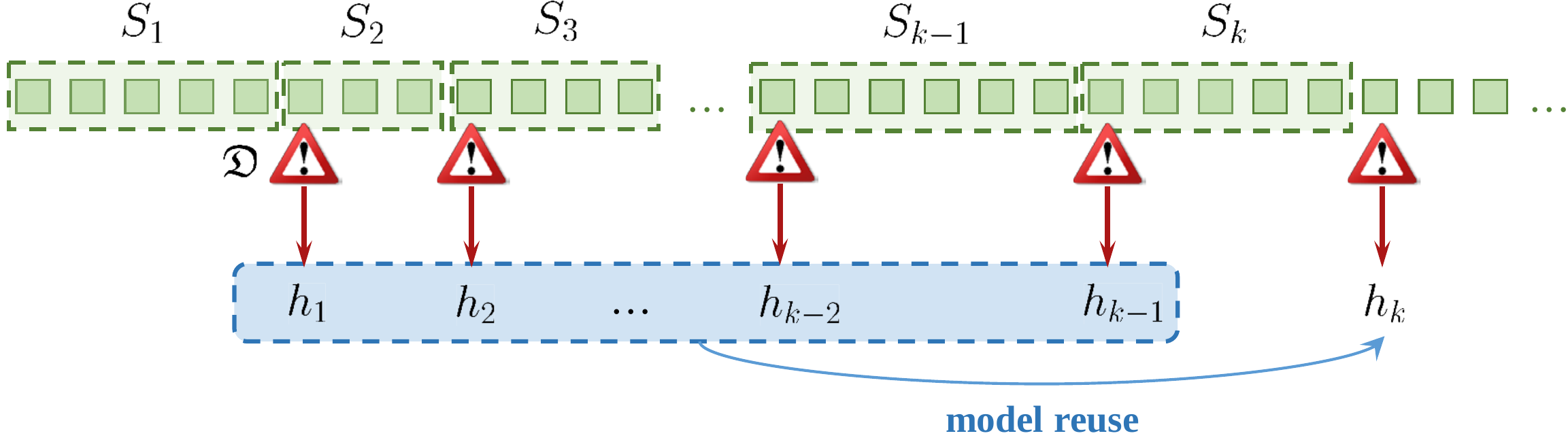}
\caption{Illustration of main idea: on one hand, we utilize the data items in current epoch $S_k$; on the other hand, we leverage the previous knowledge ($\{h_1,h_2,\ldots,h_{k-1}\}$) via model reuse.}
\label{figure:main-idea}
\end{figure}

\subsection{Model Update by Model Reuse}

% \subsubsection{Linear Model Reuse by Biased Regularization}
We leverage previous models to adapt the current data epoch via model reuse by \emph{biased regularization}~\citep{conf/colt/ScholkopfHS01,journals/pami/TommasiOC14}. 

Consider the $k$-th model update as illustrated in Figure~\ref{figure:main-idea}, we desire to leverage previous models $\{h_1,\ldots,h_{k-1}\}$ and current data epoch $S_k$ to obtain a new model $h_k$. With a slight abuse of notation, we denote $S_k = \{(\mathbf{x}_1,y_1),\ldots,(\mathbf{x}_m,y_m)\}$. In this paper, we adopt linear classifier as the base model, and the model reuse by biased regularization can be formulated as
\begin{equation}
    \label{eq:by-biased-reg}
    \hat{\mathbf{w}}_k = \argmin_{\mathbf{w}} \left\lbrace\frac{1}{m}\sum_{i=1}^m \ell\left(\langle \mathbf{w} ,\mathbf{x}_i\rangle,y_i\right) + \mu \Omega(\mathbf{w} - \mathbf{w}_p)\right\rbrace,
\end{equation}
where $\ell:\mathcal{Y} \times \mathcal{Y} \rightarrow \mathbb{R}_+$ is the loss function, and $\Omega: \mathcal{H} \rightarrow \mathbb{R}_+$ is the regularizer. Besides, $\mu > 0$ is a positive trade-off regularization coefficient, and $\mathbf{w}_p$ is the linear weighted combination of previous models, namely, $\mathbf{w}_p = \sum_{j=1}^{k-1} \beta_j \hat{\mathbf{w}}_j$, where $\beta_j$ is the weight associated with previous model $h_j$, representing the reusability of each model on current data epoch. 

For simplicity, in this paper, we choose the square loss with $\ell_2$ regularization in practical implementation, essentially, Least Square Support Vector Machine (LS-SVM)~\citep{book/2002/suykens2002least}. It is shown~\citep{book/2002/suykens2002least} that the optimal solution can be expressed as $\mathbf{w} = \sum_{i=1}^m \alpha_i \mathbf{x}_i$, with $\boldsymbol{\alpha} = [\alpha_1,\ldots,\alpha_m]^\mathrm{T}$ solved by 
\begin{equation}
  \label{eq:linear-close-form}
  \begin{bmatrix}
    \mathbf{K}+\frac{1}{\mu}\mathbf{I} &  \mathbf{1} \\
    \mathbf{1} & 0
  \end{bmatrix}
  \begin{bmatrix}
    \boldsymbol{\alpha} \\
    b
  \end{bmatrix}
  = 
  \begin{bmatrix}
    \mathbf{y} - \sum_{j=1}^{k-1} \beta_j \hat{\mathbf{y}}_j \\
    0
  \end{bmatrix},
  \end{equation}
where $\mathbf{K}$ is the linear kernel matrix, i.e., $\mathbf{K}_{ij} = \mathbf{x}_i^T \mathbf{x}_j$. Besides, $\mathbf{y}$ and $\hat{\mathbf{y}}_j$ are the vectors containing labels of data stream and predictions of the previous $j$-th model, that is, $\mathbf{y}=[y_1,\ldots,y_m]^\mathrm{T}$ and $\hat{\mathbf{y}}_j = [\langle \hat{\mathbf{w}}_j,\mathbf{x}_1 \rangle, \ldots, \langle \hat{\mathbf{w}}_j,\mathbf{x}_m \rangle]^\mathrm{T}$. 

If the concept drift occurs very frequently or data stream accumulates for a long time, the size of model pool will explode supposing there is no delete operation. Thus, we set the maximum of model pool size as $K$. Apparently, we can keep $K$ of all models with largest diversity as done in~\citep{journals/tnnls/suny18}. For simplicity, we only keep the newest $K$ ones in the model pool.

\begin{myRemark}
The biased regularization model reuse learning~\eqref{eq:by-biased-reg} is not limited in binary scenario, and can be easily extended to multi-class scenario as,
\begin{equation}
	\label{eq:by-biased-reg-multi-class}
	\hat{W} = \argmin_{W} \left\{\frac{1}{m} \sum_{i=1}^m \ell \left( \rho_{h_{W,p}}(\mathbf{x}_i,y_i)\right) + \mu \Omega(W) \right\},
\end{equation}
where $h_{W,p}(\mathbf{x}) = W^\mathrm{T}\mathbf{x} + h_p(\mathbf{x})$, and $\rho_h(\mathbf{x},y)$ is the margin. We defer the notations and corresponding theoretical analyses in Section~\ref{sec:multi-class-model-reuse}. In addition, our approach is a framework, and can choose any multiple model reuse algorithm as the sub-routine. For instance, we can also choose model reuse by random forests~\citep{journals/pami/SegevHMCE17}.
\end{myRemark}

\begin{algorithm}[!t]
   \caption{\textsc{Condor}}
   \label{alg:Condor}
\begin{algorithmic}[1]
    \REQUIRE {Data stream $\{(\mathbf{x}_1,y_1),\ldots,(\mathbf{x}_T,y_T)\}$. Drift detector $\mathfrak{D}(\delta,\mathbf{x},y)$ with corresponding threshold $\delta$; step size $\eta$; maximum update period (epoch size) $p$; model pool size $K$.}
    \ENSURE Prediction $\hat{y}_{t}$, where $t= 1,\ldots,T$; and returned model pool $\mathcal{H}$.
    \STATE {Initialize model on first (or a couple of) data items: $h_1 \leftarrow \mathtt{Train}(\mathbf{x}_1)$, and $\mathcal{H}\leftarrow \{h_1\}$ ;}
    \STATE {Initialize weight $\beta_{1,1} \leftarrow 1$ ;}
    \FOR{$t=1$ {\bfseries to} $T$}
        \STATE{Receive $\mathbf{x}_t$};
        \FOR{$k = 1$ {\bfseries to} $\lvert \mathcal{H} \rvert$}
            \STATE{$\hat{y}_{t,k} \leftarrow h_k(\mathbf{x}_t)$;}
        \ENDFOR
        \STATE{$\hat{y}_t \leftarrow \sum_{k=1}^{\lvert \mathcal{H} \rvert} {\beta}_{t,k} \hat{y}_{t,k}/\sum_{k=1}^{\lvert \mathcal{H} \rvert} \beta_{t,k}$; }
        \STATE{Receive $y_t$};           
        \FOR{$k=1$ {\bfseries to} $\lvert \mathcal{H} \rvert$ }   \label{alg:weight-update-start}
            \STATE{$\beta_{t+1,k} \leftarrow \beta_{t,k} \exp\{-\eta \ell(\hat{y}_{t,k},y_t)\}$; // $\mathtt{WeightUpdate}$}    
        \ENDFOR       \label{alg:weight-update-end}
        \IF{$\mathfrak{D}(\delta,\mathbf{x}_t,y_t)>0$ or ($t \Mod p = 0$)}
            \STATE{$h \leftarrow \mathtt{ModelUpdate}(S_{\lvert \mathcal{H}\rvert},\mathcal{H},\{\beta_1,\ldots,\beta_{\lvert \mathcal{H}\rvert}\})$;} 
            \STATE{$\mathcal{H} \leftarrow \mathcal{H} \cup \{h\}$;} 
            \IF{$\lvert \mathcal{H} \rvert > K$}
                \STATE{Remove the oldest model from $\mathcal{H}$.}
            \ENDIF
            \FOR{$k=1$ {\bfseries to} $\lvert \mathcal{H} \rvert$}
                \STATE{Initialize the weights: ${\beta}_{1,k} \leftarrow 1/\lvert \mathcal{H}\rvert$};
            \ENDFOR
        \ENDIF
        \ENDFOR    
\end{algorithmic}
\end{algorithm}

% \subsubsection{Non-Linear Model Reuse by Random Forest}
% In this paragra

\subsection{Weight Update by Expert Advice}
After $\mathtt{ModelUpdate}$ step, the weight distribution in the model pool $\mathcal{H}$ will reinitialize. We adopt a uniform initialization: $\beta_{1,k} = 1/\lvert \mathcal{H}\rvert$, for $k=1,\ldots,\lvert \mathcal{H}\rvert$.

After the initialization, we update weight of each model by expert advice~\citep{book/Cambridge/cesa2006prediction}. Specifically, when the new data item comes, we receive $\mathbf{x}_t$ and each previous model will provide its prediction $\hat{y}_{t,k}$, and the final prediction $\hat{y}_t$ is made based on the weighted combination of expert advice ($\hat{y}_{t,k}$s). Next, the true label is revealed as $y_t$, and we will update the weights according to the loss each model suffers, in an exponential weighted manner,
\[
\beta_{t+1,k} \leftarrow \beta_{t,k} \exp\{-\eta \ell(\hat{y}_{t,k},y_t)\}.
\] 

The overall procedure of proposed approach \textsc{Condor} is summarized in Algorithm~\ref{alg:Condor}.
% \begin{algorithm}[!t]
%    \caption{$\mathtt{ModelUpdate}(S_t,\{h_1,\ldots,h_{t-1}\}),\{w_1,\ldots,w_{t-1}\})$}
%    \label{alg:Transfer}
% \begin{algorithmic}[1]
%     \REQUIRE Current Data chunk $S_t$, where $S_t = \{(\mathbf{x}_1^{(t)},y_1^{(t)}),\ldots,(\mathbf{x}_{m}^{(t)},y_{m}^{(t)})\}$; previous hypotheses $\{h_1,\ldots,h_{t-1}\}$; corresponding weights $\{w_1,\ldots,w_{t-1}\}$.
%     \ENSURE model after being transferred $h_t$.
%     \STATE {Initialize model on first data chunk: $h_1 \leftarrow \mathtt{Train}(S_1)$};
%     \STATE{$h_{src} \leftarrow \mathtt{combine}(\{h_1,\ldots,h_{t-1}\}),\{w_1,\ldots,w_{t-1}\})$;}
%     \IF{choose biased regularization (\textsc{Condor}-$\mathtt{br}$) }
%         \STATE{$h_t \leftarrow$ Transfer hypotheses according to Eq.~\eqref{eq:by-biased-reg};}
%         \ELSIF{choose random forests (\textsc{Condor}-$\mathtt{rf}$ ) }
%         \STATE{$h_t \leftarrow$ Transfer hypotheses according to approach described in Section~\ref{sec:\textsc{Condor}-rf};}
%     \ENDIF 
% \end{algorithmic}
% \end{algorithm}

\section{Theoretical Analysis}
\label{sec:theoretical-analysis}
In this section, we provide theoretical analysis both locally and globally. \vspace{-2mm}
\begin{enumerate}[itemsep=0.5mm]
	\item[(1)] Local analysis: consider both generalization and regret aspects on each epoch locally;
	\item[(2)] Global analysis: examine regret on the whole data stream globally.
\end{enumerate}\vspace{-2mm}

Besides, in local analysis, we also provide the multi-class model reuse analysis, and we let it an independent subsection to better present the results.
% \begin{figure}[!h]
% \centering
% \includegraphics[width=0.9\textwidth]{\textsc{Condor}-workflow.pdf}
% \caption{Illustration of main idea: on one hand, we utilize the training samples in the current data chunk; on the other hand, we leverage the previous knowledge via model reuse.}
% \label{figure:main-idea}
% \end{figure}

\subsection{Local Analysis}
The local analysis means that we scrutinize the performance on a particular epoch. On one hand, we are concerned about the generalization ability of the model obtained by $\mathtt{ModelUpdate}$ module. Second, we study the quality of learned weights by $\mathtt{WeightUpdate}$ module and the cumulative regret of prediction. 

Let us consider the epoch $S_k$, the $\mathtt{ModelUpdate}$ module reuses previous models $h_1,\ldots,h_{k-1}$ to help built new model $h_k$, as shown in Figure~\ref{figure:main-idea}. To simplify the presentation, we introduce some notations. Suppose the length of data stream $S$ is $T$, and is partitioned into $k$ epochs, $S_2,\ldots,S_k$.\footnote{Here, we start from epoch $2$, since the first epoch cannot utilize $\mathtt{WeightUpdate}$.} For epoch $S_k$, we assume the distribution is identical, i.e., $S_k$ is a sample of $m_k$ points drawn i.i.d. according to distribution $\mathcal{D}_k$, where $m_k$ denotes its length. 

First, we conduct generalization analysis on $\mathtt{ModelUpdate}$ module. Define the risk and empirical risk of hypothesis (model) $h$ on epoch $S_k$ by
\begin{equation*}
	R(h) = \mathbb{E}_{(\mathbf{x},y)\sim \mathcal{D}_k} \left[\ell(h(\mathbf{x}),y)\right], \ \  \hat{R}(h) = \frac{1}{m_k}\sum_{i\in S_k} \ell(h(\mathbf{x}_i),y_i).
\end{equation*}
Here, with a slight abuse of notations, we also adopt $S_k$ to denote the index included in the epoch, and $\hat{R}(h)$ instead of $\hat{R}_{S_k}(h)$ for simplicity. The new model $h_k$ is built and updated on epoch $S_k$ via $\mathtt{ModelUpdate}$ module, then we have the following generalization error bound.

\begin{myThm}
\label{thm:generalization-main-theorem}
Assume that the non-negative loss function $\ell:\mathcal{Y}\times \mathcal{Y} \rightarrow \mathbb{R}_+$ is bounded by $M\geq 0$, and is $L$-Lipschitz continuous. Also, assume the regularizer $\Omega: \mathcal{H} \rightarrow \mathbb{R}_+$ is a non-negative and $\lambda$-strongly convex function w.r.t. a norm $\lVert \cdot \rVert$. Given the model pool $\{h_1,h_2,\ldots,h_{k-1}\}$ with $\mathbf{h}_{p}(\mathbf{x}):= [h_1(\mathbf{x}),h_2(\mathbf{x}),\ldots,h_{k-1}(\mathbf{x})]^\mathrm{T}$, and denote $h_p$ be a linear combination of previous models, i.e., $h_p(\mathbf{x}) = \langle \boldsymbol{\beta},\mathbf{h}_p(\mathbf{x})\rangle$ and supposing $\Omega(\boldsymbol{\beta}) \leq \rho$ and $\rho = O(1/m)$. Let $h_k$ be the model returned by $\mathtt{ModelUpdate}$. Then, for any $\delta > 0$, with probability at least $1-\delta$, the following holds,\footnote{We use $m$ instead of $m_k$ for simplicity.}
% \begin{equation}
% 	\label{eq:main-results}
% 		R(h_k) - \hat{R}(h_k) \leq 4\sqrt{\frac{B^2L R_p + C^2\lambda\rho}{\lambda^2 m}} + 6\sqrt{\frac{\left(8\sqrt{\frac{2B^2L R_p}{\lambda^2 m}} +R_p+\sqrt{\frac{2C^2\rho}{\lambda m}}\right)M\log(1/\delta)}{2m}}+ \frac{3M\log(1/\delta)}{4m}.	
% \end{equation}
\begin{equation*}
	\label{eq:main-results}
		R(h_k) - \hat{R}(h_k) \leq 4L\sqrt{\frac{\epsilon_1}{m}} + 3\sqrt{\frac{\epsilon_2\log(1/\delta)}{m}}+ \frac{3M\log(1/\delta)}{4m},	
\end{equation*}
where $\epsilon_1 = \frac{B^2 R_p}{\lambda \mu}+ \frac{C^2\rho}{\lambda}$ and $\epsilon_2 = \frac{M}{4} R_p + 4LM\sqrt{\frac{B^2 R_p + C^2 \mu \rho}{\lambda \mu m}}$. Besides, $B = \sup_{\mathbf{x}\in \mathcal{X}} \lVert \mathbf{x}\rVert_\star$ and $C = \sup_{\mathbf{x}\in \mathcal{X}} \lVert \mathbf{h}_p(\mathbf{x})\rVert_\star$.

To better present the results, we only keep the leading term w.r.t. $m$ and $R_p$, and we have 
\begin{equation}
	\label{eq:main-results-order}
		R(h_k) - \hat{R}(h_k) = O\left( \frac{1}{\sqrt{m}}\left( \sqrt{R_p} + \sqrt{\frac{R_p}{\lambda \mu}} +\sqrt[4]{\frac{R_p}{\lambda \mu m}}\right) + \frac{1}{m}\left(\sqrt{\frac{1}{\lambda}} + \sqrt[4]{\frac{1}{\lambda}}\right)\right),	
\end{equation}
where $R_p = R(h_p) = \mathbb{E}_{S_k}[\ell(h_p)]$, representing the risk of reusing model on current distribution.
\end{myThm}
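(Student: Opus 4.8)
The plan is to decompose the generalization gap into a ``mean'' part, controlled by a \emph{data-dependent} Rademacher complexity that shrinks with the risk $R_p$ of the reused predictor, and a high-probability ``deviation'' part handled by a Bernstein-type concentration that exploits the variance--risk relation for bounded losses. The crucial observation is that although the nominal class of all linear predictors is large, the model $h_k$ actually returned by $\mathtt{ModelUpdate}$ lives in a much smaller effective class whose radius around the bias $\mathbf{w}_p$ is governed by $R_p$; feeding this shrunken radius into the Rademacher bound is exactly what produces the $\sqrt{R_p}$ scaling inside $\epsilon_1$ and $\epsilon_2$.

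First I would establish norm control via the optimality of $\hat{\mathbf{w}}_k$ in~\eqref{eq:by-biased-reg}. Comparing the objective at $\hat{\mathbf{w}}_k$ with its value at the feasible point $\mathbf{w} = \mathbf{w}_p$ (where the regularizer vanishes, in the natural case $\Omega(\mathbf{w}_p - \mathbf{w}_p)=\Omega(0)=0$) and discarding the non-negative empirical risk of $h_k$ yields $\mu\,\Omega(\hat{\mathbf{w}}_k - \mathbf{w}_p) \le \hat{R}(h_p)$. Since $\Omega$ is $\lambda$-strongly convex with minimum at the origin, this gives $\lVert \hat{\mathbf{w}}_k - \mathbf{w}_p\rVert^2 \le 2\hat{R}(h_p)/(\lambda\mu)$, and a one-sided concentration of $\hat{R}(h_p)$ around $R_p$ then turns the data-dependent radius into $R_p/(\lambda\mu)$ up to constants. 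This is the step that converts ``the reused model is good'' into ``the learned deviation is small,'' and it is the source of the $B^2 R_p/(\lambda\mu)$ term.

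Next I would bound the empirical Rademacher complexity of the loss-composed effective class. Writing $h_k(\mathbf{x}) = \langle \hat{\mathbf{w}}_k - \mathbf{w}_p, \mathbf{x}\rangle + \langle \boldsymbol{\beta}, \mathbf{h}_p(\mathbf{x})\rangle$ splits the predictor into a linear part, whose weight norm is bounded as above so that its empirical Rademacher complexity is of order $B\sqrt{R_p/(\lambda\mu)}/\sqrt{m}$ (using $\sup_{\mathbf{x}}\lVert\mathbf{x}\rVert_\star = B$), and a combination part with $\Omega(\boldsymbol{\beta})\le\rho$, hence $\lVert\boldsymbol{\beta}\rVert^2 \le 2\rho/\lambda$ and empirical Rademacher complexity of order $C\sqrt{\rho/\lambda}/\sqrt{m}$ (using $\sup_{\mathbf{x}}\lVert\mathbf{h}_p(\mathbf{x})\rVert_\star = C$). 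Talagrand's contraction lemma introduces the Lipschitz factor $L$, and combining the two pieces recovers the leading term $4L\sqrt{\epsilon_1/m}$ with $\epsilon_1 = B^2 R_p/(\lambda\mu) + C^2\rho/\lambda$. For the high-probability part I would apply a Bernstein-type uniform deviation bound rather than the cruder McDiarmid/bounded-differences route, using that for a loss in $[0,M]$ the variance satisfies $\Var(\ell) \le M\,\mathbb{E}[\ell] \le M R_p$ (up to the contribution of the learned deviation, again controlled by the norm bound). This self-bounding variance proxy is what appears as $\epsilon_2$, with its second summand $4LM\sqrt{(B^2 R_p + C^2\mu\rho)/(\lambda\mu m)}$ arising from replacing the empirical variance by the true variance plus the Rademacher correction, and it delivers the remaining $3\sqrt{\epsilon_2\log(1/\delta)/m}$ and $3M\log(1/\delta)/(4m)$ terms.

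The main obstacle I anticipate is threading the \emph{randomness of the effective radius} through both the Rademacher term and the variance proxy simultaneously: the radius $\lVert\hat{\mathbf{w}}_k - \mathbf{w}_p\rVert$ depends on $\hat{R}(h_p)$, which must be replaced by the deterministic $R_p$ via its own concentration, and this substitution has to be performed consistently (on the correct one-sided event) inside a Bernstein bound whose variance term also depends on $R_p$. Getting the interplay of these two concentration steps right---so that the $\log(1/\delta)$ factors and the constants come out as stated, and the assumption $\rho = O(1/m)$ correctly demotes the $C^2\rho$ contributions to lower order in~\eqref{eq:main-results-order}---is where the real care is required; the norm-control and Rademacher computations themselves are comparatively routine.
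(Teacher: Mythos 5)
Your proposal follows essentially the same route as the paper's own proof: optimality of $\hat{\mathbf{w}}_k$ against the comparator $\mathbf{w}_p$ gives the radius control $\mu\,\Omega(\hat{\mathbf{w}}_k-\mathbf{w}_p)\le \hat{R}(h_p)$ together with $\hat{R}(h_k)\le\hat{R}(h_p)$, Talagrand contraction plus a Rademacher bound for the norm-constrained linear and $\boldsymbol{\beta}$-combination parts yields the $\epsilon_1$ term, and a functional Bernstein (Bousquet-type) concentration with the variance bound $\Var(\ell)\le M\,\mathbb{E}[\ell]\le M R_p$ produces the $\epsilon_2$ and $\log(1/\delta)/m$ terms. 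The only cosmetic differences are that the paper absorbs the data-dependent radius $\hat{R}_S(h_p)$ via Jensen's inequality inside the expected Rademacher complexity $\mathfrak{R}_m(\mathcal{L})=\mathbb{E}_S[\hat{\mathfrak{R}}_S(\mathcal{L})]$ rather than by a separate one-sided concentration step, and it invokes the Kakade--Shalev-Shwartz--Tewari primal--dual lemma directly on the strongly convex $\Omega$ instead of first converting strong convexity into an explicit norm-ball constraint.
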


To better present our main result, we defer the proof of Theorem~\ref{thm:generalization-main-theorem} in Appendix~\ref{sec:proof-thm1}.

\begin{myRemark}
 Eq.~\eqref{eq:main-results-order} shows that $\mathtt{ModelUpdate}$ procedure enjoys an $O(1/\sqrt{m})$ generalization bound under certain conditions. In particular, when the previous models are sufficiently good, that is, have a small risk on the current distribution $\mathcal{D}_k$ (i.e., when $R_p\rightarrow 0$), we can obtain an $O(1/m)$ bound, a fast rate convergence guarantee. This implies the effectiveness of leveraging and reusing previous models to current data, especially if we can reuse previous models properly (as we will illustrate in the following paragraph).
\end{myRemark}

\begin{myRemark}
The main techniques in the proof are inspired by~\cite{journals/mlj/KuzborskijO17}, but we differ in two aspects. First, we only assume the Lipschitz condition, and thus their results are not suitable under our conditions. Second, we extended the analysis of model reuse to multi-class scenarios, and include the results in Appendix~\ref{sec:multi-class-model-reuse} for a better presentation.
\end{myRemark}

Before the next presentation, we need to introduce more notations. Let $L_T$ be as the global cumulative loss on the whole data stream $S$. on epoch $S_k$, let $L_{S_k}$ be as the local cumulative loss suffered by our approach, and $L_{S_k}^{(j)}$ as the local cumulative loss suffered by the previous model $h_j$,
\begin{equation}
	\label{eq:cumulative-loss}
	L_T = \sum_{i=1}^T \ell\left(\hat{y}_i,y_i\right), \quad L_{S_k} = \sum_{i\in S_k} \ell\left(\hat{y}_i,y_i\right), \quad L_{S_k}^{(j)} = \sum_{i\in S_k} \ell\left(h_j(\mathbf{x}_i),y_i\right).
\end{equation}

Next, we show $\mathtt{WeightUpdate}$ returns a good weight distribution, implying our approach can reuse previous models properly. In fact, we have the following observation regarding to weight distribution.
\begin{myObservation}[Weight Concentration]
\label{obser:weight-concentration}
\emph{During the $\mathtt{WeightUpdate}$ procedure in epoch $S_k$, the weights will concentrate on those previous models who suffer a small cumulative loss on $S_k$.} 
\end{myObservation}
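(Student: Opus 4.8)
The plan is to formalize Observation~\ref{obser:weight-concentration} as a quantitative statement about the exponentially-weighted update $\beta_{t+1,k} \leftarrow \beta_{t,k}\exp\{-\eta\,\ell(\hat{y}_{t,k},y_t)\}$ and show that, after processing all of epoch $S_k$, the normalized weight of model $h_j$ is essentially $\exp\{-\eta L_{S_k}^{(j)}\}$ up to the common normalizing constant. First I would unroll the recursion: since the update is multiplicative and the loss is additive over rounds, starting from the uniform initialization $\beta_{1,k}=1/\lvert\mathcal{H}\rvert$ we get, for the model indexed by $j$ in the pool of size $N := \lvert\mathcal{H}\rvert$,
\begin{equation*}
	\beta_{t,j} = \frac{1}{N}\exp\Bigl\{-\eta\sum_{i\in S_k,\, i<t}\ell\bigl(h_j(\mathbf{x}_i),y_i\bigr)\Bigr\},
\end{equation*}
so that at the end of the epoch the accumulated exponent is exactly $-\eta L_{S_k}^{(j)}$ as defined in~\eqref{eq:cumulative-loss}.

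Next I would pass to the \emph{normalized} weights $p_{t,j} = \beta_{t,j}/\sum_{k} \beta_{t,k}$, which are what actually determine the prediction in line~8 of Algorithm~\ref{alg:Condor}, since the unnormalized magnitudes are irrelevant. The common factor $1/N$ cancels, leaving $p_{t,j} = \exp\{-\eta L^{(j)}\}/\sum_{k}\exp\{-\eta L^{(k)}\}$ where $L^{(k)}$ denotes the partial cumulative loss up to time $t$. The key comparison step is then to bound the ratio of the weight on a good model $h_{j^\star}$ (one attaining the minimum cumulative loss $L^\star = \min_k L_{S_k}^{(k)}$) against that of any worse model $h_j$:
\begin{equation*}
	\frac{p_{j}}{p_{j^\star}} = \exp\bigl\{-\eta\,(L^{(j)} - L^\star)\bigr\} \leq \exp\{-\eta\,\Delta_j\},
\end{equation*}
where $\Delta_j = L_{S_k}^{(j)} - L^\star \geq 0$ is the loss gap. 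This exhibits geometric decay of a worse model's relative weight in the gap $\Delta_j$ and in the step size $\eta$, which is precisely the ``concentration on better-fit models'' the observation asserts. To make the statement self-contained I would also lower-bound the good model's share, using $\sum_k \exp\{-\eta L^{(k)}\} \leq N\exp\{-\eta L^\star\}$, to conclude $p_{j^\star} \geq 1/N$ and more generally that the total mass on the $\epsilon$-optimal set is driven toward $1$ as $\eta$ grows or the gaps widen.

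The main obstacle, and the place where I would spend the most care, is stating the concentration in a form that is both honest and useful rather than merely asymptotic. A clean multiplicative bound holds \emph{exactly} at every round by the unrolling above, so there is no approximation error to control; the real subtlety is that the concentration is governed by the \emph{product} $\eta\Delta_j$, so the statement must make explicit the interplay between the step size $\eta$ and the realized loss gaps. I would therefore phrase the conclusion as: the relative weight of any model decays exponentially in its excess cumulative loss over the best model, at rate $\eta$, so that in the limit of large gaps (or large $\eta$) the normalized weight vector converges to the indicator of the best-performing model. A secondary point to handle carefully is the tie-breaking and the reinitialization that occurs whenever $\mathtt{ModelUpdate}$ fires: the analysis is \emph{local} to epoch $S_k$ and the uniform reset of weights at the start of the epoch is exactly what justifies the clean $1/N$ prefactor, so I would flag that the concentration is an intra-epoch phenomenon consistent with the epoch-wise structure used throughout the local analysis.
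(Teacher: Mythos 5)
Your proposal is correct and rests on exactly the same core step as the paper's one-line proof: unrolling the multiplicative update to show that the weight of model $h_j$ at the end of epoch $S_k$ equals its initial weight times $\exp\{-\eta L_{S_k}^{(j)}\}$. Your additional material (normalized weights, the ratio bound $\exp\{-\eta\Delta_j\}$, the $1/N$ lower bound on the best model's share) is a sound quantitative elaboration of what the paper states informally, not a different argument.
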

\begin{proof}
By a simple analysis on the $\mathtt{WeightUpdate}$ procedure, we know that the weight associated with the $j$-th previous model is equal to $\beta_{1,j}\exp\{-\eta L_{S_k}^{(j)}\}$, where $j =1,\ldots,k-1$.
\end{proof}
Though the observation seems quite straightforward, it plays an important role in making our approach successful. The statement guarantees that the algorithm adaptively assigns more weights on better-fit previous models, which essentially depicts the `reusability' of each model. We also conduct additional experiments to support this point in Appendix~\ref{sec:weight-concentration}.

% In the following, we aim at analyzing the cumulative regret of the $\mathtt{Prediction}$ module in the proposed \textsc{Condor}. Specifically, our analysis shows the following two aspects, 
% \begin{enumerate}
% 	\item[(1)] Dynamic regret bound guarantee.
% 	\item[(2)] The cumulative loss is comparable to that of the best-fit previous model, and can be substantially improved when the best-fit model suffer a small loss. This indicates that our approach can benefit from \emph{recurring concept drift} scenarios. 
% \end{enumerate}

Third, we show that our approach can benefit from \emph{recurring concept drift} scenarios. Here, we adopt the concept of \emph{cumulative regret} (or \emph{regret}) from online learning~\citep{conf/icml/Zinkevich03} as the performance measurement.

\begin{myThm}[Improved Local Regret~\citep{book/Cambridge/cesa2006prediction}]
\label{thm:regret-local-improved}
Assume that the loss function $\ell:\mathcal{Y}\times \mathcal{Y} \rightarrow \mathbb{R}_+$ is convex in its first argument and takes the values in $[0,1]$. Besides, the step size is set as $\eta = \ln(1+\sqrt{2\ln (k-1)/L_{j_k^*}})$, where $L_{j^*_k} = \min_{j=1,\ldots,k-1} L_{S_k}^{(j)}$ is the cumulative loss of the best-fit previous model and is supposed to be known in advance. Then, we have,
 \[
	\mathrm{Regret}_{S_k} = L_{S_k} - L_{j_k^*} \leq \sqrt{2L_{j_k^*} \ln (k-1)} + \ln (k-1).
\] 
\end{myThm}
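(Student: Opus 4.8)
The statement is the classical loss-dependent (second-order) regret bound for the exponentially weighted average forecaster, so the plan is to run the standard potential-function argument on epoch $S_k$ and then insert the prescribed tuning of $\eta$. Write $N = k-1$ for the number of experts (previous models) active during the epoch, let $\ell_{t,j} = \ell(h_j(\mathbf{x}_t),y_t)\in[0,1]$, and let $W_t = \sum_{j=1}^{N}\beta_{t,j}$ be the total weight with normalized weights $p_{t,j}=\beta_{t,j}/W_t$. Since the epoch starts from the uniform initialization $\beta_{1,j}=1/N$, we have $W_1=1$, and by the multiplicative update $\beta_{m+1,j}=(1/N)\exp\{-\eta L_{S_k}^{(j)}\}$, which is exactly the content of Observation~\ref{obser:weight-concentration}.

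First I would lower-bound the final potential by discarding all but the best expert, $\ln(W_{m+1}/W_1)\ge \ln\beta_{m+1,j_k^*} = -\ln N - \eta L_{j_k^*}$. Next I would upper-bound the one-step change $\ln(W_{t+1}/W_t)=\ln\sum_j p_{t,j}e^{-\eta\ell_{t,j}}$: because $\ell_{t,j}\in[0,1]$, the chord inequality $e^{-\eta x}\le 1-(1-e^{-\eta})x$ for $x\in[0,1]$ together with $\ln(1-u)\le -u$ gives $\ln(W_{t+1}/W_t)\le -(1-e^{-\eta})\sum_j p_{t,j}\ell_{t,j}$. Convexity of $\ell$ in its first argument and Jensen's inequality then let me replace the weighted expert loss by the forecaster's loss, namely $\sum_j p_{t,j}\ell_{t,j}\ge \ell(\hat{y}_t,y_t)$, since $\hat{y}_t=\sum_j p_{t,j}\hat{y}_{t,j}$. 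Telescoping over $t\in S_k$ and combining with the lower bound yields $(1-e^{-\eta})L_{S_k}\le \ln N + \eta L_{j_k^*}$.

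Finally I would substitute the tuning. Rearranging gives
\[
\mathrm{Regret}_{S_k}=L_{S_k}-L_{j_k^*}\le \frac{(\eta-1+e^{-\eta})L_{j_k^*}+\ln N}{1-e^{-\eta}}.
\]
Writing $\eta=\ln(1+z)$ with $z=\sqrt{2\ln N/L_{j_k^*}}$ as prescribed makes $1-e^{-\eta}=z/(1+z)$ and turns the numerator's coefficient of $L_{j_k^*}$ into $(1+z)\ln(1+z)-z$. The crux is then the sharp elementary inequality $(1+z)\ln(1+z)-z\le z^2/2$ for $z\ge 0$, which after simplification produces $\mathrm{Regret}_{S_k}\le \frac{z}{2}L_{j_k^*}+\frac{\ln N}{z}+\ln N$; the chosen $z$ balances the first two terms, each to $\tfrac12\sqrt{2L_{j_k^*}\ln N}$, giving exactly $\sqrt{2L_{j_k^*}\ln N}+\ln N$.

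The one genuinely delicate point is this last tuning step: the naive estimate $\ln(1+z)\le z$ is too lossy and would yield the worse constant $\tfrac{3}{\sqrt2}$ in front of $\sqrt{L_{j_k^*}\ln N}$, so obtaining the advertised $\sqrt{2}$ constant requires the tighter bound $(1+z)\ln(1+z)-z\le z^2/2$ (provable via $f(0)=f'(0)=0$ and $f''(z)=1/(1+z)\le 1$). Everything else is routine, modulo observing that the model-pool cap $K$ only shrinks the expert set and hence leaves the bound stated in terms of $N=k-1$ intact.
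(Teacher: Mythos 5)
Your proof is correct and is exactly the argument the paper defers to: the paper's ``proof'' of Theorem~\ref{thm:regret-local-improved} is only a pointer to Chapter 2.4 of \cite{book/Cambridge/cesa2006prediction}, and your potential-function derivation, chord inequality $e^{-\eta x}\le 1-(1-e^{-\eta})x$, Jensen step, and the sharp tuning via $(1+z)\ln(1+z)-z\le z^2/2$ reproduce that cited argument in full, including the correct constant $\sqrt{2}$. No gaps; you have simply supplied the details the paper omits by citation.
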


\begin{proof}
Refer to the proof presented in page 21 in Chapter 2.4 of~\cite{book/Cambridge/cesa2006prediction}.
\end{proof}

Above statement shows that the order of regret bound can be substantially improved from a typical $O(m_k)$ to $O\left(\ln k\right)$,  independent from the number of data items in the epoch, providing $L_{k^*} \ll \sqrt{m_k}$, that is, the cumulative loss of the best-fit previous model is small.

\begin{myRemark}
Theorem~\ref{thm:regret-local-improved} implies that if the concept of epoch $S_k$ or a similar concept has emerged previously, our approach enjoys a substantially improved local regret providing a proper step size is chosen. This accords to our intuition on why model reuse helps for concept drift data stream. In many situations, although the distribution underlying might change over time, the concepts can be recurring, i.e., disappear and re-appear~\citep{journals/kais/KatakisTV10,journals/kais/GamaK14}. Thus, the statement shows that our approach can benefit from such recurring concepts, and we empirically support this point in Appendix~\ref{sec:recurring}.
\end{myRemark}

\subsection{Global Analysis}
The global analysis means that we study the overall performance on the whole data stream. We provide the global dynamic regret as follows.
\begin{myThm}[Global Dynamic Regret]
\label{thm:dynamic-regret}
Assume that the loss function $\ell:\mathcal{Y}\times \mathcal{Y} \rightarrow \mathbb{R}_+$ is convex in its first argument and takes the values in $[0,1]$. Assume that the step size in $S_k$ epoch is set as $\eta_k = \sqrt{(8\ln(k-1))/m_k}$,\footnote{The choice of $\eta$ requires the knowledge of epoch size, which can be eliminated by \emph{doubling trick}, at the price of a small constant factor~\citep{journals/jacm/Cesa-BianchiFHHSW97}.} then we have\vspace{-2mm}
\begin{equation}
	\label{eq:dynamic-regret}
	\mathrm{Regret}_T = L_T - \sum_{k=2}^K L_{j_k^*} \leq  \sqrt{\left(\sum_{k=1}^{K-1}\ln k\right)T/2},\vspace{-2mm}
\end{equation}
where $j^*_k = \argmin_{j=1,\ldots,k-1} L_{S_k}^{(j)}$.
\end{myThm}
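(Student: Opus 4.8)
The plan is to decompose the global dynamic regret into a sum of per-epoch local regrets, bound each local regret by the standard tuned exponentially weighted average forecaster, and then recombine the pieces with the Cauchy--Schwarz inequality. The structural observation driving everything is that the $\mathtt{WeightUpdate}$ module reinitializes the weights to $1/\lvert\mathcal{H}\rvert$ at every epoch boundary, so inside each epoch the prediction problem is a self-contained instance of prediction with expert advice over the available previous models.

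First I would set up the decomposition. Since the stream is partitioned into epochs $S_2,\ldots,S_K$ with $\sum_{k=2}^K m_k = T$, the cumulative loss telescopes as $L_T = \sum_{k=2}^K L_{S_k}$, and therefore
\[
  \mathrm{Regret}_T = L_T - \sum_{k=2}^K L_{j_k^*} = \sum_{k=2}^K \left( L_{S_k} - L_{j_k^*}\right) = \sum_{k=2}^K \mathrm{Regret}_{S_k},
\]
where each $\mathrm{Regret}_{S_k}$ is measured against the best-fit previous model $j_k^*$ on that epoch. Because the weights are reset, these local regrets can be analyzed in isolation and simply added.

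Second, for a fixed epoch $S_k$ I would invoke the classical regret bound for the exponentially weighted average forecaster over the $k-1$ previous models (e.g.\ \citealp{book/Cambridge/cesa2006prediction}, Theorem~2.2). Since $\ell$ is convex in its first argument and takes values in $[0,1]$, Hoeffding's lemma gives $\mathrm{Regret}_{S_k} \leq \ln(k-1)/\eta_k + \eta_k m_k/8$. Plugging in the prescribed $\eta_k = \sqrt{8\ln(k-1)/m_k}$, which is precisely the minimizer of this right-hand side over $\eta_k$, yields the tuned local bound $\mathrm{Regret}_{S_k} \leq \sqrt{m_k \ln(k-1)/2}$.

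Finally, I would sum the local bounds and apply Cauchy--Schwarz. Writing each summand as $\sqrt{m_k}\cdot\sqrt{\ln(k-1)/2}$, using $\sum_{k=2}^K m_k = T$, and reindexing $\sum_{k=2}^K \ln(k-1) = \sum_{k=1}^{K-1}\ln k$, I obtain
\[
  \mathrm{Regret}_T \leq \sum_{k=2}^K \sqrt{\frac{m_k\ln(k-1)}{2}} \leq \sqrt{\sum_{k=2}^K m_k}\cdot\sqrt{\frac{1}{2}\sum_{k=2}^K \ln(k-1)} = \sqrt{\left(\sum_{k=1}^{K-1}\ln k\right)T/2},
\]
which is exactly the claimed bound. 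The calculations here are routine; the only points requiring care are the reindexing and the correct count of $k-1$ experts in epoch $S_k$ (implicitly assuming the pool cap $K$ prunes none of them). I expect the main obstacle to be conceptual rather than technical: making rigorous the claim that the per-epoch reset of the weights genuinely decouples the global problem into independent expert-advice subproblems, so that the tuned local bounds may legitimately be summed and then combined by Cauchy--Schwarz.
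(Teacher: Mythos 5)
Your proposal is correct and follows essentially the same route as the paper's proof: apply the tuned exponentially weighted average forecaster bound (Theorem 2.2 of \cite{book/Cambridge/cesa2006prediction}) locally on each epoch, sum over epochs, and combine via the Cauchy--Schwarz inequality with the reindexing $\sum_{k=2}^K \ln(k-1) = \sum_{k=1}^{K-1}\ln k$. The only difference is cosmetic: you unpack the Hoeffding-lemma derivation of the local bound, whereas the paper cites it as a black-box lemma.
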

The proof of global dynamic regret is built on the local static regret analysis in each epoch. And we can see that for data stream with a fix length $T$, the more concept drifts occur (i.e., larger $K$), the larger the regret will be. This accords with our intuition, on one hand, the sum of best-fit local cumulative loss ($\sum_{k=2}^K L_{j_k^*}$) is going to be compressed with more previous models. On the other hand, the learning problem becomes definitely harder as concept drift occurs more frequently.

Our $\mathtt{WeightUpdate}$ strategy is essentially exponentially weighted average forecaster~\citep{book/Cambridge/cesa2006prediction}, and thus we have the following local regret guarantee in each epoch. 
\begin{myLemma}[Theorem 2.2 in~\cite{book/Cambridge/cesa2006prediction}]
\label{lemma:local-regret-convex}
Assume that the loss function $\ell:\mathcal{Y}\times \mathcal{Y} \rightarrow \mathbb{R}_+$ is convex in its first argument and takes the values in $[0,1]$. Assume that the step size is set as $\eta = \sqrt{8\ln (k-1)/m_k}$, then we have 
\[
	L_{S_k} \leq \min_{j=1,\ldots,k-1} L_{S_k}^{(j)} + \sqrt{(m_k/2) \ln (k-1)}.
\]
\end{myLemma}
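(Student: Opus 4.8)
The plan is to prove this via the standard potential-function (or ``normalization constant'') argument for the exponentially weighted average forecaster, treating the $N = k-1$ previous models as experts over the $m_k$ rounds of epoch $S_k$. Write $W_t = \sum_{j=1}^{k-1}\beta_{t,j}$ for the total weight at round $t$, so that the normalized weights $p_{t,j} = \beta_{t,j}/W_t$ form the distribution over experts used to produce $\hat{y}_t$. The idea is to sandwich the log-ratio $\ln(W_{m_k+1}/W_1)$ between a lower bound in terms of the best expert and an upper bound in terms of the forecaster's own loss; comparing the two then yields the regret.

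First I would establish the lower bound. Since each model's weight after the whole epoch is $\beta_{m_k+1,j} = \beta_{1,j}\exp\{-\eta L_{S_k}^{(j)}\}$ (as already recorded in Observation~\ref{obser:weight-concentration}), and $W_{m_k+1} \geq \max_j \beta_{m_k+1,j}$, the uniform initialization $\beta_{1,j}=1/(k-1)$ gives $\ln(W_{m_k+1}/W_1) \geq -\eta\min_{j}L_{S_k}^{(j)} - \ln(k-1)$.

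Next I would bound the per-round increment $\ln(W_{t+1}/W_t)$ from above. Writing $W_{t+1}/W_t = \sum_j p_{t,j}\exp\{-\eta\ell(\hat{y}_{t,j},y_t)\}$, the crucial tool is Hoeffding's lemma: for a random variable taking values in $[0,1]$, the cumulant generating function obeys $\ln\mathbb{E}[e^{-\eta X}] \leq -\eta\mathbb{E}[X] + \eta^2/8$. Applying this with the distribution $p_{t,\cdot}$ and the bounded losses $\ell(\hat{y}_{t,j},y_t)\in[0,1]$ yields $\ln(W_{t+1}/W_t) \leq -\eta\sum_j p_{t,j}\ell(\hat{y}_{t,j},y_t) + \eta^2/8$. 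Then convexity of $\ell$ in its first argument, via Jensen's inequality, gives $\ell(\hat{y}_t,y_t) = \ell(\sum_j p_{t,j}\hat{y}_{t,j},y_t) \leq \sum_j p_{t,j}\ell(\hat{y}_{t,j},y_t)$, so each increment is at most $-\eta\ell(\hat{y}_t,y_t) + \eta^2/8$. Summing over $t=1,\ldots,m_k$ telescopes to $\ln(W_{m_k+1}/W_1) \leq -\eta L_{S_k} + m_k\eta^2/8$.

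Finally I would combine the two bounds and solve for $L_{S_k}$, obtaining $L_{S_k} \leq \min_j L_{S_k}^{(j)} + \ln(k-1)/\eta + m_k\eta/8$; substituting the stated $\eta = \sqrt{8\ln(k-1)/m_k}$ balances the two error terms and delivers the claimed bound $\sqrt{(m_k/2)\ln(k-1)}$. The main obstacle is the per-round step: one must invoke Hoeffding's lemma to control the exponential-moment term and pair it with Jensen's inequality in the correct direction, since it is precisely this second-order $\eta^2/8$ contribution that produces the final $\sqrt{\cdot}$ rate. Everything else reduces to bookkeeping of the telescoping sum and a one-variable optimization of $\eta$.
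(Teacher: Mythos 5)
Your proof is correct and follows essentially the same route as the paper: the paper's own proof simply reduces the claim to Theorem 2.2 of \cite{book/Cambridge/cesa2006prediction} with $N=k-1$ experts and $n=m_k$ rounds, whose proof is precisely the potential-function argument you carry out (lower-bounding the log total weight by the best expert, upper-bounding each increment via Hoeffding's lemma and Jensen's inequality, then tuning $\eta$). The only difference is that you supply in full the standard argument the paper delegates to a citation.
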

\begin{proof}
The proof is based on a simple reduction from our scenario to standard \emph{exponentially weighted average forecaster}. For epoch $S_k$, let previous models pool $\{h_1,h_2,\ldots,h_{k-1}\}$ be as the expert pool. Then, plugging the expert number $N = k-1$ and number of instances $n=m_k$ into Theorem 2.2 in~\cite{book/Cambridge/cesa2006prediction}, we obtain the statement. 

Besides, the proof of exponentially weighted average forecaster is standard, which utilizes \textit{potential function method}~\citep{book/Cambridge/cesa2006prediction,book/MIT/mohri2012foundations}. For a detailed proof, one can refer to the proof presented in page 157-159 in Chapter 7 of book~\citep{book/MIT/mohri2012foundations}.
\end{proof}

Now, we proceed to prove Theorem~\ref{thm:dynamic-regret}.
\begin{proof}
Our proof relies on the application of local static regret analysis. Since $S_2,\ldots,S_K$ is a partition of the whole period $T$, we apply Lemma~\ref{lemma:local-regret-convex} on each epoch locally and obtain
\begin{equation}
	\label{eq:local-regret-convex}
	L_{S_k} \leq \min_{j=1,\ldots,k-1} L_{S_k}^{(j)} + \sqrt{(m_k/2) \ln (k-1)}.	
\end{equation}

Sum over the index of $j$ from $1$ to $k-1$, we have 
\begin{eqnarray}\nonumber
	L_T & = & \sum_{k=2}^K L_{S_k} \\
	\label{eq:global-1}
	&\leq & \sum_{k=2}^K L_{j_k^{*}} + \sum_{k=2}^K \sqrt{(m_k/2) \ln (k-1)}\\ 
	\label{eq:global-2}
	& \leq & \sum_{k=2}^K L_{j_k^{*}} + \sqrt{\sum_{k=2}^K (m_k/2) \sum_{k=2}^K \ln (k-1)}\\
	& =& \sum_{k=2}^K L_{j_k^*}+\sqrt{\left(\sum_{k=1}^{K-1}\ln k\right)T/2}\nonumber
\end{eqnarray}
where \eqref{eq:global-1} holds by substituting \eqref{eq:local-regret-convex} into each epoch $S_k$, and \eqref{eq:global-2} holds by applying Cauchy-Schwartz inequality.
\end{proof}

\begin{myRemark}
Essentially, the regret bound in \eqref{eq:dynamic-regret} is different from the traditional (static) regret bound. It measures the difference between the global cumulative loss with the sum of local cumulative loss suffered by previous best-fit models. Namely, our competitor changes in each epoch, which depicts the distribution change in the sequence, and thus is more suitable to be the performance measurement in non-stationary environments.
\end{myRemark}

\subsection{Multi-Class Model Reuse Learning}
In multi-class learning scenarios, the notations are slightly different from those in binary case. We first introduce the new notations for a clear presentation. 

Let $\mathcal{X}$ denote the input feature space and $\mathcal{Y} = \{1,2,\ldots,c\}$ denote the target label space. Our analysis acts on the last data epoch $S_k = \{(\mathbf{x}_1,{y}_1),\ldots,(\mathbf{x}_{m_k},{y}_{m_k})\}$, a sample of $m_k$ points drawn i.i.d. according to distribution $\mathcal{D}_k$, where $\mathbf{x}_i\in \mathbb{R}^d$ and ${y}_i \in \mathcal{Y}$ with only a single class from $\{1,\ldots,c\}$. Given the multi-class hypothesis set $\mathcal{H}$, any hypothesis $h\in \mathcal{H}$ maps from $\mathcal{X}\times \mathcal{Y}\rightarrow \mathbb{R}$, and makes the prediction by $\mathbf{x}\rightarrow \argmax_{{y}\in \mathcal{Y}} h(\mathbf{x},{y})$. This naturally rises the definition of \emph{margin} $\rho_h(\mathbf{x},{y})$ of the hypothesis $h$ at a labeled instance $(\mathbf{x},{y})$,
\[
	\rho_h(\mathbf{x},{y}) = h(\mathbf{x},{y}) - \max_{y' \neq y} h(\mathbf{x},y').
\]

The non-negative loss function $\ell:\mathbb{R} \rightarrow \mathbb{R}$ is bounded by $M>0$. Besides, we assume the loss function is \emph{regular loss} defined in~\cite{conf/nips/LeiDBK15}. 

\begin{myDef}[Regular Loss] 
	\label{def:regular-loss}
	We call a loss function $\ell:\mathbb{R} \rightarrow \mathbb{R}$ is a $L$-regular if it satisfies the following properties (Cf. Definition 2 in~\cite{conf/nips/LeiDBK15}):
	\begin{enumerate}[label=(\roman*)]
		\item $\ell(t)$ bounds the $0$-$1$ loss from above: $\ell(t) \geq \mathbf{1}_{t\leq 0}$; 
		\item $\ell(t)$ is $L$-Lipschitz continuous, i.e., $\lvert \ell(t_1) - \ell(t_2)\rvert \leq L \lvert t_1 - t_2\rvert$;
		\item $\ell(t)$ is decreasing and it has a zero point $c_{\ell}$, i.e., there exists a $c_{\ell}$ such that $\ell(c_{\ell}) = 0$. \label{def:regular-loss-property-3}
	\end{enumerate}
\end{myDef}

Then the risk and empirical risk of a hypothesis $h$ on epoch $S_k$ are defined by
\begin{equation*}
	R(h) = \mathbb{E}_{(\mathbf{x},y)\sim \mathcal{D}_k} \left[\mathbf{1}_{\rho_h(\mathbf{x},y)} \leq 0\right], \ \  \hat{R}_S(h) = \frac{1}{m_k}\sum_{i\in S_k} \ell(\rho_h(\mathbf{x}_i,y_i)).
\end{equation*}

Our goal is to provide a generalization analysis, namely, to prove that risk $R(h)$ approaches empirical risk $\hat{R}(h)$ as number of instances $m_k$ increases, and establish the convergence rate. Since $\mathbb{E}_{(\mathbf{x},y)\sim \mathcal{D}_k}[\hat{R}(h)] \neq R(h)$, thus, we cannot directly utilize concentration inequalities to help analysis. To make this simpler, we need to introduce the risk w.r.t. loss function $\ell$, 
\[
	R_{\ell}(h) = \mathbb{E}_{(\mathbf{x},y)\sim \mathcal{D}_k} \left[\ell(\rho_h(\mathbf{x},y))\right].
\] 

From property~\ref{def:regular-loss-property-3} in Definition~\ref{def:regular-loss}, we know that the risk $R(h)$ is a lower bound on $R_{\ell}(h)$, that is $R(h)\leq R_{\ell}(h)$. Thus, we only need to establish generalization bound between $R_{\ell}(h)$ and $\hat{R}(h)$. Apparently, $\mathbb{E}_{(\mathbf{x},y)\sim \mathcal{D}_k}[\hat{R}(h)] =  R_{\ell}(h)$, thus we can utilize concentration inequalities again.

First, we identify the optimization formulation of multi-class biased regularization model reuse, 
\begin{equation}
	\label{eq:biased-regularization-multi-class}
	\hat{W} = \argmin_{W} \left\{\frac{1}{m} \sum_{i=1}^m \ell \left( \rho_{h_{W,p}}(\mathbf{x}_i,y_i)\right) + \lambda \Omega(W) \right\},
\end{equation}
where $h_{W,p}(\mathbf{x}) = W^\textrm{T}\mathbf{x} + h_p(\mathbf{x})$.

We specify the regularizer as square of Frobenius norm, namely, $\Omega(W) = \lVert W \rVert_F^2$, and provide the following generalization error bound.
\begin{myThm}
\label{thm:generalization-main-theorem-mc}
Let $H\subseteq \mathbb{R}^{\mathcal{X}\times \mathcal{Y}}$ be a hypothesis set with $\mathcal{Y} = \{1,2,\ldots,c\}$. Assume that the non-negative loss function $\ell:\mathbb{R} \rightarrow \mathbb{R}_+$ is $L$-regular. Given the model pool $\{h_1,h_2,\ldots,h_{k-1}\}$ with $\mathbf{h}_{p}(\mathbf{x}):= [h_1(\mathbf{x}),h_2(\mathbf{x}),\ldots,h_{k-1}(\mathbf{x})]^\mathrm{T}$, and denote $h_p$ be a linear combination of previous models, i.e., $h_p(\mathbf{x}) = \langle \boldsymbol{\beta},\mathbf{h}_p(\mathbf{x})\rangle$ and supposing $\lVert \boldsymbol{\beta} \rVert^2 \leq 2\rho$ and $\rho = O(1/m)$. Let $h_k$ be the model returned by $\mathtt{ModelUpdate}$. Then, for any $\delta > 0$, with probability at least $1-\delta$, the following holds,\footnote{We use $m$ instead of $m_k$ for simplicity.}
% \begin{equation}
% 	\label{eq:main-results}
% 		R(h_k) - \hat{R}(h_k) \leq 4\sqrt{\frac{B^2L R_p + C^2\lambda\rho}{\lambda^2 m}} + 6\sqrt{\frac{\left(8\sqrt{\frac{2B^2L R_p}{\lambda^2 m}} +R_p+\sqrt{\frac{2C^2\rho}{\lambda m}}\right)M\log(1/\delta)}{2m}}+ \frac{3M\log(1/\delta)}{4m}.	
% \end{equation}
\begin{equation*}
	\label{eq:main-results-mc}
	\begin{split}
	R(h_{\hat{W},p}) - \hat{R}_S(h_{\hat{W},p}) &\leq 2L c^2 \sqrt{\frac{\epsilon_1}{ m}} + 3\sqrt{\frac{\epsilon_2\log(1/\delta)}{4m}}+ \frac{3M\log(1/\delta)}{4m}.	
	\end{split}
\end{equation*}

where $\epsilon_1 = \frac{2B^2L R_p}{\lambda}+ C^2\rho$ and $\epsilon_2 = M\left(8L c^2 \sqrt{\frac{2(B^2 R_p + C^2\lambda\rho)}{\lambda m}}+R_p\right)$. Besides, $B = \sup_{\mathbf{x}\in \mathcal{X}} \lVert \mathbf{x}\rVert$ and $C = \sup_{\mathbf{x}\in \mathcal{X}} \lVert \mathbf{h}_p(\mathbf{x})\rVert$.

To better present the results, we only keep the leading term w.r.t. $m$ and $R_p$, and we have 
\begin{equation}
	\label{eq:main-results-order-mc}
		R(h_k) - \hat{R}(h_k) = O\left( \frac{c^2}{\sqrt{m}}\left( \sqrt{R_p} + \sqrt{\frac{L R_p}{\lambda}} +\sqrt[4]{\frac{L^2 R_p}{\lambda m}}\right) + \frac{1}{m}\sqrt[4]{\frac{1}{\lambda}}\right),	
\end{equation}
where $R_p = R(h_p) = \mathbb{E}_{S_k}[\ell(h_p)]$, representing the risk of reusing model on current distribution.
\end{myThm}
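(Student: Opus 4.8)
The plan is to bound the surrogate generalization gap $R_\ell(h_{\hat{W},p}) - \hat{R}_S(h_{\hat{W},p})$ and then transfer it to the stated $0$-$1$ risk gap. Property (i) of the $L$-regular loss in Definition~\ref{def:regular-loss} gives $R(h_{\hat{W},p}) \le R_\ell(h_{\hat{W},p})$, so any upper bound on $R_\ell - \hat{R}_S$ immediately upper bounds $R - \hat{R}_S$. Crucially, $\mathbb{E}_{S_k}[\hat{R}_S(h)] = R_\ell(h)$, so the empirical surrogate risk is unbiased for $R_\ell$ and I may deploy concentration machinery directly (the same reduction is spelled out just before the theorem). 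The whole argument therefore reduces to a data-dependent uniform deviation bound for the surrogate risk over the relevant hypothesis set.

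First I would localize the learned predictor. Since $\hat{W}$ minimizes the objective in~\eqref{eq:biased-regularization-multi-class} with $\Omega(W)=\|W\|_F^2$, comparing its value against the feasible point $W=0$, whose induced predictor is exactly $h_p$ (because $h_{0,p}=h_p$), yields $\lambda\|\hat{W}\|_F^2 \le \hat{R}_S(h_p)$ by nonnegativity of the loss. Concentrating the empirical source risk $\hat{R}_S(h_p)$ around $R_p = R_\ell(h_p)$ then confines $\hat{W}$ to a Frobenius ball of squared radius of order $R_p/\lambda$. Combined with the hypothesis $\|\boldsymbol{\beta}\|^2 \le 2\rho$, which controls the fixed offset $h_p = \langle\boldsymbol{\beta},\mathbf{h}_p\rangle$ through $B = \sup\|\mathbf{x}\|$ and $C = \sup\|\mathbf{h}_p(\mathbf{x})\|$, this produces the two pieces of $\epsilon_1 = \tfrac{2B^2 L R_p}{\lambda} + C^2\rho$. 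This data-dependent radius, shrinking with $R_p$, is precisely what later drives the fast rate as $R_p\to 0$.

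Next I would control the complexity of the localized margin-loss class over this ball. Using the $L$-Lipschitz property (ii) of $\ell$ via a contraction step, followed by the multi-class structural Rademacher bound of~\cite{conf/nips/LeiDBK15} for linear predictors over a Frobenius ball, I obtain the Rademacher term, and it is here that the class count $c$ enters to give the $c^2$ factor multiplying $L\sqrt{\epsilon_1/m}$. The final ingredient is a Bernstein/Talagrand-type concentration of $R_\ell - \hat{R}_S$ uniform over the ball: the loss is bounded by $M$ and, by a self-bounding argument, its variance is controlled by the risk itself, so the deviation term takes the form $3\sqrt{\epsilon_2\log(1/\delta)/m} + \tfrac{3M\log(1/\delta)}{4m}$, with $\epsilon_2$ inheriting both $R_p$ and the Rademacher contribution so that it too collapses when $R_p\to 0$. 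Assembling the Rademacher and deviation terms and keeping only the leading powers of $m$ and $R_p$ yields~\eqref{eq:main-results-order-mc}.

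The hard part will be the multi-class complexity step. Unlike the binary case, the margin $\rho_h(\mathbf{x},y)=h(\mathbf{x},y)-\max_{y'\neq y}h(\mathbf{x},y')$ is a nonlinear, max-based functional of the vector-valued predictor, so the scalar contraction lemma does not apply directly; I expect to route the argument through the vector-contraction and structural estimates of~\cite{conf/nips/LeiDBK15} to extract the correct $c$-dependence. The delicate point is to do so while keeping the radius of the function ball tied to $\sqrt{R_p/\lambda}$ rather than to a crude worst-case norm, since it is exactly the interaction between the biased-regularization norm bound and the multi-class Rademacher estimate that must be preserved for the fast-rate behavior to survive.
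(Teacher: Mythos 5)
Your proposal is correct and follows the same overall skeleton as the paper's proof: reduce the $0$-$1$ risk to the surrogate risk via $R(h)\leq R_{\ell}(h)$ (you correctly attribute this to property (i) of the regular loss; the paper's pointer to property (iii) is a typo), localize $\hat{W}$ by comparing the regularized ERM objective against $W=\mathbf{0}$ to get $\Omega(\hat{W})\leq \hat{R}_S(h_p)/\lambda$ and $\hat{R}_S(h_{\hat{W},p})\leq \hat{R}_S(h_p)$, and then combine a Rademacher complexity bound over the localized class with the Bousquet--Bernstein concentration of Lemma~\ref{lemma:generaization-bound-loss}, whose variance-by-risk (self-bounding) structure is exactly what produces $\epsilon_2$ and the fast rate as $R_p\rightarrow 0$. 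Where you genuinely diverge is the multi-class complexity step. The paper stays elementary: it applies scalar Talagrand contraction directly to $\ell\circ\rho_h$ (which is legitimate, since each margin $\rho_h(\mathbf{x},y)$ is a scalar-valued function, so your worry that scalar contraction ``does not apply directly'' is misplaced), then expands the margin, uses sub-additivity of the supremum and Lemma~\ref{lemma:rademacher-max} (max bounded by sum over classes) to incur the $c^2$ factor, and finally bounds the two resulting linear terms via the strong-convexity duality of Lemma~\ref{lemma:kakade-primal-dual} with tuned parameters $t_1,t_2$. You instead route through the vector-contraction/Gaussian-complexity machinery of \cite{conf/nips/LeiDBK15}; this works, but as the paper's own remark after Lemma~\ref{lemma:rademacher-of-lipschitz-loss-mc} notes, that machinery yields a \emph{linear or radical} dependence on $c$, strictly better than the $c^2$ stated here --- so your route proves a stronger statement (hence still valid) at the cost of heavier tools, while the paper's route produces exactly the claimed $c^2$ bound with only Lemma 8.1 of \cite{book/MIT/mohri2012foundations} and Kakade-style duality. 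A second, minor difference: you handle the data-dependent radius $\hat{R}_S(h_p)$ by a separate concentration step turning it into $R_p$, whereas the paper keeps the data-dependent constraint set and converts $\mathbb{E}_S\bigl[\sqrt{\hat{R}_S(h_p)}\bigr]$ into $\sqrt{R_p}$ by Jensen's inequality inside the definition of $\mathfrak{R}_m(\mathcal{L})$; the paper's choice avoids an extra deviation term, yours makes the hypothesis class sample-independent, and both preserve the $\sqrt{R_p/\lambda}$ radius that drives the fast rate.
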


\begin{myRemark}
From Theorem~\ref{thm:generalization-main-theorem-mc}, we can see that the main result and conclusion in multi-class case is very similar to that in binary case. In \eqref{eq:main-results-order-mc}, we can see that \textsc{Condor} enjoys an $O(1/\sqrt{m})$ order generalization bound, which is consistent to the common learning guarantees. More importantly, \textsc{Condor} enjoys an $O(1/m)$ order fast rate generalization guarantees, when $R_p\rightarrow 0$, namely, when the previous model $h_p$ is highly `reusable' with respect to the current data. This shows the effectiveness of leveraging and reusing previous models to help build new model in \textsc{Condor}, in multi-class scenarios.
\end{myRemark}
\section{Experiments}
\label{sec:experiments}
In this section, we examine the effectiveness of \textsc{Condor} on both synthetic and real-world concept drift datasets. Additional experimental regarding to weight concentration, recurring concept drift, parameter study and robustness comparisons are presented in Appendix~\ref{sec:more-experiments}.

\textbf{Compared Approaches.} We conduct the comparisons with two classes of state-of-the-art concept drift approaches. The first class is the \emph{ensemble category}, including (a) $\mathtt{Learn}^{++}.\mathtt{NSE}$~\citep{journals/tnn/ElwellP11}, (b) $\mathtt{DWM}$~\citep{conf/icdm/KolterM03,journals/jmlr/KolterM07} and (c) $\mathtt{AddExp}$~\citep{conf/icml/KolterM05}. The second class is the \emph{model-reuse category}, including (d) $\mathtt{DTEL}$~\citep{journals/tnnls/suny18} and (e) $\mathtt{TIX}$~\citep{conf/sigir/Forman06}. Essentially, DTEL and TIX also adopt ensemble idea, we classify them into model-reuse category just to highlight their model reuse strategies. 

\textbf{Settings.} In our experiments, we choose ADWIN algorithm~\citep{conf/sdm/BifetG07} as the the drift detector $\mathfrak{D}$ with default parameter setting reported in their paper and source code. Besides, for all the approaches, we set the maximum update period (epoch size) $p = 50$,\footnote{Except for Covertype and GasSensor datasets, we set $p=200$, since Covertype is extremely large with 581,012 data items in total and GasSensor is the multi-class dataset which has a higher sample complexity.} and model pool size $K=25$.

\begin{table}[!h]
\scriptsize 
\centering
\caption{Basic statistics of datasets with concept drift.}
\label{table:dataset-info}
\resizebox{0.95\textwidth}{!}{
\begin{tabular}{lrcc|lrrc}\toprule
\multicolumn{1}{c}{Dataset}    & \multicolumn{1}{c}{\# instance} & \multicolumn{1}{c}{\# dim}  & \multicolumn{1}{c|}{\# class} & \multicolumn{1}{c}{Dataset}    & \multicolumn{1}{c}{\# instance} & \multicolumn{1}{c}{\# dim} & \multicolumn{1}{c}{\# class}\\ \midrule
SEA200A      & 24,000     & 3  & 2 & GEARS-2C-2D  & 200,000     & 2     & 2 \\
SEA200G      & 24,000     & 3  & 2 & Usenet-1     & 1,500       & 100 	& 2 \\
SEA500G      & 60,000     & 3  & 2 & Usenet-2     & 1,500       & 100 	& 2 \\
CIR500G      & 60,000     & 3  & 2 & Luxembourg   & 1,900       & 32  	& 2 \\
SINE500G     & 60,000     & 2  & 2 & Spam         & 9,324       & 500 	& 2 \\
STA500G      & 60,000     & 3  & 2 & Email        & 1,500       & 913 	& 2 \\
1CDT      	 & 16,000     & 2  & 2 & Weather      & 18,159      & 8   	& 2 \\
1CHT      	 & 16,000     & 2  & 2 & GasSensor    & 4,450       & 129   & 6 \\
UG-2C-2D     & 100,000    & 2  & 2 & Powersupply  & 29,928      & 2   	& 2 \\
UG-2C-3D     & 200,000    & 3  & 2 & Electricity  & 45,312      & 8   	& 2 \\
UG-2C-5D     & 200,000    & 5  & 2 & Covertype    & 581,012     & 54    & 2 \\ \bottomrule
\end{tabular}}
\end{table}

\textbf{Synthetic Datasets.} As it is not realistic to foreknow the detailed concept drift information of real-world datasets, like the start, the end of change and so on. We employ six widely used synthetic datasets SEA, CIR, SIN and STA with corresponding variants into experiments. Besides, another six synthetic datasets for binary classification are also adopted: 1CDT, 1CHT, UG-2C-2D, UG-2C-3D, UG-2C-5D, and GEARS-2C-2D. A brief statistics are summarized in Table~\ref{table:dataset-info}. We provide the datasets information in Appendix~\ref{sec:dataset-description}.

We plot the \emph{holdout accuracy} comparisons over three synthetic datasets, SEA200A, SEA200G and SEA500G. Since some of compared approaches are batch style, following the splitting setting in \cite{journals/tnnls/suny18}, we split them into 120 epochs to have a clear presentation. The holdout accuracy is calculated over testing data generated according to the identical distribution as training data at each time stamp. For SEA and its variants, the distribution changes for seven times. From Figure~\ref{fig:synthetic-holdout}, we can see that all the approaches drop when an abrupt concept drift occurs. Nevertheless, our approach \textsc{Condor} is relatively stable and rises up rapidly with more data items coming, with the highest accuracy compared with other approaches, which validates its effectiveness.

\begin{figure}[!t]
    \centering
    \subfigure[SEA200A]{ \label{fig:drift-SEA200A} 
        \includegraphics[clip, trim=3.6cm 9.2cm 4.2cm 10.0cm,width=0.3\textwidth]{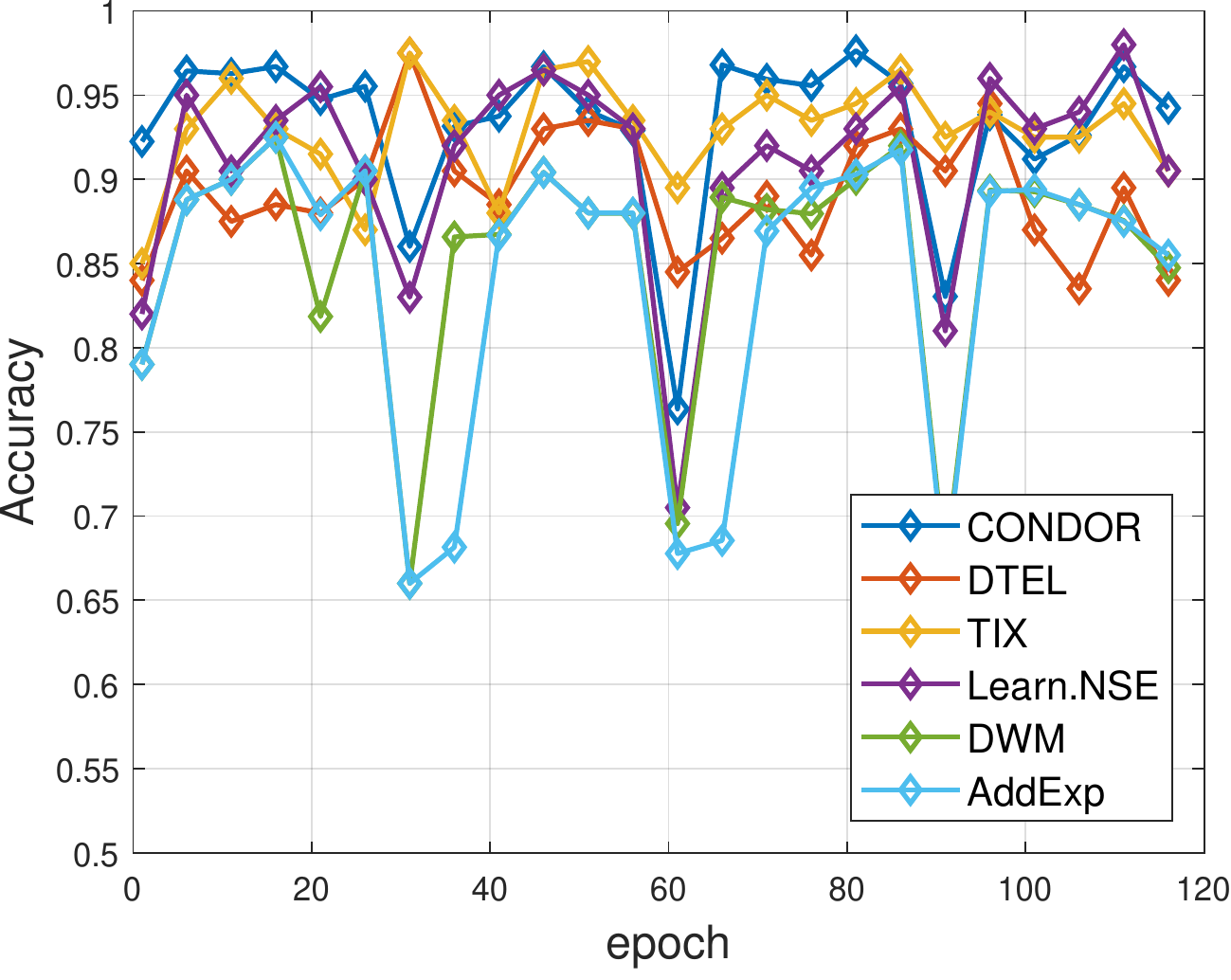}}\hspace{3mm}
    \subfigure[SEA200G]{ \label{fig:drift-SEA200G}
        \includegraphics[clip, trim=3.6cm 9.2cm 4.2cm 10.0cm, width=0.3\textwidth]{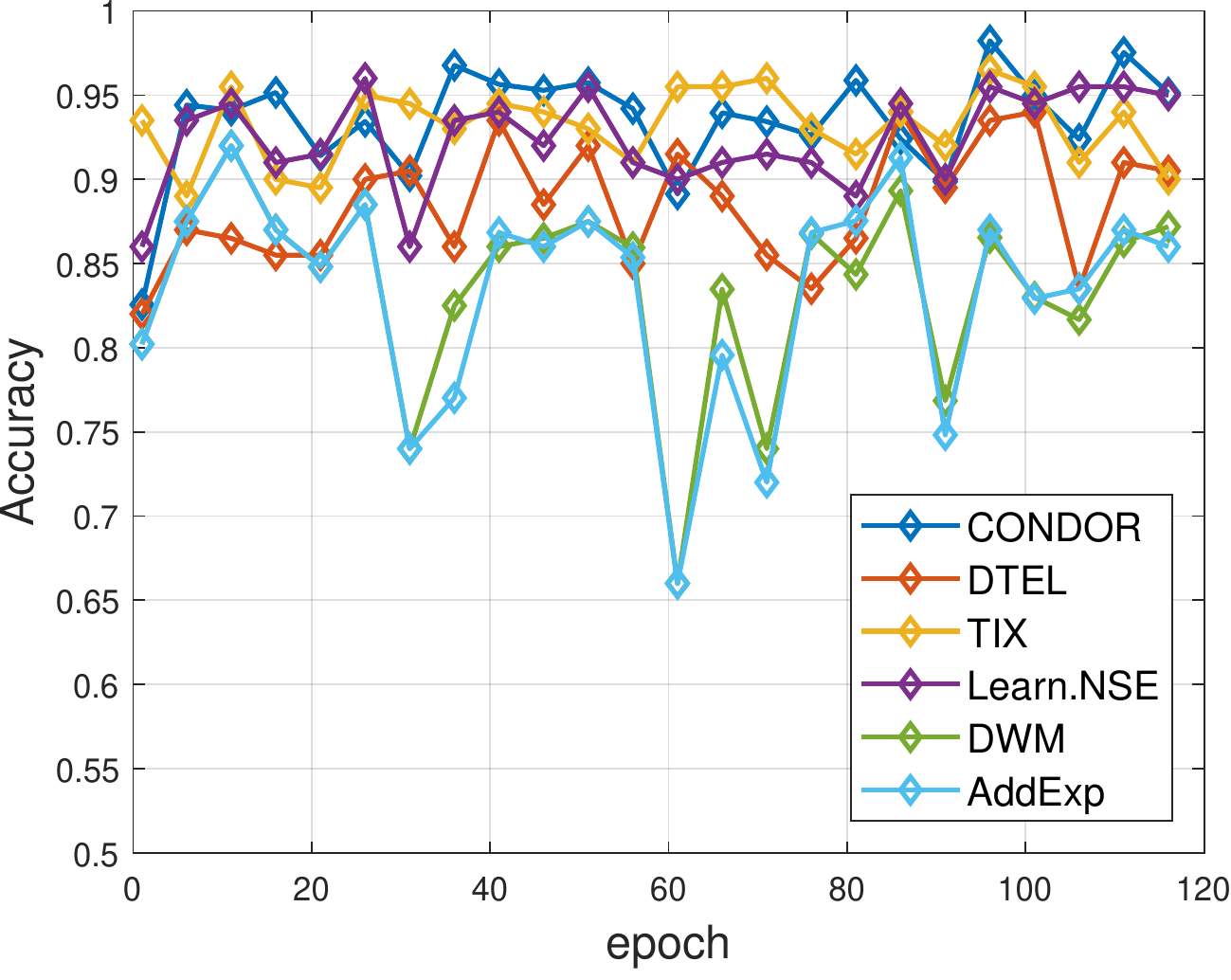}} \hspace{3mm}   
    \subfigure[SEA500G]{ \label{fig:drift-SEA500G}
        \includegraphics[clip, trim=3.6cm 9.2cm 4.2cm 10.0cm, width=0.3\textwidth]{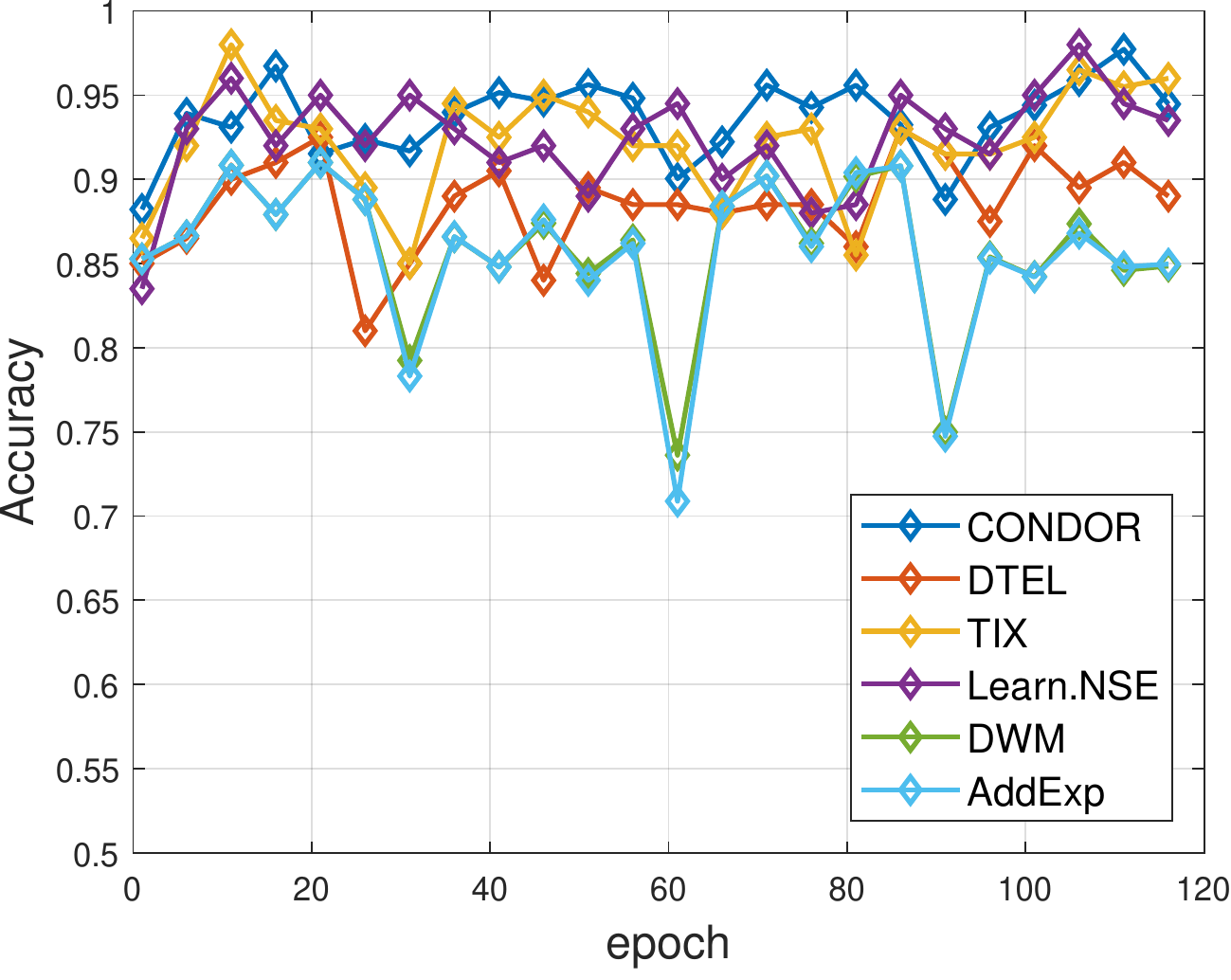}}
    \caption{Holdout accuracy comparisons on three synthetic datasets.}
    \label{fig:synthetic-holdout}
\end{figure}

\begin{table}[!t]
\centering
\scriptsize
\caption{\small{Performance comparisons on synthetic and real-world datasets. Besides, $\bullet$ ($\circ$) indicates our approach \textsc{Condor} is significantly better (worse) than compared approaches (paired $t$-tests at 95\% significance level).} }
\label{table:accuracy-all}
\resizebox{\textwidth}{!}{
\begin{tabular}{l|lllll|c}\toprule
\multirow{2}{*}{Dataset} &  \multicolumn{3}{c}{\textit{Ensemble Category}} & \multicolumn{2}{c|}{\textit{Model-Reuse Category}} & \multicolumn{1}{c}{\textit{Ours}} \\
                           & \multicolumn{1}{c}{$\mathtt{Learn}^{++}.\mathtt{NSE}$}  & \multicolumn{1}{c}{$\mathtt{DWM}$}  & \multicolumn{1}{c}{$\mathtt{AddExp}$}   & \multicolumn{1}{c}{$\mathtt{DTEL}$} & \multicolumn{1}{c|}{$\mathtt{TIX}$} & \multicolumn{1}{c}{\textsc{Condor}}\\ \midrule
SEA200A      & 84.48 $\pm$ 0.19 $\bullet$ & 86.07 $\pm$ 0.30 $\bullet$ & 84.35 $\pm$ 0.86 $\bullet$   & 80.50 $\pm$ 0.58 $\bullet$ 		& 82.79  $\pm$ 0.27 $\bullet$	& \textbf{86.67 $\pm$ 0.21} \\
SEA200G      & 85.48 $\pm$ 0.33 $\bullet$ & 86.92 $\pm$ 0.13 $\bullet$ & 85.54 $\pm$ 0.69 $\bullet$   & 80.73 $\pm$ 0.19 $\bullet$ 		& 82.95  $\pm$ 0.12 $\bullet$	& \textbf{87.63 $\pm$ 0.24} \\
SEA500G      & 86.03 $\pm$ 0.19 $\bullet$ & 87.63 $\pm$ 0.06 $\bullet$ & 87.14 $\pm$ 0.12 $\bullet$   & 80.42 $\pm$ 0.24 $\bullet$ 		& 83.26  $\pm$ 0.07 $\bullet$	& \textbf{88.21 $\pm$ 0.04} \\
CIR500G      & 84.77 $\pm$ 0.56 $\circ$   & 77.09 $\pm$ 0.71 $\circ$   & 76.48 $\pm$ 0.81 $\circ$     & 79.03 $\pm$ 0.34 $\circ$   		& 66.38  $\pm$ 0.85 $\bullet$	& 68.41 $\pm$ 0.87 \\
SIN500G      & 79.41 $\pm$ 0.07 $\circ$   & 66.99 $\pm$ 0.10 $\circ$   & 66.81 $\pm$ 0.12 $\circ$     & 74.93 $\pm$ 0.34 $\circ$   		& 62.73  $\pm$ 0.14 $\bullet$	& 65.68 $\pm$ 0.12 \\
STA500G      & 83.97 $\pm$ 0.13 $\bullet$ & 87.43 $\pm$ 0.18 $\bullet$ & 86.89 $\pm$ 0.27 $\bullet$   & 88.26 $\pm$ 0.18 $\bullet$ 		& 85.95  $\pm$ 0.07 $\bullet$	& \textbf{88.60 $\pm$ 0.07} \\  \midrule
1CDT         & 99.77 $\pm$ 0.14 $\bullet$ & 99.92 $\pm$ 0.10 $\ $      & 99.92 $\pm$ 0.10 $\ $        & 99.69 $\pm$ 0.11 $\bullet$  	& 99.56  $\pm$ 0.08 $\bullet$	& \textbf{99.95 $\pm$ 0.04} \\ 
1CHT         & 99.69 $\pm$ 0.20 $\ $      & 99.71 $\pm$ 0.28 $\ $      & 99.56 $\pm$ 0.46 $\ $        & 92.08 $\pm$ 0.22 $\bullet$  	& 99.41  $\pm$ 0.22 $\bullet$	& \textbf{99.86 $\pm$ 0.13} \\ 
UG-2C-2D     & 94.42 $\pm$ 0.12 $\bullet$ & 95.60 $\pm$ 0.12 $\circ$   & 94.36 $\pm$ 0.78 $\bullet$   & 93.98 $\pm$ 0.13 $\bullet$  	& 94.69  $\pm$ 0.13 $\bullet$	& 95.27 $\pm$ 0.09 \\ 
UG-2C-3D     & 93.82 $\pm$ 0.60 $\bullet$ & 95.23 $\pm$ 0.59 $\ $      & 94.61 $\pm$ 0.73 $\bullet$   & 92.94 $\pm$ 0.72 $\bullet$  	& 94.31  $\pm$ 0.69 $\bullet$	& 94.84 $\pm$ 0.59 \\ 
UG-2C-5D     & 90.30 $\pm$ 0.30 $\bullet$ & 92.85 $\pm$ 0.23 $\circ$   & 92.20 $\pm$ 0.23 $\circ$     & 88.21 $\pm$ 0.35 $\bullet$  	& 89.84  $\pm$ 0.38 $\bullet$	& \textbf{91.83 $\pm$ 0.24} \\ 
GEARS-2C-2D  & 95.82 $\pm$ 0.02 $\bullet$ & 95.83 $\pm$ 0.02 $\bullet$ & 95.83 $\pm$ 0.02 $\bullet$   & 94.96 $\pm$ 0.03 $\bullet$  	& 95.03  $\pm$ 0.02 $\bullet$	& \textbf{95.91 $\pm$ 0.01} \\ \midrule
Usenet-1     & 63.76 $\pm$ 2.01 $\bullet$ & 67.26 $\pm$ 3.11 $\bullet$ & 62.11 $\pm$ 2.67 $\bullet$   & 68.02 $\pm$ 1.19 $\bullet$  	& 65.03  $\pm$ 1.70 $\bullet$	& \textbf{73.13 $\pm$ 1.12} \\
Usenet-2     & 72.42 $\pm$ 1.14 $\bullet$ & 68.41 $\pm$ 1.17 $\bullet$ & 70.55 $\pm$ 2.41 $\bullet$   & 72.02 $\pm$ 0.87 $\bullet$  	& 70.56  $\pm$ 1.15 $\bullet$	& \textbf{75.13 $\pm$ 1.06} \\
Luxembourg   & 98.64 $\pm$ 0.00 $\bullet$ & 90.42 $\pm$ 0.55 $\bullet$ & 90.77 $\pm$ 0.52 $\bullet$   & 100.0 $\pm$ 0.00 $\ $       	& 90.99  $\pm$ 0.97 $\bullet$	& 99.98 $\pm$ 0.03   \\
Spam         & 90.79 $\pm$ 0.85 $\bullet$ & 92.18 $\pm$ 0.34 $\bullet$ & 91.78 $\pm$ 0.33 $\bullet$   & 85.53 $\pm$ 1.22 $\bullet$  	& 87.10  $\pm$ 1.45 $\bullet$	& \textbf{95.22 $\pm$ 0.48}    \\
Email        & 74.21 $\pm$ 4.61 $\bullet$ & 72.58 $\pm$ 4.10 $\bullet$ & 60.78 $\pm$ 6.12 $\bullet$   & 83.36 $\pm$ 1.87 $\bullet$  	& 79.83  $\pm$ 3.73 $\bullet$	& \textbf{91.60 $\pm$ 1.86}    \\
Weather      & 75.99 $\pm$ 0.36 $\bullet$ & 70.83 $\pm$ 0.49 $\bullet$ & 70.07 $\pm$ 0.34 $\bullet$   & 68.92 $\pm$ 0.27 $\bullet$  	& 70.21  $\pm$ 0.33 $\bullet$	& \textbf{79.37 $\pm$ 0.26}    \\
GasSensor & 42.36 $\pm$ 3.72 $\bullet$ & 76.61 $\pm$ 0.36 $\bullet$ & 76.61 $\pm$ 0.36 $\bullet$   & 63.82 $\pm$ 3.64 $\bullet$    	& 43.40  $\pm$ 2.88 $\bullet$	& \textbf{81.57 $\pm$ 3.77}    \\
Powersupply  & 74.06 $\pm$ 0.28 $\circ$   & 72.09 $\pm$ 0.29 $\bullet$ & 72.13 $\pm$ 0.23 $\bullet$   & 69.90 $\pm$ 0.38 $\bullet$  	& 68.34  $\pm$ 0.16 $\bullet$	& 72.82 $\pm$ 0.29    \\
Electricity  & 78.97 $\pm$ 0.18 $\bullet$ & 78.03 $\pm$ 0.17 $\bullet$ & 75.62 $\pm$ 0.42 $\bullet$   & 81.05 $\pm$ 0.35 $\bullet$    	& 58.44  $\pm$ 0.71 $\bullet$	& \textbf{84.73 $\pm$ 0.33}    \\
Covertype & 79.08 $\pm$ 1.30 $\bullet$ & 74.17 $\pm$ 0.87 $\bullet$ & 73.13 $\pm$ 1.53 $\bullet$   & 69.43 $\pm$ 1.30 $\bullet$    	& 64.60  $\pm$ 0.89 $\bullet$	& \textbf{89.58 $\pm$ 0.14}    \\\midrule
\textsc{Condor}\ \ \texttt{w/t/l} & \multicolumn{1}{c}{18/ 1/ 3}  & \multicolumn{1}{c}{14/ 3/ 4}  & \multicolumn{1}{c}{17/ 2/ 3}  &  \multicolumn{1}{c}{19/ 1/ 2} & \multicolumn{1}{c}{22/ 0/ 0}  & \multicolumn{1}{c}{ rank first 16/ 22 } \\\bottomrule
\end{tabular}}
\end{table}

\textbf{Real-world Datasets.} We adopt 10 real-world datasets: Usenet-1, Usenet-2, Luxembourg, Spam, Email, Weather, GasSensor, Powersupply, Electricity and Covertype. The number of data items varies from 1,500 to 581,012, and the class number varies from 2 to 6. Detailed descriptions are provided in Appendix~\ref{sec:real-world-info}. We conduct all the experiments for 10 trails and report overall mean and standard deviation of predictive accuracy in Table~\ref{table:accuracy-all}, synthetic datasets are also included. As we can see, \textsc{Condor} has a significant advantage over other comparisons. Actually, it achieves the best on 16 over 22 datasets in total. Besides, it ranks the second on four other datasets. The reason \textsc{Condor} behaves poor on CIR500G and SIN500G is that these two datasets are highly nonlinear (generated by a circle and sine function, respectively.). It is also noteworthy that \textsc{Condor} behaves significant better than other approaches in real-world datasets. These show the superiority of our proposed approach.

\section{Conclusion}
\label{sec:conclusion}
In this paper, a novel and effective approach \textsc{Condor} is proposed for handling concept drift via model reuse, which consists of two key components, $\mathtt{ModelUpdate}$ and $\mathtt{WeightUpdate}$. Our approach is built on a drift detector, when a drift is detected or a maximum accumulation number is achieved, $\mathtt{ModelUpdate}$ leverages and reuses previous models in a weighted manner. Meanwhile, $\mathtt{WeightUpdate}$ adaptively adjusts weights of previous models according to their performance. By the generalization analysis, we prove that the model reuse strategy helps if we properly reuse previous models. Through regret analysis, we show that the weight concentrate on those better-fit models, and the approach achieves a fair dynamic cumulative regret. Empirical results show the superiority of our approach to other comparisons, on both synthetic and real-world datasets.

In the future, it would be interesting to incorporate more techniques from model reuse learning into handling concept drift problems.

% \begin{thebibliography}{99}
% \bibitem [1]{stock} 
% James H. Stock, Mark  W. Watson, \textit{Forecasting inflation}, Journal of Monetary Economics 44 (1999) 293-335
% \end{thebibliography}

\bibliography{Condor-arXiv}

\begin{thebibliography}{}

\bibitem[\protect\citeauthoryear{Bartlett and
  Mendelson}{2002}]{journals/jmlr/BartlettM02}
Peter~L. Bartlett and Shahar Mendelson.
\newblock Rademacher and gaussian complexities: Risk bounds and structural
  results.
\newblock {\em Journal of Machine Learning Research}, 3:463--482, 2002.

\bibitem[\protect\citeauthoryear{Beygelzimer \bgroup \em et al.\egroup
  }{2015}]{conf/icml/BeygelzimerKL15}
Alina Beygelzimer, Satyen Kale, and Haipeng Luo.
\newblock Optimal and adaptive algorithms for online boosting.
\newblock In {\em International Conference on Machine Learning, ICML}, pages
  2323--2331, 2015.

\bibitem[\protect\citeauthoryear{Bifet and
  Gavald{\`{a}}}{2007}]{conf/sdm/BifetG07}
Albert Bifet and Ricard Gavald{\`{a}}.
\newblock Learning from time-changing data with adaptive windowing.
\newblock In {\em Proceedings of the Seventh {SIAM} International Conference on
  Data Mining}, pages 443--448, 2007.

\bibitem[\protect\citeauthoryear{Bousquet \bgroup \em et al.\egroup
  }{2003}]{conf/ac/BousquetBL03}
Olivier Bousquet, St{\'{e}}phane Boucheron, and G{\'{a}}bor Lugosi.
\newblock Introduction to statistical learning theory.
\newblock In {\em Advanced Lectures on Machine Learning, {ML} Summer Schools
  2003}, pages 169--207, 2003.

\bibitem[\protect\citeauthoryear{Bousquet}{2002}]{book/bousquet2002concentration}
Olivier Bousquet.
\newblock {\em Concentration inequalities and empirical processes theory
  applied to the analysis of learning algorithms}.
\newblock PhD thesis, Ecole Polytechnique, 2002.

\bibitem[\protect\citeauthoryear{Cattral \bgroup \em et al.\egroup
  }{2002}]{book/cattral2002evolutionary}
Robert Cattral, Franz Oppacher, and Dwight Deugo.
\newblock Evolutionary data mining with automatic rule generalization.
\newblock 2002.

\bibitem[\protect\citeauthoryear{Cesa-Bianchi and
  Lugosi}{2006}]{book/Cambridge/cesa2006prediction}
Nicolo Cesa-Bianchi and G{\'a}bor Lugosi.
\newblock {\em Prediction, learning, and games}.
\newblock Cambridge university press, 2006.

\bibitem[\protect\citeauthoryear{Cesa{-}Bianchi \bgroup \em et al.\egroup
  }{1997}]{journals/jacm/Cesa-BianchiFHHSW97}
Nicol{\`{o}} Cesa{-}Bianchi, Yoav Freund, David Haussler, David~P. Helmbold,
  Robert~E. Schapire, and Manfred~K. Warmuth.
\newblock How to use expert advice.
\newblock {\em Journal of the ACM}, 44(3):427--485, 1997.

\bibitem[\protect\citeauthoryear{Chen \bgroup \em et al.\egroup
  }{2015}]{journals/archive/chen2015ucr}
Yanping Chen, Eamonn Keogh, Bing Hu, Nurjahan Begum, Anthony Bagnall, Abdullah
  Mueen, and Gustavo Batista.
\newblock The ucr time series classification archive.
\newblock 2015.

\bibitem[\protect\citeauthoryear{Crammer \bgroup \em et al.\egroup
  }{2010}]{conf/colt/CrammerMEV10}
Koby Crammer, Yishay Mansour, Eyal Even{-}Dar, and Jennifer~Wortman Vaughan.
\newblock Regret minimization with concept drift.
\newblock In {\em Annual Conference Computational Learning Theory, COLT}, pages
  168--180, 2010.

\bibitem[\protect\citeauthoryear{de Souza \bgroup \em et al.\egroup
  }{2015}]{conf/sdm/SouzaSGB15}
Vin{\'{\i}}cius M.~A. de~Souza, Diego~Furtado Silva, Jo{\~{a}}o Gama, and
  Gustavo E. A. P.~A. Batista.
\newblock Data stream classification guided by clustering on nonstationary
  environments and extreme verification latency.
\newblock In {\em SIAM International Conference on Data Mining, SDM}, pages
  873--881, 2015.

\bibitem[\protect\citeauthoryear{Du \bgroup \em et al.\egroup
  }{2017}]{conf/nips/DuKSP17}
Simon~S. Du, Jayanth Koushik, Aarti Singh, and Barnab{\'{a}}s P{\'{o}}czos.
\newblock Hypothesis transfer learning via transformation functions.
\newblock In {\em Advances in Neural Information Processing Systems, NIPS},
  pages 574--584, 2017.

\bibitem[\protect\citeauthoryear{Duan \bgroup \em et al.\egroup
  }{2009}]{conf/icml/DuanTXC09}
Lixin Duan, Ivor~W. Tsang, Dong Xu, and Tat{-}Seng Chua.
\newblock Domain adaptation from multiple sources via auxiliary classifiers.
\newblock In {\em International Conference on Machine Learning, ICML}, pages
  289--296, 2009.

\bibitem[\protect\citeauthoryear{Elwell and
  Polikar}{2011}]{journals/tnn/ElwellP11}
Ryan Elwell and Robi Polikar.
\newblock Incremental learning of concept drift in nonstationary environments.
\newblock {\em IEEE Transactions on Neural Networks}, 22(10):1517--1531, 2011.

\bibitem[\protect\citeauthoryear{Forman}{2006}]{conf/sigir/Forman06}
George Forman.
\newblock Tackling concept drift by temporal inductive transfer.
\newblock In {\em International ACM SIGIR Conference on Research \& Development
  in Information Retrieval, SIGIR}, pages 252--259, 2006.

\bibitem[\protect\citeauthoryear{Gama and Kosina}{2014}]{journals/kais/GamaK14}
Jo{\~{a}}o Gama and Petr Kosina.
\newblock Recurrent concepts in data streams classification.
\newblock {\em Knowledge and Information Systems}, 40(3):489--507, 2014.

\bibitem[\protect\citeauthoryear{Gama \bgroup \em et al.\egroup
  }{2003}]{conf/kdd/GamaRM03}
Jo{\~{a}}o Gama, Ricardo Rocha, and Pedro Medas.
\newblock Accurate decision trees for mining high-speed data streams.
\newblock In {\em ACM SIGKDD International Conference on Knowledge Discovery \&
  Data Mining, KDD}, pages 523--528, 2003.

\bibitem[\protect\citeauthoryear{Gama \bgroup \em et al.\egroup
  }{2014}]{journals/csur/GamaZBPB14}
Jo{\~{a}}o Gama, Indre Zliobaite, Albert Bifet, Mykola Pechenizkiy, and
  Abdelhamid Bouchachia.
\newblock A survey on concept drift adaptation.
\newblock {\em ACM Computing Surveys}, 46(4):44:1--44:37, 2014.

\bibitem[\protect\citeauthoryear{Gomes \bgroup \em et al.\egroup
  }{2017}]{journals/csur/GomesBEB17}
Heitor~Murilo Gomes, Jean~Paul Barddal, Fabr{\'{\i}}cio Enembreck, and Albert
  Bifet.
\newblock A survey on ensemble learning for data stream classification.
\newblock {\em ACM Computing Surveys}, 50(2):23:1--23:36, 2017.

\bibitem[\protect\citeauthoryear{Harel \bgroup \em et al.\egroup
  }{2014}]{conf/icml/HarelMEC14}
Maayan Harel, Shie Mannor, Ran El{-}Yaniv, and Koby Crammer.
\newblock Concept drift detection through resampling.
\newblock In {\em International Conference on Machine Learning, ICML}, pages
  1009--1017, 2014.

\bibitem[\protect\citeauthoryear{Harries and
  Wales}{1999}]{journal/harries1999splice}
Michael Harries and New~South Wales.
\newblock Splice-2 comparative evaluation: Electricity pricing.
\newblock {\em Technical Report of South Wales University}, 1999.

\bibitem[\protect\citeauthoryear{Helmbold and
  Long}{1994}]{journals/ml/HelmboldL94}
David~P. Helmbold and Philip~M. Long.
\newblock Tracking drifting concepts by minimizing disagreements.
\newblock {\em Machine Learning}, 14(1):27--45, 1994.

\bibitem[\protect\citeauthoryear{Jaber \bgroup \em et al.\egroup
  }{2013}]{conf/iconip/JaberCT13a}
Ghazal Jaber, Antoine Cornu{\'{e}}jols, and Philippe Tarroux.
\newblock A new on-line learning method for coping with recurring concepts: The
  {ADACC} system.
\newblock In {\em International Conference on Neural Information Processing,
  ICONIP}, pages 595--604, 2013.

\bibitem[\protect\citeauthoryear{Kakade \bgroup \em et al.\egroup
  }{2012}]{journals/jmlr/KakadeST12}
Sham~M. Kakade, Shai Shalev{-}Shwartz, and Ambuj Tewari.
\newblock Regularization techniques for learning with matrices.
\newblock {\em Journal of Machine Learning Research}, 13:1865--1890, 2012.

\bibitem[\protect\citeauthoryear{Katakis \bgroup \em et al.\egroup
  }{2008}]{conf/ecai/KatakisTV08}
Ioannis Katakis, Grigorios Tsoumakas, and Ioannis~P. Vlahavas.
\newblock An ensemble of classifiers for coping with recurring contexts in data
  streams.
\newblock In {\em European Conference on Artificial Intelligence, ECAI}, pages
  763--764, 2008.

\bibitem[\protect\citeauthoryear{Katakis \bgroup \em et al.\egroup
  }{2009}]{journals/jiis/KatakisTBBV09}
Ioannis Katakis, Grigorios Tsoumakas, Evangelos Banos, Nick Bassiliades, and
  Ioannis~P. Vlahavas.
\newblock An adaptive personalized news dissemination system.
\newblock {\em Journal of Intelligent Information Systems}, 32(2):191--212,
  2009.

\bibitem[\protect\citeauthoryear{Katakis \bgroup \em et al.\egroup
  }{2010}]{journals/kais/KatakisTV10}
Ioannis Katakis, Grigorios Tsoumakas, and Ioannis~P. Vlahavas.
\newblock Tracking recurring contexts using ensemble classifiers: an
  application to email filtering.
\newblock {\em Knowledge and Information Systems}, 22(3):371--391, 2010.

\bibitem[\protect\citeauthoryear{Klinkenberg and
  Joachims}{2000}]{conf/icml/KlinkenbergJ00}
Ralf Klinkenberg and Thorsten Joachims.
\newblock Detecting concept drift with support vector machines.
\newblock In {\em International Conference on Machine Learning, ICML}, pages
  487--494, 2000.

\bibitem[\protect\citeauthoryear{Klinkenberg}{2004}]{journals/ida/Klinkenberg04}
Ralf Klinkenberg.
\newblock Learning drifting concepts: Example selection vs. example weighting.
\newblock {\em Intelligent Data Analysis}, 8(3):281--300, 2004.

\bibitem[\protect\citeauthoryear{Kolter and Maloof}{2003}]{conf/icdm/KolterM03}
Jeremy~Z. Kolter and Marcus~A. Maloof.
\newblock Dynamic weighted majority: {A} new ensemble method for tracking
  concept drift.
\newblock In {\em IEEE International Conference on Data Mining, ICDM}, pages
  123--130, 2003.

\bibitem[\protect\citeauthoryear{Kolter and Maloof}{2005}]{conf/icml/KolterM05}
Jeremy~Z. Kolter and Marcus~A. Maloof.
\newblock Using additive expert ensembles to cope with concept drift.
\newblock In {\em International Conference on Machine Learning, ICML}, pages
  449--456, 2005.

\bibitem[\protect\citeauthoryear{Kolter and
  Maloof}{2007}]{journals/jmlr/KolterM07}
J.~Zico Kolter and Marcus~A. Maloof.
\newblock Dynamic weighted majority: An ensemble method for drifting concepts.
\newblock {\em Journal of Machine Learning Research}, 8:2755--2790, 2007.

\bibitem[\protect\citeauthoryear{Koychev}{2000}]{conf/ecai/koychev2000gradual}
Ivan Koychev.
\newblock Gradual forgetting for adaptation to concept drift.
\newblock In {\em In Proceedings of ECAI 2000 Workshop Current Issues in
  Spatio-Temporal Reasoning}, pages 101--106, 2000.

\bibitem[\protect\citeauthoryear{Kuncheva and
  Zliobaite}{2009}]{journals/ida/KunchevaZ09}
Ludmila~I. Kuncheva and Indre Zliobaite.
\newblock On the window size for classification in changing environments.
\newblock {\em Intelligent Data Analysis}, 13(6):861--872, 2009.

\bibitem[\protect\citeauthoryear{Kuzborskij and
  Orabona}{2013}]{conf/icml/KuzborskijO13}
Ilja Kuzborskij and Francesco Orabona.
\newblock Stability and hypothesis transfer learning.
\newblock In {\em International Conference on Machine Learning, ICML}, pages
  942--950, 2013.

\bibitem[\protect\citeauthoryear{Kuzborskij and
  Orabona}{2017}]{journals/mlj/KuzborskijO17}
Ilja Kuzborskij and Francesco Orabona.
\newblock Fast rates by transferring from auxiliary hypotheses.
\newblock {\em Machine Learning}, 106(2):171--195, 2017.

\bibitem[\protect\citeauthoryear{Ledoux and
  Talagrand}{2013}]{book/ledoux2013probability}
Michel Ledoux and Michel Talagrand.
\newblock {\em Probability in Banach Spaces: isoperimetry and processes}.
\newblock Springer Science \& Business Media, 2013.

\bibitem[\protect\citeauthoryear{Lei \bgroup \em et al.\egroup
  }{2015}]{conf/nips/LeiDBK15}
Yunwen Lei, {\"{U}}r{\"{u}}n Dogan, Alexander Binder, and Marius Kloft.
\newblock Multi-class svms: From tighter data-dependent generalization bounds
  to novel algorithms.
\newblock In {\em Advances in Neural Information Processing Systems, NIPS},
  pages 2035--2043, 2015.

\bibitem[\protect\citeauthoryear{Li \bgroup \em et al.\egroup
  }{2013}]{journals/pami/LiTZ13}
Nan Li, Ivor~W. Tsang, and Zhi{-}Hua Zhou.
\newblock Efficient optimization of performance measures by classifier
  adaptation.
\newblock {\em IEEE Transactions on Pattern Analysis and Machine Intelligence},
  35(6):1370--1382, 2013.

\bibitem[\protect\citeauthoryear{Maurer}{2016}]{conf/alt/Maurer16}
Andreas Maurer.
\newblock A vector-contraction inequality for rademacher complexities.
\newblock In {\em International Conference on Algorithmic Learning Theory,
  {ALT}}, pages 3--17, 2016.

\bibitem[\protect\citeauthoryear{Mohri and Medina}{2012}]{conf/alt/MohriM12}
Mehryar Mohri and Andres~Mu{\~{n}}oz Medina.
\newblock New analysis and algorithm for learning with drifting distributions.
\newblock In {\em ALT}, pages 124--138, 2012.

\bibitem[\protect\citeauthoryear{Mohri \bgroup \em et al.\egroup
  }{2012}]{book/MIT/mohri2012foundations}
Mehryar Mohri, Afshin Rostamizadeh, and Ameet Talwalkar.
\newblock {\em Foundations of machine learning}.
\newblock MIT press, 2012.

\bibitem[\protect\citeauthoryear{Schapire and
  Freund}{2012}]{book/MIT/Schapire2012}
Robert~E. Schapire and Yoav Freund.
\newblock {\em Boosting: Foundations and Algorithms}.
\newblock The MIT Press, 2012.

\bibitem[\protect\citeauthoryear{Schapire}{1990}]{journals/ml/Schapire90}
Robert~E. Schapire.
\newblock The strength of weak learnability.
\newblock {\em Machine Learning}, 5:197--227, 1990.

\bibitem[\protect\citeauthoryear{Schlimmer and
  Granger}{1986}]{journals/ml/SchlimmerG86}
Jeffrey~C. Schlimmer and Richard~H. Granger.
\newblock Incremental learning from noisy data.
\newblock {\em Machine Learning}, 1(3):317--354, 1986.

\bibitem[\protect\citeauthoryear{Sch{\"{o}}lkopf \bgroup \em et al.\egroup
  }{2001}]{conf/colt/ScholkopfHS01}
Bernhard Sch{\"{o}}lkopf, Ralf Herbrich, and Alexander~J. Smola.
\newblock A generalized representer theorem.
\newblock In {\em Annual Conference Computational Learning Theory, COLT}, pages
  416--426, 2001.

\bibitem[\protect\citeauthoryear{Segev \bgroup \em et al.\egroup
  }{2017}]{journals/pami/SegevHMCE17}
Noam Segev, Maayan Harel, Shie Mannor, Koby Crammer, and Ran El{-}Yaniv.
\newblock Learn on source, refine on target: {A} model transfer learning
  framework with random forests.
\newblock {\em IEEE Transactions on Pattern Analysis and Machine Intelligence},
  39(9):1811--1824, 2017.

\bibitem[\protect\citeauthoryear{Srebro \bgroup \em et al.\egroup
  }{2010}]{conf/nips/Srebro10}
Nathan Srebro, Karthik Sridharan, and Ambuj Tewari.
\newblock Smoothness, low noise and fast rates.
\newblock In {\em Advances in Neural Information Processing Systems, NIPS},
  pages 2199--2207. 2010.

\bibitem[\protect\citeauthoryear{Street and Kim}{2001}]{conf/kdd/StreetK01}
W.~Nick Street and YongSeog Kim.
\newblock A streaming ensemble algorithm {(SEA)} for large-scale
  classification.
\newblock In {\em ACM SIGKDD International Conference on Knowledge Discovery \&
  Data Mining, KDD}, pages 377--382, 2001.

\bibitem[\protect\citeauthoryear{Sun \bgroup \em et al.\egroup
  }{2018}]{journals/tnnls/suny18}
Y.~Sun, K.~Tang, Z.~Zhu, and X.~Yao.
\newblock Concept drift adaptation by exploiting historical knowledge.
\newblock {\em IEEE Transactions on Neural Networks and Learning Systems}, To
  appear, 2018.

\bibitem[\protect\citeauthoryear{Suykens \bgroup \em et al.\egroup
  }{2002}]{book/2002/suykens2002least}
Johan~AK Suykens, Tony Van~Gestel, and Jos De~Brabanter.
\newblock {\em Least squares support vector machines}.
\newblock World Scientific, 2002.

\bibitem[\protect\citeauthoryear{Tommasi \bgroup \em et al.\egroup
  }{2010}]{conf/cvpr/TommasiOC10}
Tatiana Tommasi, Francesco Orabona, and Barbara Caputo.
\newblock Safety in numbers: Learning categories from few examples with multi
  model knowledge transfer.
\newblock In {\em IEEE Conference on Computer Vision and Pattern Recognition,
  CVPR}, pages 3081--3088, 2010.

\bibitem[\protect\citeauthoryear{Tommasi \bgroup \em et al.\egroup
  }{2014}]{journals/pami/TommasiOC14}
Tatiana Tommasi, Francesco Orabona, and Barbara Caputo.
\newblock Learning categories from few examples with multi model knowledge
  transfer.
\newblock {\em IEEE Transactions on Pattern Analysis and Machine Intelligence},
  36(5):928--941, 2014.

\bibitem[\protect\citeauthoryear{Vergara \bgroup \em et al.\egroup
  }{2012}]{journal/chemistry/vergara2012chemical}
Alexander Vergara, Shankar Vembu, Tuba Ayhan, Margaret~A Ryan, Margie~L Homer,
  and Ram{\'o}n Huerta.
\newblock Chemical gas sensor drift compensation using classifier ensembles.
\newblock {\em Sensors and Actuators B: Chemical}, 166:320--329, 2012.

\bibitem[\protect\citeauthoryear{Vlachos \bgroup \em et al.\egroup
  }{2002}]{conf/kdd/VlachosDGKK02}
Michail Vlachos, Carlotta Domeniconi, Dimitrios Gunopulos, George Kollios, and
  Nick Koudas.
\newblock Non-linear dimensionality reduction techniques for classification and
  visualization.
\newblock In {\em ACM SIGKDD International Conference on Knowledge Discovery \&
  Data Mining, KDD}, pages 645--651, 2002.

\bibitem[\protect\citeauthoryear{Zhou}{2012}]{book/Chapman/zhou2012}
Zhi-Hua Zhou.
\newblock {\em Ensemble Methods: Foundations and Algorithms}.
\newblock Chapman \& Hall/CRC Press, 2012.

\bibitem[\protect\citeauthoryear{Zinkevich}{2003}]{conf/icml/Zinkevich03}
Martin Zinkevich.
\newblock Online convex programming and generalized infinitesimal gradient
  ascent.
\newblock In {\em International Conference on Machine Learning, ICML}, pages
  928--936, 2003.

\bibitem[\protect\citeauthoryear{Zliobaite}{2011}]{journals/ida/Zliobaite11}
Indre Zliobaite.
\newblock Combining similarity in time and space for training set formation
  under concept drift.
\newblock {\em Intelligent Data Analysis}, 15(4):589--611, 2011.

\end{thebibliography}
\bibliographystyle{nips2017}

\appendix
\section{Prerequisite Knowledge and Technical Lemmas}
In this section, we introduce prerequisite knowledge for proving main results technical lemmas. Specifically, we will utilize Rademacher complexity~\citep{journals/jmlr/BartlettM02} in proving generalization bounds. Besides, we will also exploit the function properties when bounding Rademacher complexity and proving the regret bounds. 

\subsection{Rademacher Complexity}
To simplify the presentation, first, we introduce some notations. Let $S$ be a sample of $m$ points drawn i.i.d. according to distribution $\mathcal{D}$, then the risk and empirical risk of hypothesis $h$ are defined by
\begin{equation*}
	R(h) = \mathbb{E}_{(\mathbf{x},y)\sim \mathcal{D}} \left[\ell(h(\mathbf{x}),y)\right], \ \  \hat{R}_S(h) = \frac{1}{m}\sum_{i=1}^m \ell(h(\mathbf{x}_i),y_i).
\end{equation*}

In the following, we will utilize the notion of Rademacher complexity~\citep{journals/jmlr/BartlettM02} to measure the hypothesis complexity and use it to bound the generalization error.
\begin{myDef}(\textit{Rademacher Complexity~\citep{journals/jmlr/BartlettM02}})
	\label{def-rademacher} 
	Let $\mathcal{G}$ be a family of functions and a fixed sample of size $m$ as $S = (\mathbf{z}_1, \cdots, \mathbf{z}_m)$. Then, the \textit{empirical Rademacher complexity} of $\mathcal{G}$ with respect to the sample $S$ is defined as:
	\[
		\hat{\mathfrak{R}}_S(\mathcal{G}) = \mathbb{E}_{\boldsymbol{\sigma}} \left[\sup_{g\in \mathcal{G}} \frac{1}{m} \sum_{i=1}^m \sigma_i g(\mathbf{z}_i)\right].
	\]
	Besides, the \textit{Rademacher complexity} of $\mathcal{G}$ is the expectation of the empirical Rademacher complexity over all samples of size $m$ drawn according to $\mathcal{D}$:
	\begin{equation}
		\label{eq:rademacher} 
		\mathfrak{R}_m(\mathcal{G}) = \mathbb{E}_{S \sim \mathcal{D}^m} [\hat{\mathfrak{R}}_S(\mathcal{G})].
	\end{equation}
\end{myDef}

\subsection{Function Properties}
In this paragraph, we introduce several common and useful functional properties.
\begin{myDef}[Lipschitz Continuity]
A function $f: \mathcal{K} \rightarrow \mathbb{R}$ is $L$-Lipschitz continuous w.r.t. a norm $\lVert \cdot \rVert$ over domain $\mathcal{K}$ if for all $\mathbf{x}, {y} \in \mathcal{K}$, we have
\[ \lvert f({y}) - f(\mathbf{x})\rvert \leq  L\lVert {y}-\mathbf{x}\rVert .\]
\end{myDef}

\begin{myDef}[Strong Convexity]
A function $f: \mathcal{K} \rightarrow \mathbb{R}$ is $\lambda$-strongly convex w.r.t. a norm $\lVert \cdot \rVert$ if for all $\mathbf{x}, {y} \in \mathcal{K}$ and for any $\alpha\in[0,1]$, we have
\begin{equation*}
  \label{eq:sc-1}
  f((1-\alpha)\mathbf{x} + \alpha{y})\leq (1-\alpha)f(\mathbf{x}) + \alpha f({y}) - \frac{\lambda}{2}\alpha(1-\alpha)\lVert \mathbf{x} - {y}\rVert^2.
\end{equation*}
A common and equivalent form for differentiable case is,
\begin{equation}
  \label{eq:sc-2}
  f({y}) \geq f(\mathbf{x}) + \nabla f(\mathbf{x})^\mathrm{T}({y}-\mathbf{x}) + \frac{\lambda}{2}\lVert {y} - \mathbf{x}\rVert^2.
\end{equation}
\end{myDef}

\begin{myDef}[Smoothness]
A function $f: \mathcal{K} \rightarrow \mathbb{R}$ is $\sigma$-smooth w.r.t. a norm $\lVert \cdot \rVert$ if for all $\mathbf{x}, {y} \in \mathcal{K}$, we have
\[ f({y}) \leq f(\mathbf{x}) + \nabla f(\mathbf{x})^\mathrm{T}({y}-\mathbf{x}) + \frac{\sigma}{2}\lVert {y} - \mathbf{x}\rVert^2.\]
If $f$ is differentiable, the above condition is equivalent to a Lipschitz condition over the gradients,
\[ \lVert \nabla f(\mathbf{x}) - \nabla f({y})\rVert \leq \sigma \lVert \mathbf{x} - {y} \rVert. \]
\end{myDef}

\subsection{Technical Lemmas}
To obtain a fast generalization rate, essentially, we need a Bernstein-type concentration inequality. And we adopt the functional generalization of Bennett's inequality due to Bousquet~\citep{book/bousquet2002concentration}, for self-containedness, we state the conclusion in Lemma~\ref{lemma:bousquet-fast-rate} as follow.

\begin{myLemma}[Theorem 2.11 in~\cite{book/bousquet2002concentration}]
\label{lemma:bousquet-fast-rate}
Assume the $X_i$ are identically distributed according to $P$. Let $\mathcal{F}$ be a countable set of functions from $\mathcal{X}$ to $\mathbb{R}$ and assume that all functions $f$ in $\mathcal{F}$ are $P$-measurable, square-integrable and satisfy $\mathbb{E}[f] = 0$. If $\sup_{f\in \mathcal{F}}\esssup f \leq 1$ then we denote 
\[
    Z = \sup_{f\in \mathcal{F}} \sum_{i=1}^n f(X_i),
\]
and if $\sup_{f\in \mathcal{F}} \lVert f \rVert_{\infty} \leq 1$, $Z$ can be defined as above or as
\[
    Z = \sup_{f\in \mathcal{F}} \left\lvert \sum_{i=1}^n f(X_i)\right\rvert.
\]

Let $\sigma$ be a positive real number such that $\sigma^2 \geq \sup_{f\in\mathcal{F}} \Var[f(X_1)]$ almost surely, then for all $x\geq 0$, we have 
\[
    \Pr\left[Z\geq \mathbb{E}[Z] + t\right] \leq \exp\left\{-v g\left(\frac{t}{v}\right)\right\}, 
\]
with $v = n\sigma^2 + 2\mathbb{E}[Z]$ and $g(y) = (1+y)\log(1+y) - y$, also
\[
    \Pr\left[Z\geq \mathbb{E}[Z] + \sqrt{2tv}+\frac{t}{3}\right] \leq e^{-t}.
\]
\end{myLemma}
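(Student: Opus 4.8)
The plan is to recognize this as Bousquet's Bennett-type concentration inequality for the supremum of a centered empirical process, so the most economical route is simply to invoke Theorem 2.11 of \cite{book/bousquet2002concentration} verbatim. For self-containedness, however, I would reconstruct the argument through the \emph{entropy method} (the logarithmic Sobolev / Herbst route of Ledoux, Massart and Boucheron--Lugosi--Massart), since that is the framework in which such sharp suprema bounds are established, and I would derive the stated Bernstein form as a corollary of the Bennett form.

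First I would set up the log-moment generating function $\psi(\lambda) = \log \mathbb{E}[e^{\lambda(Z - \mathbb{E}[Z])}]$ and introduce the leave-one-out statistics $Z_i = \sup_{f\in\mathcal{F}} \sum_{j\neq i} f(X_j)$, which do not depend on $X_i$. Writing $f^\star$ for a function attaining (or nearly attaining) the supremum in $Z$, the one-sided assumption $\esssup f \leq 1$ yields the decisive increment control $Z - Z_i \leq f^\star(X_i) \leq 1$; these bounded differences are the engine of the whole argument. The two-sided case, where $Z$ is the supremum of $\lvert \sum_i f(X_i)\rvert$ under $\lVert f\rVert_\infty \leq 1$, reduces to this one-sided case by applying the result to the symmetrized family $\mathcal{F}\cup(-\mathcal{F})$, for which $\esssup f \leq 1$ still holds.

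The core step is to turn these increments into a differential inequality for $\psi$. I would apply \emph{tensorization of entropy} together with the \emph{modified logarithmic Sobolev inequality} to bound $\mathrm{Ent}(e^{\lambda Z})$ by $\sum_i \mathbb{E}[e^{\lambda Z}\,\tau(-\lambda(Z - Z_i))]$ with $\tau(x) = e^x - x - 1$, which translates into a bound on $\psi'(\lambda)$ in terms of the variance-like quantity $\sum_i \mathbb{E}[(Z - Z_i)^2]$. The step I expect to be the main obstacle is showing that this sum is dominated by $v = n\sigma^2 + 2\mathbb{E}[Z]$: the centering $\mathbb{E}[f] = 0$ forces $\mathbb{E}[f^2] = \Var[f] \leq \sigma^2$ and produces the $n\sigma^2$ term, while the one-sided bound $f \leq 1$ converts the remaining contributions into the additive correction $2\mathbb{E}[Z]$. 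Getting the constant $2$ exactly right is the delicate part that distinguishes Bousquet's inequality from cruder bounded-difference estimates, because it requires handling the random maximizer $f^\star$ rather than a fixed $f$. Solving the resulting differential inequality by the Herbst argument gives $\psi(\lambda) \leq v(e^\lambda - \lambda - 1)$, and a Chernoff optimization over $\lambda \geq 0$ (the Legendre transform of $v(e^\lambda-\lambda-1)$ is exactly $v\,g(\cdot/v)$) yields the Bennett tail $\exp\{-v\,g(t/v)\}$ with $g(y) = (1+y)\log(1+y) - y$.

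Finally, the second, Bernstein-type tail follows from the first by the elementary inequality $g(y) \geq y^2/(2(1 + y/3))$ valid for all $y \geq 0$, which gives $v\,g(s/v) \geq s^2/(2v + 2s/3)$. Inverting the requirement $s^2/(2v + 2s/3) \geq t$ in $s$ shows that the level-$e^{-t}$ threshold is at most $\sqrt{2tv} + t/3$, which is precisely the stated form. This last conversion is routine once the Bennett bound is in hand, so essentially all the difficulty is concentrated in the entropy-method derivation of the variance proxy $v = n\sigma^2 + 2\mathbb{E}[Z]$.
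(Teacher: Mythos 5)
The paper never proves this lemma: it is imported verbatim as Theorem 2.11 of Bousquet's work, stated only ``for self-containedness,'' and used downstream as a black box. So your primary move --- invoking Bousquet's theorem --- coincides exactly with the paper's treatment, and your entropy-method sketch is a faithful outline of how the result is actually established in the source: leave-one-out quantities $Z_i$ with increments controlled by the one-sided bound $\esssup f \leq 1$, tensorization of entropy plus the modified logarithmic Sobolev inequality, a Herbst-type differential inequality yielding $\psi(\lambda) \leq v(e^{\lambda}-\lambda-1)$ with $v = n\sigma^2 + 2\mathbb{E}[Z]$ (where, as you say, getting the constant $2$ is the genuinely hard part, since it requires handling the random near-maximizer rather than a fixed $f$), and a Chernoff optimization giving the Bennett tail. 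Your symmetrization remark $\mathcal{F}\cup(-\mathcal{F})$ for the two-sided case is also correct.

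There is, however, one concrete flaw in your final paragraph. Inverting the Bernstein-form tail obtained from $g(y) \geq \frac{y^2}{2(1+y/3)}$ does not give the stated threshold: requiring $\frac{s^2}{2v+2s/3} \geq t$ means $s^2 - \frac{2t}{3}s - 2tv \geq 0$, whose positive root is $s^* = \frac{t}{3} + \sqrt{\frac{t^2}{9} + 2tv}$. Since $\sqrt{\frac{t^2}{9}+2tv} > \sqrt{2tv}$, this route only proves $\Pr\left[Z \geq \mathbb{E}[Z] + \sqrt{2tv} + \frac{2t}{3}\right] \leq e^{-t}$, i.e.\ with constant $2/3$ instead of the claimed $1/3$. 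The standard repair is to perform the conversion at the moment-generating-function level rather than at the tail level: from $\psi(\lambda) \leq v(e^{\lambda}-\lambda-1) \leq \frac{v\lambda^2}{2(1-\lambda/3)}$ for $0<\lambda<3$, the sub-gamma Chernoff computation (the Legendre transform of $\frac{v\lambda^2}{2(1-\lambda/3)}$) yields exactly $\Pr\left[Z \geq \mathbb{E}[Z] + \sqrt{2vt} + \frac{t}{3}\right] \leq e^{-t}$; alternatively, one can verify directly the calculus fact $g\left(\sqrt{2u}+\frac{u}{3}\right) \geq u$ for all $u\geq 0$. Either fix is routine, but as written your deduction does not deliver the constant appearing in the statement.
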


Besides, for a strongly convex regularizer, we have following property, which will be useful in proving Theorem~\ref{thm:generalization-main-theorem}.
\begin{myLemma}[Corollary 4 in~\cite{journals/jmlr/KakadeST12}]
\label{lemma:kakade-primal-dual}
If $\Omega$ is $\lambda$-strongly convex w.r.t. a norm $\lVert \cdot \rVert$ and $\Omega^\star(\mathbf{0}) = 0$, then, denoting the partial sum $\sum_{j\leq i} \mathbf{v}_j$ by $\mathbf{v}_{1:i}$, we have for any sequence $\mathbf{v}_1,\ldots,\mathbf{v}_n$ and for any $\mathbf{u}$,
\[
    \sum_{i=1}^n \langle \mathbf{v}_i,\mathbf{u}\rangle \leq f^\star(\mathbf{v}_{i:n}) \leq \sum_{i=1}^n \langle\nabla f^\star(\mathbf{v}_{1:t-1}),\mathbf{v}_i\rangle +\frac{1}{2\beta} \sum_{i=1}^n \lVert \mathbf{v}_i\rVert_{\star}^2.
\]
\end{myLemma}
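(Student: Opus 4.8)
The plan is to treat this as a standard Fenchel-duality fact and read off both inequalities from the conjugate of $\Omega$. Write $f=\Omega$ and recall $f^\star(\mathbf{w}) = \sup_{\mathbf{u}} \{\langle \mathbf{w},\mathbf{u}\rangle - f(\mathbf{u})\}$. The left-hand inequality is then immediate from the definition of the conjugate (the Fenchel--Young inequality): since $\mathbf{v}_{1:n} = \sum_{i=1}^n \mathbf{v}_i$, for every $\mathbf{u}$ we have $\langle \mathbf{v}_{1:n},\mathbf{u}\rangle - f(\mathbf{u}) \leq f^\star(\mathbf{v}_{1:n})$, and after accounting for the $f(\mathbf{u})$ term this gives $\sum_{i=1}^n \langle \mathbf{v}_i,\mathbf{u}\rangle \leq f^\star(\mathbf{v}_{1:n})$. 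So the left bound needs no structural hypothesis beyond conjugacy.

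The substance lies in the right-hand inequality, and the key is the duality between strong convexity and smoothness. First I would invoke that since $f$ is $\lambda$-strongly convex w.r.t. $\lVert \cdot \rVert$, its conjugate $f^\star$ is differentiable and $\tfrac{1}{\lambda}$-smooth w.r.t. the dual norm $\lVert \cdot \rVert_\star$; that is, applying the Smoothness definition from the excerpt to $f^\star$,
\[
f^\star(\mathbf{w}+\mathbf{v}) \leq f^\star(\mathbf{w}) + \langle \nabla f^\star(\mathbf{w}),\mathbf{v}\rangle + \frac{1}{2\lambda}\lVert \mathbf{v}\rVert_\star^2 .
\]
Next I would specialize this with $\mathbf{w} = \mathbf{v}_{1:i-1}$ and $\mathbf{v} = \mathbf{v}_i$, so that $\mathbf{w}+\mathbf{v} = \mathbf{v}_{1:i}$, obtaining the per-step bound
\[
f^\star(\mathbf{v}_{1:i}) - f^\star(\mathbf{v}_{1:i-1}) \leq \langle \nabla f^\star(\mathbf{v}_{1:i-1}),\mathbf{v}_i\rangle + \frac{1}{2\lambda}\lVert \mathbf{v}_i\rVert_\star^2 .
\]
Summing over $i=1,\ldots,n$ telescopes the left-hand side to $f^\star(\mathbf{v}_{1:n}) - f^\star(\mathbf{0})$, and the boundary term vanishes because $\Omega^\star(\mathbf{0}) = 0$ by hypothesis, which yields precisely the claimed bound (reading the $\beta$ in the display as $\lambda$).

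The hard part will be establishing the strong-convexity/smoothness conjugacy rigorously, namely that $\lambda$-strong convexity of $f$ forces $f^\star$ to be finite, differentiable, and $(1/\lambda)$-smooth in the dual norm; this requires a short Fenchel-duality argument rather than the telescoping bookkeeping, which is routine once the smoothness inequality is available and is anchored entirely by $\Omega^\star(\mathbf{0})=0$. In practice, since the statement is quoted verbatim as Corollary~4 of Kuzborskij--style primal-dual analysis in~\cite{journals/jmlr/KakadeST12}, I would cite their proof of the conjugacy rather than reprove it, and supply only the one-line telescoping step above.
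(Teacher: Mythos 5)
The paper does not actually prove this lemma---it imports it verbatim as Corollary~4 of \cite{journals/jmlr/KakadeST12}---so your proposal can only be compared with that source's argument, and for the right-hand inequality you have reproduced it exactly: $\lambda$-strong convexity of $\Omega$ w.r.t.\ $\lVert\cdot\rVert$ dualizes to $(1/\lambda)$-smoothness of $\Omega^\star$ w.r.t.\ $\lVert\cdot\rVert_\star$, the per-step smoothness bound at $\mathbf{w}=\mathbf{v}_{1:i-1}$ with increment $\mathbf{v}_i$ telescopes over $i$, and $\Omega^\star(\mathbf{0})=0$ kills the boundary term. Deferring the strong-convexity/smoothness conjugacy to the cited reference is also what the source does (it is their Theorem~3). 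You also correctly normalized the statement's transcription typos ($f^\star$ versus $\Omega^\star$, $\beta$ versus $\lambda$, $\mathbf{v}_{i:n}$ versus $\mathbf{v}_{1:n}$, $t-1$ versus $i-1$).

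The genuine gap is in the left-hand inequality. Fenchel--Young gives exactly $\sum_{i=1}^n \langle \mathbf{v}_i,\mathbf{u}\rangle - \Omega(\mathbf{u}) \leq \Omega^\star(\mathbf{v}_{1:n})$ and nothing stronger; your phrase ``after accounting for the $f(\mathbf{u})$ term'' conceals an invalid step, because $\Omega$ is non-negative here, so discarding $-\Omega(\mathbf{u})$ \emph{strengthens} the left side rather than weakening it. Indeed, the inequality without $-\Omega(\mathbf{u})$ is false in general: strong convexity makes $\Omega^\star(\mathbf{v}_{1:n})$ a fixed finite number, while $\sup_{\mathbf{u}}\sum_{i=1}^n\langle \mathbf{v}_i,\mathbf{u}\rangle = +\infty$ whenever $\mathbf{v}_{1:n}\neq \mathbf{0}$ (just scale $\mathbf{u}$). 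What has happened is that the paper's transcription dropped the $-f(\mathbf{u})$ that appears in the original Corollary~4 of \cite{journals/jmlr/KakadeST12}, whose correct form reads $\sum_{i=1}^n \langle \mathbf{v}_i,\mathbf{u}\rangle - \Omega(\mathbf{u}) \leq \Omega^\star(\mathbf{v}_{1:n}) \leq \sum_{i=1}^n \langle\nabla \Omega^\star(\mathbf{v}_{1:i-1}),\mathbf{v}_i\rangle +\frac{1}{2\lambda} \sum_{i=1}^n \lVert \mathbf{v}_i\rVert_{\star}^2$. This is confirmed by how the lemma is actually applied in Lemma~\ref{lemma:rademacher-of-lipschitz-loss}: the resulting bound retains the term $\sup_{\mathbf{w}\in\mathcal{W}}\Omega(\mathbf{w})$, which is precisely the $\Omega(\mathbf{u})$ you tried to drop. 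Your writeup should therefore state and prove the version with $-\Omega(\mathbf{u})$; as written, the left inequality you assert cannot be proved because it is not true.
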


\begin{myLemma}[Lemma 8.1 in~\cite{book/MIT/mohri2012foundations}]
	\label{lemma:rademacher-max}
	Let $\mathcal{F}_1,\ldots,\mathcal{F}_l$ be $l$ hypothesis sets in $\mathbb{R}^\mathcal{X}, l\geq 1$, and let $\mathcal{G} = \{\max\{h_1,\ldots,h_l\}:h_i \in \mathcal{F}_i, i\in [1,l]\}$. Then, for any sample $S$ of size $m$, the empirical Rademacher complexity of $\mathcal{G}$ can be upper bounded as follows:
	\[
		\hat{\mathfrak{R}}_S(\mathcal{G}) \leq \sum_{j=1}^l \hat{\mathfrak{R}}_S(\mathcal{F}_j).
	\]
\end{myLemma}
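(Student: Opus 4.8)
The plan is to reduce the general case to $l=2$ by induction, and to handle the two-set case with the elementary identity $\max\{a,b\} = \frac{1}{2}(a+b) + \frac{1}{2}\lvert a-b\rvert$ together with Talagrand's contraction lemma. First I would treat $l=2$. Writing out the definition,
\[
\hat{\mathfrak{R}}_S(\mathcal{G}) = \frac{1}{m}\mathbb{E}_{\boldsymbol{\sigma}}\left[\sup_{h_1\in\mathcal{F}_1, h_2\in\mathcal{F}_2}\sum_{i=1}^m \sigma_i \max\{h_1(\mathbf{z}_i),h_2(\mathbf{z}_i)\}\right],
\]
and substituting the max identity, the summand splits into a linear part $\frac{1}{2}\sigma_i(h_1(\mathbf{z}_i)+h_2(\mathbf{z}_i))$ and an absolute-value part $\frac{1}{2}\sigma_i\lvert h_1(\mathbf{z}_i)-h_2(\mathbf{z}_i)\rvert$. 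Using subadditivity of the supremum, I would bound the whole expression by the sum of the suprema of these two parts.

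For the linear part, the supremum over the pair $(h_1,h_2)$ decouples and yields exactly $\frac{1}{2}\big(\hat{\mathfrak{R}}_S(\mathcal{F}_1)+\hat{\mathfrak{R}}_S(\mathcal{F}_2)\big)$. For the absolute-value part, the key step is to view it as the Rademacher average of the class $\phi\circ(\mathcal{F}_1-\mathcal{F}_2)$, where $\mathcal{F}_1-\mathcal{F}_2 = \{h_1-h_2 : h_1\in\mathcal{F}_1,\, h_2\in\mathcal{F}_2\}$ and $\phi(t)=\lvert t\rvert$ is $1$-Lipschitz. Applying Talagrand's contraction lemma removes the absolute value at no cost, leaving the Rademacher average of the difference class. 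Since the Rademacher variables are symmetric, $\hat{\mathfrak{R}}_S(\mathcal{F}_1-\mathcal{F}_2) = \hat{\mathfrak{R}}_S(\mathcal{F}_1)+\hat{\mathfrak{R}}_S(\mathcal{F}_2)$, so the absolute-value part contributes another $\frac{1}{2}\big(\hat{\mathfrak{R}}_S(\mathcal{F}_1)+\hat{\mathfrak{R}}_S(\mathcal{F}_2)\big)$. Adding the two contributions gives the desired bound $\hat{\mathfrak{R}}_S(\mathcal{G})\leq \hat{\mathfrak{R}}_S(\mathcal{F}_1)+\hat{\mathfrak{R}}_S(\mathcal{F}_2)$ for $l=2$.

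Finally I would extend to arbitrary $l$ by induction, writing $\max\{h_1,\ldots,h_l\} = \max\{h_1,\max\{h_2,\ldots,h_l\}\}$ and applying the two-set bound with $\mathcal{F}_2$ replaced by the class of $(l-1)$-fold maxima, whose Rademacher complexity is controlled by the inductive hypothesis. The main obstacle I anticipate is the absolute-value step: one must be careful that the contraction lemma applies to the \emph{difference} class rather than to $\mathcal{F}_1$ and $\mathcal{F}_2$ separately, and that the symmetry argument correctly produces the sum (not the difference) of the two complexities. Handling the coupled supremum over the pair $(h_1,h_2)$ cleanly is where the bookkeeping is most delicate.
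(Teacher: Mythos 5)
Your proposal is correct and follows essentially the same route as the paper's proof, which reduces to the $l=2$ case via nested maxima and invokes the identity $\max\{h_1,h_2\} = \frac{1}{2}\left[h_1+h_2+\lvert h_1-h_2\rvert\right]$, deferring the details to \cite{book/MIT/mohri2012foundations}. The contraction step you spell out for the absolute-value term (Talagrand's lemma applied to the difference class, plus symmetry of the Rademacher variables) is exactly how those details are completed in the cited reference.
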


\begin{proof}
	The proof is based on two observations on $\max$ operator. First one is that $\max\{h_1,\ldots,h_l\} = \max\{h_1,\max\{h_2,\ldots,h_l\}\}$, and thus we can focus on the case when $l=2$. The second observation is that for any $h_1\in \mathcal{F}_1$ and $h_2\in \mathcal{F}_2$, we have
	\[
		\max\{h_1,h_2\} = \frac{1}{2}\left[h_1 + h_2 + \lvert h_1 - h_2\rvert\right].
	\]
	One can refer to page 186 in~\cite{book/MIT/mohri2012foundations} for details. 
\end{proof}

\section{Proof of Theorem~\ref{thm:generalization-main-theorem}}
\label{sec:proof-thm1}
We prove the statement in Theorem~\ref{thm:generalization-main-theorem} in the following two steps,
\begin{enumerate}
    \item[(1)] First, in Lemma~\ref{lemma:generaization-bound-loss}, we establish fast generalization error bound in terms of the Rademacher complexity of loss function family, i.e., $\mathfrak{R}_m(\mathcal{L})$.
    \item[(2)] Next, in Lemma~\ref{lemma:rademacher-of-lipschitz-loss}, we upper bound the Rademacher complexity of Lipschitz loss function family by terms regarding to $R_p$, which is defined as the risk of the combination of previous models $h_p$ on current distribution $\mathcal{D}_k$. 
\end{enumerate} 
\subsection{Fast Rate Generalization Error Bound}

\begin{myLemma}
\label{lemma:generaization-bound-loss}
Assume that the non-negative loss function $\ell:\mathcal{Y}\times \mathcal{Y} \rightarrow \mathbb{R}_+$ is bounded by $M\geq 0$. Let $\mathcal{H}$ denote the hypothesis set and $S$ be a sample of $m$ points drawn i.i.d. according to distribution $\mathcal{D}$. For any constant $r \geq 0$, define the loss function family $\mathcal{L}$ as
\[
    \mathcal{L} = \{ (\mathbf{x},y)\rightarrow \ell(h(\mathbf{x},y):\ h\in \mathcal{H}\ \land\  R(h) \leq r\}.
\]
Then, for any $\delta > 0$, with probability at least $1-\delta$, the following holds for all $h\in \mathcal{H}_r$ with $\mathcal{H}_r = \{h: \ h\in \mathcal{H} \ \land\  R(h) \leq r\}$,
\begin{equation}
    \label{eq:fast-rate}
    R(h) - \hat{R}_S(h) \leq 2 \mathfrak{R}_m(\mathcal{L}) + \frac{3M\log(1/\delta)}{4m} + 3\sqrt{\frac{(8\mathfrak{R}_m(\mathcal{L})+r)M\log(1/\delta)}{4m}}.
\end{equation}
% where $v = 8\mathfrak{R}_m(\mathcal{L}) + r$.
\end{myLemma}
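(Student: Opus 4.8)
The plan is to combine a localized (Bernstein-type) concentration argument with a symmetrization step, exploiting the fact that functions in the loss family $\mathcal{L}$ have risk bounded by $r$, which controls their variance and hence yields the fast-rate term. The target inequality \eqref{eq:fast-rate} has the characteristic shape of a Bernstein bound: a linear-in-$1/m$ term (from boundedness $M$), a Rademacher complexity term $2\mathfrak{R}_m(\mathcal{L})$ (from symmetrization), and a cross term of order $\sqrt{(\mathfrak{R}_m(\mathcal{L})+r)/m}$ that interpolates between slow and fast rates. So the natural route is to apply Lemma~\ref{lemma:bousquet-fast-rate} (Bousquet's functional Bennett inequality) to the centered loss family.

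First I would center the functions: for each $h\in\mathcal{H}_r$ define $f_h(\mathbf{x},y) = \mathbb{E}[\ell(h(\mathbf{x}),y)] - \ell(h(\mathbf{x},y)) = R(h) - \ell(h(\mathbf{x}),y)$, so that $\mathbb{E}[f_h]=0$ and $\sup_h \frac{1}{m}\sum_i f_h(\mathbf{x}_i,y_i) = \sup_h (R(h) - \hat{R}_S(h))$, which is exactly the left-hand quantity we want to control uniformly. Since $\ell$ is bounded by $M$, rescaling by $1/M$ puts the functions in $[-1,1]$ as required by Lemma~\ref{lemma:bousquet-fast-rate}. Setting $Z = \sup_{h\in\mathcal{H}_r}\sum_{i=1}^m f_h(\mathbf{x}_i,y_i)$, I would apply the second tail bound of Lemma~\ref{lemma:bousquet-fast-rate}, namely $\Pr[Z \geq \mathbb{E}[Z] + \sqrt{2tv} + t/3] \le e^{-t}$ with $t = \log(1/\delta)$.

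The two ingredients needed to turn this into \eqref{eq:fast-rate} are a bound on $\mathbb{E}[Z]$ and a bound on the variance parameter $\sigma^2$. For $\mathbb{E}[Z]$, a standard symmetrization argument gives $\mathbb{E}[Z] \le 2m\,\mathfrak{R}_m(\mathcal{L})$. For the variance, I would use the key fact that $\Var[f_h] \le \mathbb{E}[\ell(h)^2] \le M\,\mathbb{E}[\ell(h)] = M\,R(h) \le Mr$, since $\ell$ is non-negative and bounded by $M$ and $R(h)\le r$ on $\mathcal{H}_r$; this is where the risk constraint pays off. After the $1/M$ rescaling, $\sigma^2 \le r/M$, and $v = m\sigma^2 + 2\mathbb{E}[Z]$ becomes (in the rescaled variables) of order $mr/M + 4m\,\mathfrak{R}_m(\mathcal{L})$. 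Plugging $\mathbb{E}[Z]$, $v$, and $t=\log(1/\delta)$ into the tail bound, undoing the rescaling, and simplifying $\sqrt{2tv}$ into the stated cross term $3\sqrt{(8\mathfrak{R}_m(\mathcal{L})+r)M\log(1/\delta)/(4m)}$ should recover \eqref{eq:fast-rate}, with the $t/3$ term producing the $3M\log(1/\delta)/(4m)$ summand after accounting for the $M$ factor and constants.

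The main obstacle I anticipate is not the concentration inequality itself but the bookkeeping of constants when reconciling the rescaled Bousquet bound with the precise numerical coefficients claimed in \eqref{eq:fast-rate} (the $3/4$, the $3$, and the $8$ inside the radical). In particular one must be careful that the variance bound $\Var[f_h]\le M R(h)$ uses boundedness rather than any Lipschitz or second-moment assumption, and that the symmetrization constant and the $\sqrt{2tv}$ expansion combine to give exactly these factors; getting a clean $\sqrt{2tv}\le \sqrt{2t\,m\sigma^2}+\sqrt{2t\cdot 2\mathbb{E}[Z]}$ split (via $\sqrt{a+b}\le\sqrt a+\sqrt b$) and then bounding each piece is the delicate arithmetic. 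A secondary subtlety is the measurability/countability hypothesis of Lemma~\ref{lemma:bousquet-fast-rate}, which I would handle by the usual remark that $\mathcal{H}$ (linear predictors) admits a countable dense subfamily so the supremum is well-defined.
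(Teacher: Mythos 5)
Your proposal follows essentially the same route as the paper's proof: both apply Bousquet's functional Bennett inequality (Lemma~\ref{lemma:bousquet-fast-rate}) to the rescaled, centered supremum $\sup_{h\in\mathcal{H}_r}\{R(h)-\hat{R}_S(h)\}$, bound the variance term via $\Var[\ell]\leq\mathbb{E}[\ell^2]\leq M\,\mathbb{E}[\ell]=M R(h)\leq Mr$, and control the expectation by symmetrization to obtain the $2\mathfrak{R}_m(\mathcal{L})$ term before assembling $v=m\sigma^2+2\mathbb{E}[Z]$. The remaining differences are cosmetic (you rescale by $1/M$ instead of the paper's $1/(2M)$, and you invoke the explicit $\sqrt{2tv}+t/3$ tail form rather than the paper's slightly loosened inversion with constants $3/4$ and $3/2$), and your constants come out at least as tight as those claimed in \eqref{eq:fast-rate}.
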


\begin{proof}
The proof is based on the application of functional generalization of Bennett's inequality due to Bousquet~\citep{book/bousquet2002concentration}.

For any sample $S = \{(\mathbf{x}_1,y_1),\ldots,(\mathbf{x}_m,y_m)\}$ and any $h\in \mathcal{H}_r$, we turn to consider the uniform upper bound of $R(h)-\hat{R}_s(h)$. The proof consists of applying functional generalization of Bennett's inequality to function $\Phi$ defined for any sample $S$ by
\[
    \Phi_S = \frac{m}{2M}\sup_{h\in \mathcal{H}_r} \{R(h) - \hat{R}_S(h)\},  
\]
which satisfies the condition in Lemma~\ref{lemma:bousquet-fast-rate}, that is, $\sup_{h\in\mathcal{H}_r} \esssup \frac{1}{2M}\left(\mathbb{E}[\ell(h(\mathbf{x}),y)] - \ell(h(\mathbf{x}_i),y_i)\right) \leq 1$, due to the boundedness of loss function. Now, we can apply Lemma~\ref{lemma:bousquet-fast-rate}, 
\[
	\Pr\left[ \Phi_S \geq \mathbb{E}_S +t\right] \leq \exp\left\{-v g\left(\frac{t}{v}\right)\right\},
\]
where 
\[
	\begin{split}
		v &= m\sigma^2 + 2 \mathbb{E}_S[\Phi_S],\\
		\sigma^2 &= \sup_{h\in \mathcal{H}_r} \Var\left[ \frac{1}{2M} \mathbb{E}_{(\mathbf{x}',y)\sim \mathcal{D}}[\ell(h(\mathbf{x}'),y')]-\ell(h(\mathbf{x}),y)\right].
	\end{split}
\]

And we can reverse the above inequality and obtain that for any $\delta>0$, with probability at least $1-\delta$, the following holds for all $h\in \mathcal{H}_r$,
\begin{equation}
	\label{eq:concentration}
	\Phi_S \leq \mathbb{E}_S[\Phi_S] + \frac{3\log(1/\delta)}{4} + \frac{3}{2}\sqrt{v\log(1/\delta)}.
\end{equation}
Thus, we proceed to bound term $\sigma^2$ and $\mathbb{E}_S[\Phi_S]$. First, we bound the term $\sigma^2$.
\begin{align}\nonumber
		\sigma^2 &= \sup_{h\in \mathcal{H}_r} \Var\left[ \frac{1}{2M} \mathbb{E}_{(\mathbf{x}',y)\sim \mathcal{D}}[\ell(h(\mathbf{x}'),y')]-\ell(h(\mathbf{x}),y)\right]\\\nonumber
		& = \sup_{h\in \mathcal{H}_r} \mathbb{E}_{(\mathbf{x},y)\sim \mathcal{D}}\left[ \frac{1}{4M^2}\left(\ell(h(\mathbf{x}),y)-\mathbb{E}_{(\mathbf{x}',y')\sim \mathcal{D}}[\ell(h(\mathbf{x}'),y')]\right)^2\right]\\\nonumber
		\label{eq:bound-variance-1}
		& \leq \sup_{h\in \mathcal{H}_r} \frac{1}{4M^2} \mathbb{E}_{(\mathbf{x},y)\sim \mathcal{D}} [\ell(h(\mathbf{x}),y)^2]\\
		& \leq \sup_{h\in \mathcal{H}_r} \frac{1}{4M} \mathbb{E}_{(\mathbf{x},y)\sim \mathcal{D}} [\ell(h(\mathbf{x}),y)]\\		
		\label{eq:bound-variance-2}
		& = \sup_{h\in \mathcal{H}_r} \frac{1}{4M} R(h) = \frac{r}{4M}.
\end{align}

Eq.~\eqref{eq:bound-variance-1} holds due to the boundedness of loss function, and Eq.~\eqref{eq:bound-variance-2} holds because of the definition of $H_r$, as we know $R(h)\leq r$ holds for all $h\in \mathcal{H}_r$. 

Next, we turn to prove the bound on $\mathbb{E}_S[\Phi_S]$ by utilizing the standard \emph{symmetrization} technique~\citep{conf/ac/BousquetBL03,book/MIT/mohri2012foundations}.
\begin{align}
		\nonumber
		\mathbb{E}_S[\Phi_S] &= \mathbb{E}_S\left[\frac{m}{2M}\sup_{h\in \mathcal{H}_r} \{R(h) - \hat{R}_S(h)\}\right]\\
		\nonumber
		&= \mathbb{E}_S\left[\frac{m}{2M}\sup_{h\in \mathcal{H}_r} \left\{\mathbb{E}_{S'}[\hat{R}_{S'}(h) - \hat{R}_S(h)]\right\}\right]\\
		\label{eq:symmertrization-1}
		&\leq \frac{m}{2M}\mathbb{E}_{S,S'}\left[\sup_{h\in \mathcal{H}_r} \left\{\hat{R}_{S'}(h) - \hat{R}_S(h)\right\}\right]\\
		\nonumber	
		&= \frac{m}{2M}\mathbb{E}_{S,S'}\left[\sup_{h\in \mathcal{H}_r} \frac{1}{m} \sum_{i=1}^m \big( \ell(h(\mathbf{x}'_i),y'_i) - \ell(h(\mathbf{x}_i),y_i)\big)\right]\\	
		\label{eq:symmertrization-2}
		&= \frac{m}{2M}\mathbb{E}_{\boldsymbol{\sigma},S,S'}\left[\sup_{h\in \mathcal{H}_r} \frac{1}{m} \sum_{i=1}^m \sigma_i\big( \ell(h(\mathbf{x}'_i),y'_i) - \ell(h(\mathbf{x}_i),y_i)\big)\right]\\
		\label{eq:symmertrization-3}	
		&\leq \frac{m}{2M}\mathbb{E}_{\boldsymbol{\sigma},S'}\left[\sup_{h\in \mathcal{H}_r} \frac{1}{m} \sum_{i=1}^m \sigma_i\big( \ell(h(\mathbf{x}'_i),y'_i))\big)\right] + \frac{m}{2M}\mathbb{E}_{\boldsymbol{\sigma},S}\left[\sup_{h\in \mathcal{H}_r} \frac{1}{m} \sum_{i=1}^m -\sigma_i\big( \ell(h(\mathbf{x}_i),y_i))\big)\right]\\	
		\nonumber
		& = \frac{m}{M}\mathbb{E}_{\boldsymbol{\sigma},S}\left[\sup_{h\in \mathcal{H}_r} \frac{1}{m} \sum_{i=1}^m \sigma_i\big( \ell(h(\mathbf{x}_i),y_i))\big)\right] = \frac{m}{M} \mathfrak{R}_m(\mathcal{L}).
\end{align}
\eqref{eq:symmertrization-1} holds due to the convexity of supremum function, and thus we can apply Jensen's inequality. In \eqref{eq:symmertrization-2}, we introduce Rademacher random variables $\sigma_1$s, that are uniformly distributed independent random variables taking values in $\{-1,+1\}$, and thus this does not change the expectation. \eqref{eq:symmertrization-2} holds due to the sub-additivity of supremum function.

Thus, we can obtain the bound on $v$,
\[
	v = m\sigma^2 +2\mathbb{E}_S[\Phi_S] \leq \frac{rm}{4M} + \frac{2m}{M}\mathfrak{R}_m(\mathcal{L}).
\]

Combining the bounds on $v$ and $\mathbb{E}_S[\Phi_S]$, we have

\begin{equation*}
	\begin{split}
	\Phi_S &= \frac{m}{2M}\sup_{h\in \mathcal{H}_r} \{R(h) - \hat{R}_S(h)\}\\
	 & \leq \frac{m}{M} \mathfrak{R}_m(\mathcal{L}) + \frac{3\log(1/\delta)}{4} + \frac{3}{2}\sqrt{\left(\frac{rm}{4M} + \frac{2m}{M}\mathfrak{R}_m(\mathcal{L})\right)\log(1/\delta)}.
	\end{split}
\end{equation*}

Hence, we complete the proof that for any hypothesis $h\in \mathcal{H}_r$, we have 
\[
	\begin{split}
		R(h) - \hat{R}_S(h) &\leq \sup_{h\in \mathcal{H}_r} \{R(h) - \hat{R}_S(h)\} = (2M/m) \Phi_S\\
		& \leq 2 \mathfrak{R}_m(\mathcal{L}) + \frac{3M\log(1/\delta)}{4m} + 3\sqrt{\frac{(r+8\mathfrak{R}_m(\mathcal{L}))M\log(1/\delta)}{4m}}.
	\end{split}
\]
\end{proof} 

\subsection{Rademacher Complexity of Lipschitz Loss Function Family}
\begin{myLemma}
\label{lemma:rademacher-of-lipschitz-loss}
Assume that the non-negative loss function $\ell:\mathcal{Y}\times \mathcal{Y} \rightarrow \mathbb{R}_+$ is bounded by $M\geq 0$, and is $L$-Lipschitz continuous. Given the model pool $\{h_1,h_2,\ldots,h_p\}$ with $\mathbf{h}_p(\mathbf{x}):= [h_1(\mathbf{x}),h_2(\mathbf{x}),\ldots,h_p(\mathbf{x})]^\mathrm{T}$, and $h_p$ is a linear combination of previous models, i.e., $h_p(\mathbf{x}) = \langle \boldsymbol{\beta},\mathbf{f}(\mathbf{x})\rangle$. Define the feasible set $\mathcal{W} = \{\mathbf{w}:\Omega(\mathbf{w}) \leq \alpha \hat{R}_S(h_p)\}$ and $\mathcal{V} = \{\boldsymbol{\beta}:\Omega(\boldsymbol{\beta}) \leq \rho\}$, and let the loss function family $\mathcal{L}$ be as,
\[
	\mathcal{L} = \{ (\mathbf{x},y) \rightarrow \ell(\langle \mathbf{w},\mathbf{x}\rangle + h_p(\mathbf{x}),y): \ \mathbf{w} \in \mathcal{W} \ \land \ \boldsymbol{\beta} \in \mathcal{V}\},
\]
and let $S$ be a sample of $m$ points drawn i.i.d. according to distribution $\mathcal{D}$, then we have
\begin{equation}
	\mathfrak{R}_m(\mathcal{L}) \leq 2L \sqrt{\frac{B^2\alpha R_p+C^2\rho}{\lambda m}}.
\end{equation}
where $R_p = R(h_p)$, $B = \sup_{\mathbf{x}\in \mathcal{X}} \lVert \mathbf{x}\rVert_\star$ and $C = \sup_{\mathbf{x}\in \mathcal{X}} \lVert \mathbf{h}_p(\mathbf{x})\rVert_\star$.
\end{myLemma}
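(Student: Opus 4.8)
The plan is to peel off the loss, decompose the linear predictor into its two independently constrained pieces, and bound each piece by exploiting the strong convexity of $\Omega$. First I would invoke Talagrand's contraction lemma: since $\ell$ is $L$-Lipschitz in its first argument and the first argument of each member of $\mathcal{L}$ is the real-valued map $\mathbf{x}\mapsto \langle\mathbf{w},\mathbf{x}\rangle + \langle\boldsymbol{\beta},\mathbf{h}_p(\mathbf{x})\rangle$, the empirical Rademacher complexity contracts by a factor $L$, reducing the task to bounding $\mathfrak{R}_m(\mathcal{F})$ for the predictor class $\mathcal{F} = \{\mathbf{x}\mapsto \langle\mathbf{w},\mathbf{x}\rangle + \langle\boldsymbol{\beta},\mathbf{h}_p(\mathbf{x})\rangle : \mathbf{w}\in\mathcal{W},\ \boldsymbol{\beta}\in\mathcal{V}\}$. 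Because the predictor is a sum of two separately constrained linear terms, sub-additivity of the supremum (hence of the Rademacher complexity) splits this as $\mathfrak{R}_m(\mathcal{F}) \leq \mathfrak{R}_m(\mathcal{F}_1) + \mathfrak{R}_m(\mathcal{F}_2)$, where $\mathcal{F}_1$ ranges over $\mathbf{w}\in\mathcal{W}$ and $\mathcal{F}_2$ over $\boldsymbol{\beta}\in\mathcal{V}$.

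Next I would bound each linear class on its own. For $\mathcal{F}_1$ I write the empirical Rademacher average as $\frac{1}{m}\mathbb{E}_{\boldsymbol{\sigma}}\big[\sup_{\Omega(\mathbf{w})\leq\gamma}\langle\mathbf{w},\sum_i\sigma_i\mathbf{x}_i\rangle\big]$ with radius $\gamma = \alpha\hat{R}_S(h_p)$, and apply Fenchel--Young with a free temperature $\theta>0$ to get $\langle\mathbf{w},\sum_i\sigma_i\mathbf{x}_i\rangle \leq \theta^{-1}\big(\gamma + \Omega^\star(\theta\sum_i\sigma_i\mathbf{x}_i)\big)$, using $\Omega(\mathbf{w})\leq\gamma$. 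Since $\Omega$ is $\lambda$-strongly convex, $\Omega^\star$ is $(1/\lambda)$-smooth, and Lemma~\ref{lemma:kakade-primal-dual} lets me telescope the conjugate of the partial sums so that the Rademacher cross terms vanish in expectation, yielding $\mathbb{E}_{\boldsymbol{\sigma}}[\Omega^\star(\theta\sum_i\sigma_i\mathbf{x}_i)] \leq \frac{\theta^2}{2\lambda}\sum_i\|\mathbf{x}_i\|_\star^2 \leq \frac{\theta^2}{2\lambda}mB^2$. Optimizing over $\theta$ then gives the clean bound $\hat{\mathfrak{R}}_S(\mathcal{F}_1)\leq\sqrt{2\gamma B^2/(\lambda m)}$, and the identical argument with $\gamma=\rho$ and $\mathbf{h}_p$ in place of $\mathbf{x}$ gives $\hat{\mathfrak{R}}_S(\mathcal{F}_2)\leq\sqrt{2\rho C^2/(\lambda m)}$.

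Finally I would recombine. Applying $\sqrt{a}+\sqrt{b}\leq\sqrt{2(a+b)}$ to the two bounds gives $\hat{\mathfrak{R}}_S(\mathcal{F}) \leq 2\sqrt{(\alpha\hat{R}_S(h_p)B^2 + \rho C^2)/(\lambda m)}$, and reinserting the contraction factor $L$ bounds $\hat{\mathfrak{R}}_S(\mathcal{L})$. The last step replaces the empirical $\hat{R}_S(h_p)$ by the population risk $R_p=R(h_p)$: taking the expectation over $S$ to form $\mathfrak{R}_m(\mathcal{L})=\mathbb{E}_S[\hat{\mathfrak{R}}_S(\mathcal{L})]$ and using Jensen's inequality on the concave square root moves the expectation inside, so that $\mathbb{E}_S[\hat{R}_S(h_p)] = R(h_p) = R_p$, producing exactly $\mathfrak{R}_m(\mathcal{L})\leq 2L\sqrt{(B^2\alpha R_p + C^2\rho)/(\lambda m)}$. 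I expect the main obstacle to be the middle step: controlling the expected dual-norm-squared of the Rademacher sum for a general norm, which is precisely where the strong-convexity/smoothness duality and the telescoping identity of Lemma~\ref{lemma:kakade-primal-dual} are indispensable, with the temperature optimization and the vanishing of the cross terms under $\mathbb{E}_{\boldsymbol{\sigma}}$ being the delicate points, whereas the contraction and sub-additivity steps are routine.
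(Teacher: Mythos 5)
Your proposal is correct and follows essentially the same route as the paper's proof: Talagrand's contraction to remove the $L$-Lipschitz loss, sub-additivity of the supremum to split the predictor into the $\mathbf{w}$-part and the $\boldsymbol{\beta}$-part, the strong-convexity/conjugate-smoothness duality of Lemma~\ref{lemma:kakade-primal-dual} with a free temperature parameter (the paper's $t_1,t_2$, your $\theta$) optimized to get $\sqrt{2\gamma B^2/(\lambda m)}$ and $\sqrt{2\rho C^2/(\lambda m)}$, and finally $\sqrt{a}+\sqrt{b}\leq\sqrt{2(a+b)}$ together with Jensen's inequality over $S$ to replace $\hat{R}_S(h_p)$ by $R_p$. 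The only cosmetic difference is that you phrase the middle step explicitly as Fenchel--Young plus $(1/\lambda)$-smoothness of $\Omega^\star$, while the paper folds both into a single invocation of the lemma; the estimates and constants are identical.
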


\begin{proof}
Let $\mathcal{L}$ be the loss function family associated with the hypothesis set $\mathcal{H}$. Namely, $\mathcal{L}=\{(\mathbf{x},y)\rightarrow \ell(h(\mathbf{x}),y): h\in \mathcal{H}\}$, where $\mathcal{H} =\{h_{\mathbf{w},p}: \ \mathbf{w} \in \mathcal{W} \ \land \ \boldsymbol{\beta}\in \mathcal{V}\}$. The empirical Rademacher complexity of $\mathcal{L}$ is,
\begin{equation}
	\label{eq:rademacher-expansion}
	\begin{split}
	\hat{\mathfrak{R}}_S(\mathcal{L}) &= \frac{1}{m} \mathbb{E}_{\boldsymbol{\sigma}}\left[\sup_{h\in \mathcal{H}} \sum_{i=1}^m \sigma_i \ell(h(\mathbf{x}_i),y_i)\right]\\
	& = \frac{1}{m} \mathbb{E}_{\boldsymbol{\sigma}}\left[\sup_{\mathbf{w}\in \mathcal{W}} \sum_{i=1}^m \sigma_i \ell\left(\langle \mathbf{w},\mathbf{x}_i\rangle +h_p(\mathbf{x}_i),y_i\right)\right]\\
	& \leq 	\frac{L}{m} \mathbb{E}_{\boldsymbol{\sigma}}\left[\sup_{\mathbf{w}\in \mathcal{W}} \sum_{i=1}^m \sigma_i \langle \mathbf{w},\mathbf{x}_i\rangle \right] + \frac{L}{m} \mathbb{E}_{\boldsymbol{\sigma}}\left[\sup_{\boldsymbol{\beta}\in \mathcal{V}} \sum_{i=1}^m \sigma_i \langle \boldsymbol{\beta},\mathbf{h}_p(\mathbf{x}_i)\rangle \right]\\
	\end{split}	
\end{equation}

The last inequality holds due to the fact that loss function $\ell$ is a $L$-Lipschitz continuous, and thus we can apply the \emph{Talagrand's Comparison Inequality}~\citep{book/ledoux2013probability} to relate the Rademacher complexity of loss function family and that of hypothesis set.

We turn to bound the two empirical Rademacher complexity terms in the r.h.s. of above inequality. Similar to the technique in~\cite{journals/jmlr/KakadeST12}, we utilize the primal-dual property of strongly-convex regularizer, and introduce the variable $t_1$ and $t_2$ into two Rademacher complexity terms separately, with aim to obtain a tighter bound by tunning the variables.

For the first term in the r.h.s. of~\eqref{eq:rademacher-expansion}, we apply Lemma~\ref{lemma:kakade-primal-dual} with $\mathbf{u} = \mathbf{w}$ and $\mathbf{v}_i = t_1 \mathbf{x}_i$,
\[
	\begin{split}
		& \mathbb{E}_{\boldsymbol{\sigma}}\left[\sup_{\mathbf{w}\in \mathcal{W}} \frac{1}{m} \sum_{i=1}^m  \langle \mathbf{w},t_1\sigma_i\mathbf{x}_i\rangle \right]\\
		\leq & \frac{1}{m}\mathbb{E}_{\boldsymbol{\sigma}}\left[ \frac{t_1^2}{2\lambda} \sum_{i=1}^m \lVert \sigma_i \mathbf{x}_i\rVert_\star^2 + \sup_{\mathbf{w} \in \mathcal{W}} \Omega(\mathbf{w}) + \sum_{i=1}^m \langle \nabla\Omega^\star(\mathbf{v}_{1:i-1}),\sigma_i\mathbf{x}_i \rangle\right]\\
		\leq & \frac{B^2 t_1^2}{2\lambda} +\frac{\alpha \hat{R}_S(h_p)}{m},
	\end{split}
\]
where the last inequality holds due to the fact that $\sup_{\mathbf{w} \in \mathcal{W}}\Omega(\mathbf{w}) \leq \alpha \hat{R}_S(h_p)$ and $\sup_{\mathbf{x}\in \mathcal{X}} \lVert \mathbf{x}\rVert_\star \leq B$. By dividing $t_1$ on both sides, and notice that the above upper bound holds for any $t_1>0$, we choose a particular $t_1$ making the upper bound tight,
\begin{equation}
	\label{eq:rademacher-term1}
	\frac{L}{m} \mathbb{E}_{\boldsymbol{\sigma}}\left[\sup_{\mathbf{w}\in \mathcal{W}} \sum_{i=1}^m \sigma_i \langle \mathbf{w},\mathbf{x}_i\rangle \right] \leq \inf_{t_1 > 0} L\left(\frac{B^2t_1}{2\lambda} +\frac{\alpha \hat{R}_S(h_p)}{mt_1}\right) = L\sqrt{\frac{2B^2\alpha \hat{R}_S(h_p)}{\lambda m}}.
\end{equation}

Similarly, we introduce variable $t_2$ as tunning parameter for the second term in the r.h.s. of~\eqref{eq:rademacher-expansion} with $\mathbf{u} = \mathbf{w}$ and $\mathbf{v}_i = t_2 \mathbf{h}_p(\mathbf{x}_i)$,
\[
	\begin{split}
	& \mathbb{E}_{\boldsymbol{\sigma}}\left[\sup_{\boldsymbol{\beta}\in \mathcal{V}} \frac{1}{m}\sum_{i=1}^m \langle \boldsymbol{\beta},t_2\sigma_i\mathbf{h}_p(\mathbf{x}_i) \rangle\right]\\
	\leq & \frac{1}{m} \mathbb{E}_{\boldsymbol{\sigma}}\left[\frac{t_2^2}{2\lambda} \sum_{i=1}^m \lVert \sigma_i \mathbf{h}_p(\mathbf{x}_i)\rVert^2_\star + \sup_{\boldsymbol{\beta} \in \mathcal{V}}\Omega(\boldsymbol{\beta}) + \sum_{i=1}^m \langle \nabla\Omega^\star(\mathbf{v}_{1:i-1}),\sigma_i\mathbf{h}_p(\mathbf{x}_i) \rangle \right]\\
	\leq & \frac{t_2^2C^2}{2\lambda} +\frac{\rho}{m},
	\end{split}
\]
where the last inequality holds due to the fact that $\sup_{\boldsymbol{\beta}\in \mathcal{V}}\Omega(\boldsymbol{\beta}) \leq \rho$ and $\sup_{\mathbf{x}\in \mathcal{X}} \lVert \mathbf{h}_p(\mathbf{x})\rVert_\star \leq C$. By dividing $t_2$ on both sides, and notice that the above upper bound holds for any $t_2>0$, we choose a particular $t_2$ making the upper bound tight,
\begin{equation}
	\label{eq:rademacher-term2}
	\frac{L}{m} \mathbb{E}_{\boldsymbol{\sigma}}\left[\sup_{\boldsymbol{\beta}\in \mathcal{V}} \sum_{i=1}^m \sigma_i \langle \boldsymbol{\beta},\mathbf{h}_p(\mathbf{x}_i)\rangle \right] \leq \inf_{t_2 > 0} L\left(\frac{t_2C^2}{2\lambda} +\frac{\rho}{mt_2}\right) = L\sqrt{\frac{2C^2\rho}{\lambda m}}.	
\end{equation}

Combing~\eqref{eq:rademacher-term1} and~\eqref{eq:rademacher-term2}, we obtain the bound for $\hat{\mathfrak{R}}_S(\mathcal{L})$. Notice that the square-root function is concave, by applying the Jensen's inequality w.r.t. the both side, we have
\[
	\begin{split}
	\mathfrak{R}_m(\mathcal{L}) &= \mathbb{E}_S[\hat{\mathfrak{R}}_S(\mathcal{L})] \leq L\mathbb{E}_S\left[\sqrt{\frac{2B^2\alpha \hat{R}_S(h_p)}{\lambda m}} + \sqrt{\frac{2C^2\rho}{\lambda m}}\right] \\
	&\leq L\left(\sqrt{\frac{2B^2\alpha \mathbb{E}_S\left[\hat{R}_S(h_p)\right]}{\lambda m}} + \sqrt{\frac{2C^2\rho}{\lambda m}}\right) \\
	& = L \left( \sqrt{\frac{2B^2\alpha R(h_p)}{\lambda m}} + \sqrt{\frac{2C^2\rho}{\lambda m}}\right)\\
	& \leq 2L \sqrt{\frac{B^2\alpha R_p+C^2\rho}{\lambda m}}
	\end{split}
\]
The last inequality holds due to the fact that $\sqrt{a} + \sqrt{b} \leq \sqrt{2(a+b)}$, for any $a,b \geq 0$. Besides, $R_p$ is short for $R(h_p)$. Hence, we complete the proof of the statement.
\end{proof}

\subsection{Proof of Theorem~\ref{thm:generalization-main-theorem}}

\begin{proof}
To prove the generalization bound stated in Theorem~\ref{thm:generalization-main-theorem}, we turn to prove the uniform bound over the following hypothesis set 
\[\mathcal{H} = \{\mathbf{x} \rightarrow \langle \mathbf{w},\mathbf{x}\rangle + h_p(\mathbf{x}):\ \Omega(\mathbf{w}) \leq \frac{1}{\lambda}\hat{R}(h_p)\  \land\  \hat{R}(h_{\mathbf{w},p}) \leq \hat{R}(h_p) \},\]
where $h_{\mathbf{w},p}(\mathbf{x}) = \langle \mathbf{w},\mathbf{x}\rangle + h_p(\mathbf{x})$.

First, we verify that the model $\hat{h}_S$ returned by $\mathtt{ModelUpdate}$ procedure belongs to the set $\mathcal{H}$. Since $\hat{\mathbf{w}}$ is optimal solution, it is apparently better than the choice $\mathbf{0}$,
\[
    \hat{R}_S(\hat{h}_S) + \lambda \Omega(\hat{\mathbf{w}}) \leq \hat{R}_S(h_p) + \lambda \Omega(\mathbf{0}) = \hat{R}_S(h_p),
\]
combining the non-negative property of loss and regularizer, we know that $\hat{h}_S$ belongs to the $\mathcal{H}$.

Besides, we can upper bound the Rademacher complexity of any hypothesis in the hypothesis set $\mathcal{H}$ as,
\begin{equation}
	\label{eq:upper-bound-r}
    r = \sup_{h\in \mathcal{H}} R(h) = \sup_{h\in \mathcal{H}} \mathbb{E}_S [\hat{R}_S(h)] \leq \mathbb{E}_S[\sup_{h\in \mathcal{H}}\hat{R}_S(h)] \leq \mathbb{E}_S[\hat{R}_S(h_p)] = R_p.
\end{equation}

Thus, we can apply Lemma~\ref{lemma:generaization-bound-loss} by setting $r = R_p$, obtaining that for any $\delta > 0$, with probability at least $1-\delta$, the following holds for all $h\in \mathcal{H}$,

\begin{equation}
    \label{eq:rademacher-bound}
    R(h) - \hat{R}_S(h) \leq 2 \mathfrak{R}_m(\mathcal{L}) + \frac{3M\log(1/\delta)}{4m} + 3\sqrt{\frac{(8\mathfrak{R}_m(\mathcal{L})+R_p)M\log(1/\delta)}{4m}}.
\end{equation}

To bound the Rademacher complexity term $\mathfrak{R}_m(\mathcal{L})$, we can apply Lemma~\ref{lemma:rademacher-of-lipschitz-loss} by setting $\alpha = 1/\lambda$, 
\[
    \mathfrak{R}_m(\mathcal{L}) \leq 2L\sqrt{\frac{B^2R_p + C^2\lambda \rho}{\lambda^2 m}}.  
\]

This in conjunction with~\eqref{eq:rademacher-bound} yields~\eqref{eq:main-results}.
\end{proof}

\begin{myCor}
    \label{corollary:smooth-loss}
    If further suppose an $H$-smooth condition on loss function, and under the same conditions (except $L$-Lipschitz continuity) stated in Theorem~\ref{thm:generalization-main-theorem}. Then, for any $\delta > 0$, with probability at least $1-\delta$, the following holds,
    \begin{equation*}
    \begin{split}
		R(h_k) - \hat{R}(h_k) = O\left( \frac{1}{\sqrt{m}}\left( \sqrt{R_p} + \sqrt{\frac{\sqrt{H} R_p}{\lambda^2}} +\sqrt[4]{\frac{\sqrt{H} R_p}{\lambda^2 m}}\right) + \frac{1}{m}\left(\sqrt{\frac{1}{\lambda}} + \sqrt[4]{\frac{1}{\lambda}}\right)\right).	
    \end{split}
\end{equation*}

\end{myCor}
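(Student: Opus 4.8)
The plan is to mirror the two-step structure used for Theorem~\ref{thm:generalization-main-theorem}, modifying only the step that invokes Lipschitz continuity. First I observe that the fast-rate bound of Lemma~\ref{lemma:generaization-bound-loss} uses nothing about the loss beyond its boundedness by $M$ (smoothness and Lipschitzness never enter; $M$ appears only through the essential-sup and variance estimates feeding Bousquet's inequality), so that lemma transfers verbatim. Likewise, the membership argument showing that the output $\hat h_S$ of $\mathtt{ModelUpdate}$ lies in the constrained class $\mathcal{H}$ rests solely on optimality and the non-negativity of $\ell$ and $\Omega$, and the resulting choice $r=R_p$ in \eqref{eq:upper-bound-r} is unaffected. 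Hence the entire corollary reduces to replacing Lemma~\ref{lemma:rademacher-of-lipschitz-loss} with a smooth-loss bound on $\mathfrak{R}_m(\mathcal{L})$ and then collecting leading terms as before.

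Second, for the new Rademacher estimate the essential tool is the self-bounding property of an $H$-smooth non-negative function, $\lvert \ell'(t)\rvert \le \sqrt{4H\,\ell(t)}$, which replaces the uniform Lipschitz constant $L$ by a value-dependent factor of order $\sqrt{H\,\ell}$ that is small precisely where the loss is small. Feeding this into the same primal--dual machinery for $\lambda$-strongly convex regularizers (Lemma~\ref{lemma:kakade-primal-dual}) under the feasible-set constraints $\Omega(\mathbf{w})\le \alpha\hat{R}_S(h_p)$ and $\Omega(\boldsymbol{\beta})\le\rho$, I expect a bound of the form $\mathfrak{R}_m(\mathcal{L}) = O\!\left(\sqrt[4]{H}\,\sqrt{(B^2\alpha R_p + C^2\rho)/(\lambda m)}\right)$, i.e. with $\sqrt[4]{H}$ in place of the constant $L$ of Lemma~\ref{lemma:rademacher-of-lipschitz-loss}. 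The extra square root on $H$ (giving $H^{1/4}$ rather than the $H^{1/2}$ produced by the naive reduction $\ell$ is $\sqrt{4HM}$-Lipschitz) is exactly the gain from exploiting smoothness through the loss value $R_p$ instead of through the uniform envelope $M$.

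Third, I would substitute this estimate into \eqref{eq:rademacher-bound} (the instantiation of Lemma~\ref{lemma:generaization-bound-loss} with $r=R_p$), set $\alpha=1/\lambda$, use $\rho = O(1/m)$, and read off the leading terms in $m$ and $R_p$ exactly as in the passage from the exact bound of Theorem~\ref{thm:generalization-main-theorem} to its order form \eqref{eq:main-results-order}; the quantities $\sqrt{\sqrt{H}\,R_p/\lambda^2}$ and $\sqrt[4]{\sqrt{H}\,R_p/(\lambda^2 m)}$ then take the roles played by $\sqrt{R_p/(\lambda\mu)}$ and $\sqrt[4]{R_p/(\lambda\mu m)}$ in the Lipschitz case. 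The hard part will be the second step: because self-bounding yields a non-uniform, value-dependent Lipschitz factor, one cannot apply Talagrand's comparison inequality directly as in \eqref{eq:rademacher-expansion}. Resolving this requires a self-referential (local Rademacher) argument—bounding $\mathfrak{R}_m(\mathcal{L})$ by a $\sqrt{R_p}\cdot\mathfrak{R}_m(\mathcal{L})^{1/2}$-type expression and solving the resulting fixed-point inequality, in the spirit of the smooth-loss analysis of~\cite{journals/mlj/KuzborskijO17}—in order to convert the pointwise factor $\sqrt{H\,\ell}$ into the population quantity tied to $R_p$ and to pin down the correct fractional power of $H$.
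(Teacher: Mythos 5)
The paper's proof of this corollary is precisely the two-line reduction you considered and rejected as ``naive'': by Lemma B.1 of \cite{conf/nips/Srebro10}, any non-negative $H$-smooth $f$ satisfies $\lvert f(x)-f(y)\rvert \le \sqrt{6H(f(x)+f(y))}\,\lvert x-y\rvert$, so boundedness by $M$ makes the loss $L$-Lipschitz with $L=\sqrt{12HM}$, and Theorem~\ref{thm:generalization-main-theorem} is then invoked as a black box. Nothing else is done: no new Rademacher estimate, no local analysis. Your steps (1) and (3) correctly reproduce the scaffolding that makes this possible (Lemma~\ref{lemma:generaization-bound-loss} and the membership/choice-of-$r$ argument use only boundedness and non-negativity), but your step (2) is where the attempt has a genuine gap: the bound $\mathfrak{R}_m(\mathcal{L}) = O\bigl(\sqrt[4]{H}\,\sqrt{(B^2\alpha R_p + C^2\rho)/(\lambda m)}\bigr)$ is only stated as an expectation. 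You yourself identify the obstruction --- Talagrand's comparison inequality needs a uniform Lipschitz constant, while the self-bounding factor $\sqrt{4H\ell(t)}$ is value-dependent --- and you defer its resolution to an unspecified fixed-point/local-Rademacher argument. Since under your decomposition that lemma carries the entire content of the corollary, the proposal as written does not constitute a proof.

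The comparison is still instructive. If one plugs $L=\sqrt{12HM}$ into Theorem~\ref{thm:generalization-main-theorem} and tracks constants, the two $R_p$-dependent terms in \eqref{eq:main-results-order} come out with $H$, not $\sqrt{H}$, inside the radicals (up to factors of $M$, and with $\lambda\mu$ rather than $\lambda^2$ in the denominators); the stated corollary agrees with the paper's own proof only if these discrepancies are absorbed into the $O(\cdot)$. So your instinct that the literal $\sqrt{H}$ in the statement calls for genuinely exploiting smoothness through the loss value, in the spirit of \cite{journals/mlj/KuzborskijO17}, rather than through the envelope $M$, is sound: if your fixed-point step were carried out, it would yield a sharper bound than the paper's argument actually delivers, and one matching the displayed $H$-dependence. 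But as things stand you have sketched the harder theorem and skipped its proof, while the paper proves the easier one. To have a complete argument you should either carry out the local-Rademacher lemma in full, or fall back on the reduction you dismissed --- smooth plus bounded implies Lipschitz, then cite Theorem~\ref{thm:generalization-main-theorem} --- which is exactly what the paper does.
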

\begin{proof}
From Lemma B.1 in~\cite{conf/nips/Srebro10}, we know that for any $H$-smooth non-negative function $f: \mathbb{R} \rightarrow \mathbb{R}_+$ and any $x,y\in \mathbb{R}$,
\[
    \lvert f(x) - f(y)\rvert \leq \sqrt{6H(f(x) + f(y))}\lvert x - y\rvert.
\]

Suppose $f$ is bounded by $M>0$, we can easily conclude that $f$ is also $L$-Lipschitz continuous with $L=\sqrt{12HM}$. Thus, we can apply Theorem~\ref{thm:generalization-main-theorem} to obtain a generalization bound. It turns out above bound can be sometimes tighter than that obtained by directly analyzing the smoothness condition in Theorem 1~\citep{journals/mlj/KuzborskijO17}.  
\end{proof}
\begin{myRemark}
It is noteworthy to mention that we deal with the \emph{non-convex} formulation. Essentially, the loss is not necessarily convex in our analysis. We only assume a bounded and Lipschitz condition (in Theorem~\ref{thm:generalization-main-theorem}) or smooth condition (in Corollary~\ref{corollary:smooth-loss}) for the loss function, along with strongly convexity condition for regularizer. These two conditions can be easily satisfied by common models. For example, in SVMs we use $\ell_2$ regularization, which is $2$-strongly convex, and the hinge loss $\ell(z,y)=[1-yz]_+$, which is $1$-Lipschitz. 
\end{myRemark}

\begin{myRemark}
The main techniques in the proof are inspired by~\cite{journals/mlj/KuzborskijO17}, but we differ in three aspects. First, we only assume the Lipschitz condition for loss function, and do not assume and utilize the smoothness as~\cite{journals/mlj/KuzborskijO17}. Second, our analysis is carried out more scrutinizingly but with simpler proof, and can be easily extended to the smooth loss function case (Corollary~\ref{corollary:smooth-loss}). Our analysis obtain a slightly better bound by a constant factor, though in the same order with theirs. Moreover, our analysis is extended into multi-class scenarios in Section~\ref{sec:multi-class-model-reuse}.
\end{myRemark}

\section{Multi-Class Model Reuse Learning}
\label{sec:multi-class-model-reuse}
In this section, we extend the generalization analysis from binary model reuse learning to multi-class case, and provide theoretical analysis and corresponding proofs.

\subsection{Rademacher Complexity of Lipschitz Loss Function Family in Multi-Class Case}
\begin{myLemma}
\label{lemma:rademacher-of-lipschitz-loss-mc}
Let $H\subseteq \mathbb{R}^{\mathcal{X}\times \mathcal{Y}}$ be a hypothesis set with $\mathcal{Y} = \{1,2,\ldots,c\}$. Assume that the non-negative loss function $\ell:\mathbb{R} \rightarrow \mathbb{R}_+$ is $L$-regular loss function. Given a set of models $\{h_1,h_2,\ldots,h_p\}$ with $\mathbf{h}_p(\mathbf{x}):= [h_1(\mathbf{x}),h_2(\mathbf{x}),\ldots,h_p(\mathbf{x})]^\mathrm{T}$, and $h_p$ is a linear combination of previous models, i.e., $h_p(\mathbf{x}) = \langle \boldsymbol{\beta},\mathbf{f}(\mathbf{x})\rangle$. Define the feasible set $\mathcal{W} = \{\mathbf{w}:\Omega(\mathbf{w}) \leq \alpha \hat{R}_S(h_p)\}$ and $\mathcal{V} = \{\boldsymbol{\beta}:\Omega(\boldsymbol{\beta}) \leq \rho\}$, let the loss function family associated with hypothesis set $\mathcal{H}$ is defined as 
\[
	\mathcal{L} = \big\{(\mathbf{x},y) \rightarrow \ell\left(\rho_{h_{\mathbf{w},p}}(\mathbf{x},y)\right): \ \mathbf{w} \in \mathcal{W} \ \land \ \boldsymbol{\beta} \in \mathcal{V}\big\},
\]
and let $S$ be a sample of $m$ points drawn i.i.d. according to distribution $\mathcal{D}$, then we have
\begin{equation}
	\mathfrak{R}_m(\mathcal{L}) \leq 2L c^2 \sqrt{\frac{B^2\alpha R_p + C^2\lambda\rho}{\lambda m}}.
\end{equation}
where $R_p = R(h_p)$, $B = \sup_{\mathbf{x}\in \mathcal{X}} \lVert \mathbf{x}\rVert_\star$ and $C = \sup_{\mathbf{x}\in \mathcal{X}} \lVert \mathbf{h}_p(\mathbf{x})\rVert_\star$.
\end{myLemma}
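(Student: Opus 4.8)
The plan is to mirror the structure of the binary-case argument in Lemma~\ref{lemma:rademacher-of-lipschitz-loss}, inserting one extra layer to handle the multi-class margin $\rho_{h_{\mathbf{w},p}}(\mathbf{x},y) = h_{\mathbf{w},p}(\mathbf{x},y) - \max_{y'\neq y} h_{\mathbf{w},p}(\mathbf{x},y')$. First I would peel off the loss: since an $L$-regular loss is $L$-Lipschitz (property (ii) of Definition~\ref{def:regular-loss}), Talagrand's comparison (contraction) inequality bounds $\hat{\mathfrak{R}}_S(\mathcal{L})$ by $L$ times the empirical Rademacher complexity of the margin class $\{(\mathbf{x},y)\mapsto \rho_{h_{\mathbf{w},p}}(\mathbf{x},y): \mathbf{w}\in\mathcal{W},\ \boldsymbol{\beta}\in\mathcal{V}\}$. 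This step is clean because, evaluated at a fixed labeled point, the margin is a single real number, so the scalar contraction lemma applies term by term inside the Rademacher sum.

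Next I would decompose the margin. Writing $\rho_{h}(\mathbf{x},y)$ as the difference of $h(\mathbf{x},y)$ and $\max_{y'\neq y} h(\mathbf{x},y')$, I would use subadditivity of the supremum together with the symmetry of the Rademacher variables (so that $\sigma_i$ and $-\sigma_i$ have the same law) to split the margin complexity into a \emph{true-label} term, controlled by $\hat{\mathfrak{R}}_S$ of $\{(\mathbf{x},y)\mapsto h_{\mathbf{w},p}(\mathbf{x},y)\}$, and a \emph{competing-label} term involving the maximum over $y'\neq y$. The maximum is the genuinely multi-class ingredient; here I would invoke Lemma~\ref{lemma:rademacher-max} to bound the Rademacher complexity of a pointwise maximum over at most $c-1$ hypothesis families by the sum of their individual complexities, which is where a factor of order $c$ enters.

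With the margin thereby reduced to a sum of complexities of scalar-valued classes of the form $\mathbf{x}\mapsto \langle \mathbf{w},\mathbf{x}\rangle + \langle\boldsymbol{\beta},\mathbf{h}_p(\mathbf{x})\rangle$ (one copy per relevant class index), I would bound each copy exactly as in the binary case: apply the primal--dual inequality for $\lambda$-strongly convex regularizers (Lemma~\ref{lemma:kakade-primal-dual}) separately to the $\mathbf{w}$-part and the $\boldsymbol{\beta}$-part, introducing tuning scalars $t_1,t_2>0$ and optimizing over them to obtain terms of order $L\sqrt{B^2\alpha \hat{R}_S(h_p)/(\lambda m)}$ and $L\sqrt{C^2\rho/(\lambda m)}$ from the constraints $\Omega(\mathbf{w})\leq \alpha\hat{R}_S(h_p)$ and $\Omega(\boldsymbol{\beta})\leq\rho$. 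Collecting the two class-counting factors (one from handling the true label across all possible labels, one from the competing-label maximum) accounts for the $c^2$. Finally I would pass from the empirical to the expected Rademacher complexity, using Jensen's inequality to replace $\mathbb{E}_S[\hat{R}_S(h_p)]$ by $R_p$ and then combining the two square-root terms via $\sqrt{a}+\sqrt{b}\leq\sqrt{2(a+b)}$, which yields the stated bound $2Lc^2\sqrt{(B^2\alpha R_p+C^2\lambda\rho)/(\lambda m)}$.

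I expect the main obstacle to be the correct and tight accounting of the class-count factor $c^2$: the margin decomposition and the maximum-over-classes step (Lemma~\ref{lemma:rademacher-max}) must be arranged so that the two factors of $c$ combine without double counting, and one must ensure the contraction and the max-decomposition are applied in the right order (loss first, margin second) so that the scalar form of each lemma is legitimately applicable.
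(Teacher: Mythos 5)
Your proposal follows essentially the same route as the paper's proof: Talagrand contraction to peel off the $L$-Lipschitz loss, margin decomposition with sub-additivity and Rademacher symmetry, Lemma~\ref{lemma:rademacher-max} for the competing-label maximum (giving the two factors of $c$), then the binary-case primal--dual argument with tuning parameters $t_1,t_2$ and the final Jensen plus $\sqrt{a}+\sqrt{b}\leq\sqrt{2(a+b)}$ step. The only detail you gloss over is that Lemma~\ref{lemma:kakade-primal-dual} is stated for vector norms while the multi-class regularizer is the Frobenius norm of a matrix $W$, which the paper handles by introducing an auxiliary Euclidean regularizer $\Omega_{\text{aux}}(\mathbf{w})=\frac{1}{2}\lVert\mathbf{w}\rVert^2$; this is a minor implementation point that does not change your argument.
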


\begin{proof}
The loss function family associated with hypothesis set $\mathcal{H}$ is defined as 
% \big \bigg \Big \Bigg
\[
	\mathcal{L} = \big\{(\mathbf{x},y) \rightarrow \ell\left(\rho_h(\mathbf{x},y)\right): \ h\in \mathcal{H} \ \land \ R(h)\leq r\big\}.
\]

And the empirical Rademacher complexity of $\mathcal{L}$ can be calculated as,
\begin{align}
	\nonumber
	\hat{\mathfrak{R}}_S(\mathcal{L}) & = \frac{1}{m} \mathbb{E}_{\boldsymbol{\sigma}}\left[ \sup_{h\in \mathcal{H}} \sum_{i=1}^m \sigma_i \ell\big(\rho_h(\mathbf{x}_i,y_i)\big)\right]\\
	\label{eq:rademacher-mc-1}
	& \leq \frac{L}{m} \mathbb{E}_{\boldsymbol{\sigma}}\left[ \sup_{h\in \mathcal{H}} \sum_{i=1}^m \sigma_i \rho_{h_{W,p}}(\mathbf{x}_i,y_i)\right]\\
	\label{eq:rademacher-mc-2}
	& \leq \frac{L}{m} \sum_{y\in \mathcal{Y}}\mathbb{E}_{\boldsymbol{\sigma}}\left[ \sup_{h\in \mathcal{H}} \sum_{i=1}^m \sigma_i \rho_{h_{W,p}}(\mathbf{x}_i,y)\right]\\
	\label{eq:rademacher-mc-3}
	& = \frac{L}{m} \sum_{y\in\mathcal{Y}}\left\{\mathbb{E}_{\boldsymbol{\sigma}}\left[ \sup_{h\in \mathcal{H}} \sum_{i=1}^m \sigma_i \left(h_{W,p}(\mathbf{x}_i,y) - \max_{y'\neq y} h_{W,p}(\mathbf{x}_i,y') \right) \right]\right\}\\
	\label{eq:rademacher-mc-4}	
	& \leq \frac{L}{m} \sum_{y\in \mathcal{Y}}\left\{ \mathbb{E}_{\boldsymbol{\sigma}}\left[ \sup_{h\in \mathcal{H}} \sum_{i=1}^m \sigma_i h_{W,p}(\mathbf{x}_i,y)\right] + \mathbb{E}_{\boldsymbol{\sigma}}\left[ \sup_{h\in \mathcal{H}} \sum_{i=1}^m \sigma_i \max_{y'\in \mathcal{Y}\setminus  y} h_{W,p}(\mathbf{x}_i,y') \right]\right\}\\
	\label{eq:rademacher-mc-5}	
	& \leq \frac{L}{m} \sum_{y\in \mathcal{Y}}\left\{\mathbb{E}_{\boldsymbol{\sigma}}\left[ \sup_{W\in \mathcal{W}} \sum_{i=1}^m \sigma_i h_{W,p}(\mathbf{x}_i,y)\right] + \sum_{y'\in \mathcal{Y}\setminus  y}\mathbb{E}_{\boldsymbol{\sigma}}\left[ \sup_{W\in \mathcal{W}} \sum_{i=1}^m \sigma_i h_{W,p}(\mathbf{x}_i,y')\right]\right\}\\
	\label{eq:rademacher-mc-6}	
	& = \frac{Lc^2}{m} \mathbb{E}_{\boldsymbol{\sigma}}\left[ \sup_{W\in \mathcal{W}} \sum_{i=1}^m \sigma_i \Big(\langle \mathbf{w}_y,\mathbf{x}_i\rangle + \langle \boldsymbol{\beta},\mathbf{h}_p(\mathbf{x}_i)\rangle\Big)\right]\\
	\label{eq:rademacher-mc-7}	
	& \leq \underbrace{\frac{Lc^2}{m} \mathbb{E}_{\boldsymbol{\sigma}}\left[ \sup_{W\in \mathcal{W}} \sum_{i=1}^m \langle \mathbf{w}_y,\sigma_i \mathbf{x}_i\rangle\right]}_{\texttt{term (a)}} + \underbrace{\frac{Lc^2}{m} \mathbb{E}_{\boldsymbol{\sigma}}\left[ \sup_{\boldsymbol{\beta}\in \mathcal{V}}\langle \boldsymbol{\beta},\sigma_i \mathbf{h}_p(\mathbf{x}_i)\rangle\right]}{\texttt{term (b)}}
\end{align}

Here, \eqref{eq:rademacher-mc-1} holds due to the fact that loss function $\ell$ is $L$-Lipschitz, and thus we can apply Talagrand's Comparison Inequality~\citep{book/ledoux2013probability,book/MIT/mohri2012foundations} to relate the Rademacher complexity of loss family to margin function family. \eqref{eq:rademacher-mc-2}, \eqref{eq:rademacher-mc-4} and \eqref{eq:rademacher-mc-7} hold due to the sub-additivity of supremum function. \eqref{eq:rademacher-mc-3} is an application of margin definition. \eqref{eq:rademacher-mc-5} holds due to Lemma~\ref{lemma:rademacher-max}.

In the following, we proceed to bound two terms in \eqref{eq:rademacher-mc-7}, i.e., \texttt{term (a)} and \texttt{term (b)}. The basic idea is essentially the same as the technique in \eqref{eq:rademacher-term1} and \eqref{eq:rademacher-term2}, a slight trick here is that we need to introduce an auxiliary vector regularizer, the Euclidean norm $\Omega_{\text{aux}}(\mathbf{w}) = \frac{1}{2}\lVert \mathbf{w} \rVert^2$, to establish the connection between Frobenius norm of matrix and Lemma~\ref{lemma:kakade-primal-dual}, since Lemma~\ref{lemma:kakade-primal-dual} is designed for vector norm.

Similarly, we introduce the variable $t_1$ and $t_2$ into two Rademacher complexity term, i.e., \texttt{term (a)} and \texttt{term (b)}, separately. Note that the Euclidean norm $\Omega_{\text{aux}}(\mathbf{w}) = \lVert \mathbf{w} \rVert^2$ is 1-strongly convex function and its dual norm is itself.

\begin{align}
	\nonumber
	& \frac{Lc^2}{m} \mathbb{E}_{\boldsymbol{\sigma}}\left[ \sup_{W\in \mathcal{W}} \sum_{i=1}^m \langle \mathbf{w}_y,t_1\sigma_i \mathbf{x}_i\rangle\right]\\
	\label{eq:bound-term-(a)-mc-1}
	\leq & \frac{Lc^2}{m} \mathbb{E}_{\boldsymbol{\sigma}}\left[ \frac{t_1^2}{2}\sum_{i=1}^{m}\frac{1}{2}\lVert \sigma_i \mathbf{x}_i \rVert^2 + \sup_{\mathbf{w}_y \in \mathcal{W}} \frac{1}{2}\lVert \mathbf{w}_y \rVert^2 + \sum_{i=1}^m \frac{1}{2} \langle \nabla \lVert \mathbf{v}_{1,i-1} \rVert^2,\sigma_i \mathbf{x}_i  \rangle\right]\\
	\label{eq:bound-term-(a)-mc-2}
	\leq & Lc^2 \left( \frac{B^2 t_1^2 }{4} + \frac{\alpha \hat{R}_S(h_p)}{m}\right)
\end{align}
Inequality \eqref{eq:bound-term-(a)-mc-1} is obtained by applying Lemma~\ref{lemma:kakade-primal-dual} with $f = \Omega_{\text{aux}}$, and $\mathbf{u} = \mathbf{w}_y$ and $\mathbf{v}_i = t_1\mathbf{x}_i$. And the last step in~\eqref{eq:bound-term-(a)-mc-2} holds due to the fact that $\sup_{W \in \mathcal{W}}\Omega(W) \leq \alpha \hat{R}_S(h_p)$ and $\sup_{\mathbf{x}\in \mathcal{X}} \lVert \mathbf{x}\rVert \leq B$. By dividing $t_1$ on both sides, and notice that the above upper bound holds for any $t_1>0$, we choose a particular $t_1$ making the upper bound tight,
\begin{equation}
	\label{eq:rademacher-term1-mc}
	\frac{Lc^2}{m} \mathbb{E}_{\boldsymbol{\sigma}}\left[\sup_{W\in \mathcal{W}} \sum_{i=1}^m \sigma_i \langle \mathbf{w},\mathbf{x}_i\rangle \right] \leq \inf_{t_1 > 0} Lc^2\left(\frac{B^2 t_1}{4} +\frac{\alpha \hat{R}_S(h_p)}{mt_1}\right) = Lc^2\sqrt{\frac{B^2\alpha \hat{R}_S(h_p)}{m}}.
\end{equation}

Similarly, we introduce variable $t_2$ as tunning parameter for the \texttt{term (b)} with $\mathbf{u} = \mathbf{w}_y$ and $\mathbf{v}_i = t_2 \mathbf{h}_p(\mathbf{x}_i)$,
\[
	\begin{split}
	& \frac{Lc^2}{m} \mathbb{E}_{\boldsymbol{\sigma}}\left[ \sup_{\boldsymbol{\beta}\in \mathcal{V}}\langle \boldsymbol{\beta},t_2 \sigma_i \mathbf{h}_p(\mathbf{x}_i)\rangle\right]\\
	\leq & \frac{Lc^2}{m} \mathbb{E}_{\boldsymbol{\sigma}}\left[\frac{t_2^2}{2} \sum_{i=1}^m \frac{1}{2}\lVert \sigma_i \mathbf{h}_p(\mathbf{x}_i)\rVert^2 + \sup_{\boldsymbol{\beta} \in \mathcal{V}}\frac{1}{2}\lVert\boldsymbol{\beta}\rVert^2 + \sum_{i=1}^m \frac{1}{2}\langle \nabla\lVert \mathbf{v}_{1:i-1}\rVert^2,\sigma_i\mathbf{h}_p(\mathbf{x}_i) \rangle \right]\\
	\leq & Lc^2 \left(\frac{C^2 t_2^2}{4} +\frac{\rho}{m}\right),
	\end{split}
\]
where the last inequality holds due to the fact that $\sup_{\boldsymbol{\beta}\in \mathcal{V}}\lVert\boldsymbol{\beta}\rVert^2 \leq 2\rho$ and $\sup_{\mathbf{x}\in \mathcal{X}} \lVert \mathbf{h}_p(\mathbf{x})\rVert \leq C$. By dividing $t_2$ on both sides, and notice that the above upper bound holds for any $t_2>0$, we choose a particular $t_2$ making the upper bound tight,
\begin{equation}
	\label{eq:rademacher-term2-mc}
	\frac{Lc^2}{m} \mathbb{E}_{\boldsymbol{\sigma}}\left[\sup_{\boldsymbol{\beta}\in \mathcal{V}} \sum_{i=1}^m \sigma_i \langle \boldsymbol{\beta},\mathbf{h}_p(\mathbf{x}_i)\rangle \right] \leq \inf_{t_2 > 0} Lc^2 \left(\frac{C^2 t_2}{4} +\frac{\rho}{m t_2}\right) = Lc^2\sqrt{\frac{C^2\rho}{m}}.
\end{equation}

Combing~\eqref{eq:rademacher-term1-mc} and~\eqref{eq:rademacher-term2-mc}, we obtain obtain the upper bound for $\hat{\mathfrak{R}}_S(\mathcal{L})$. Notice that the square-root function is concave, by applying the Jensen's inequality w.r.t. the both side, we have the following upper bound for Rademacher complexity of loss function family,

\begin{equation}
	\label{eq:rademacher-bound-loss-multi-class}
	\begin{split}
	\mathfrak{R}_m(\mathcal{L}) &= \mathbb{E}_S[\hat{\mathfrak{R}}_S(\mathcal{L})] \leq Lc^2 \mathbb{E}_S\left[\sqrt{\frac{B^2\alpha \hat{R}_S(h_p)}{m}} + \sqrt{\frac{C^2\rho}{m}}\right] \\
	&\leq Lc^2 \left(\sqrt{\frac{B^2\alpha \mathbb{E}_S\left[\hat{R}_S(h_p)\right]}{m}} + \sqrt{\frac{C^2\rho}{m}}\right) \\
	& = L c^2 \left( \sqrt{\frac{B^2\alpha R(h_p)}{ m}} + \sqrt{\frac{C^2\rho}{m}}\right)\\
	& \leq L c^2 \sqrt{\frac{2(B^2\alpha R_p + C^2\rho)}{m}}.	
	\end{split}	
\end{equation}
The last step holds due to the fact that $\sqrt{a} + \sqrt{b} \leq \sqrt{2(a+b)}$, for any $a,b \geq 0$. Hence, we complete the proof.
\end{proof}
\begin{myRemark}
	Our analysis follows the framework in \cite{book/MIT/mohri2012foundations}, and shows a quadratic dependency on the number of classes $c$. In fact, it can be improved into linear or radical dependency by utilizing Gaussian complexity~\citep{conf/nips/LeiDBK15} or vector-contraction inequality for Rademacher complexities~\citep{conf/alt/Maurer16}, and we will show this in the journal version.
\end{myRemark}

\subsection{Proof of Main Theorem}
\begin{proof}
	Since $\hat{W}$ is the optimal solution of regularized ERM in Eq.~\eqref{eq:biased-regularization-multi-class}, it is apparently better than the choice $\mathbf{0}\in \mathbb{R}^{d\times c}$,
	\[
		\hat{R}_{S}(h_{\hat{W},p}) + \lambda \Omega(\hat{W}) \leq \hat{R}_S(h_p) + \lambda \Omega(\mathbf{0}) = \hat{R}_S(h_p),
	\]
	combining the non-negativity of loss function and regularizer, we know that 
	\[
		\hat{R}_S(h_{\hat{W},p}) \leq \hat{R}(h_p) \quad \text{and} \quad  \Omega(\hat{W}) \leq  \hat{R}(h_p)/\lambda.
	\]
	Similar to the proof in~\eqref{eq:upper-bound-r}, we know that the upper bound of risk can be chosen as $r = R_p$, the risk of previous models on target distribution.

	Thus, we can apply Lemma~\ref{lemma:generaization-bound-loss}, obtaining that for $h_{\hat{W},p}$,
	\begin{equation}
	\label{eq:rademacher-hypothesis-mc}
		\begin{split}
			R(h_{\hat{W},p}) - \hat{R}_S(h_{\hat{W},p}) &\leq R_{\ell}(h_{\hat{W},p}) - \hat{R}_S(h_{\hat{W},p}) \\
			& \leq 2 \mathfrak{R}_m(\mathcal{L}) + \frac{3M\log(1/\delta)}{4m} + 3\sqrt{\frac{(8\mathfrak{R}_m(\mathcal{L})+R_p)M\log(1/\delta)}{4m}}.
		\end{split}		
	\end{equation}

Setting $\alpha = 1/\lambda$, we have
\begin{equation}
	\label{eq:bound-rademacher-L-mc}
	\mathfrak{R}_m(\mathcal{L})  \leq L c^2 \sqrt{\frac{2(B^2 R_p + C^2\lambda\rho)}{\lambda m}}.
\end{equation}

Plugging \eqref{eq:bound-rademacher-L-mc} into \eqref{eq:rademacher-hypothesis-mc}, we complete the proof of statements.

\begin{equation}
	\begin{split}
	R(h_{\hat{W},p}) - \hat{R}_S(h_{\hat{W},p}) &\leq 2L c^2 \sqrt{\frac{2(B^2 R_p + C^2\lambda\rho)}{\lambda m}} + \frac{3M\log(1/\delta)}{4m} \\
	& + 3\sqrt{\frac{\left(8L c^2 \sqrt{\frac{2(B^2 R_p + C^2\lambda\rho)}{\lambda m}}+R_p\right)M\log(1/\delta)}{4m}}.	
	\end{split}
\end{equation}

By some simple derivations and transforms, we complete the proof of main statements.
\end{proof}

\section{More Experimental Results}
\label{sec:more-experiments}
In this section, we provide more experimental results from the following four aspects:
\begin{enumerate}
	\item[(1)] weight concentration: we show that the weight distribution learned by $\mathtt{WeightUpdat}$ largely concentrates on the most relevant previous models.
	\item[(2)] recurring concept drift: we conduct performance comparisons on recurring concept drift datasets, and empirically show the effectiveness of \textsc{Condor} under such circumstance.
	\item[(3)] parameter study: we exhibit that our approach is stable to the parameters.
	\item[(4)] robustness comparisons: we demonstrate the superiority of proposed approach by performing robustness comparisons over 22 datasets. 
\end{enumerate}

\subsection{Weight Concentration}
\label{sec:weight-concentration}
In this paragraph, we present the weight distribution of previous models learned by $\mathtt{WeightUpdate}$, in order to show that the weights largely concentrate on the most relevant previous models. 

The phenomenon is validated on both synthetic dataset \textit{SEA-recur} and the real-world dataset \textit{Emailing list}. We synthesize the recurring concept drift dataset according to the similar rule of original SEA dataset~\citep{conf/kdd/StreetK01}. That is, there are three attributes $x_1,x_2,x_3$, and $0\leq x_i \leq 10.0$. The target concept is determined by $x_1+x_2 \leq b$. Table~\ref{table:recurring-sea} shows the detailed information of concept drift in \textit{SEA-recur} dataset. And Table~\ref{table:recurring-weights-result-sea} shows the weight distribution of seven previous models, we can see that the weights concentrate on $\beta_1, \beta_2$ and $\beta_7$, which are the epochs that share the same concept with epoch $S_8$.

We also examine the phenomenon on real-world dataset  \textit{Emailing list}, whose detailed information of concept drift are shown in Table~\ref{table:recurring-email-list}. We can see that the concept drift happens for every 300 round, and in a recurring manner. Let us consider the epoch in 1,200 - 1,500 (epoch 5), actually, the concept in epoch 5 has emerged previously, i.e., it is same as the concepts in epoch 1 and epoch 3. As we can see in Table~\ref{table:recurring-weights-result}, the weights of four previous model essentially concentrate on the $\beta_1$ and $\beta_4$.

Thus, these experiments verify that the returned weight distribution largely concentrates on those previous models who better fit to current data epoch. Essentially, the weight learned by $\mathtt{WeightUpdate}$ depicts the `reusability' of previous model towards current data.

The weight concentration property plays an important role in making our approach successful. Especially, this is also an evidence to justify why our approach also succeeds in recurring concept drift scenarios, and we will show this in next paragraph.

\begin{table}[!t]
\scriptsize 
\centering
\caption{Detailed information of concept drift in \textit{SEA-recur} dataset. The target concept is determined by $x_1+x_2 \leq b$, where $b$ changes along with the data stream.}
\label{table:recurring-sea}
\resizebox{\textwidth}{!}{
\begin{tabular}{ccccccccc}\toprule
Topic 	 & 1 - 100 & 101 - 200 	& 201 - 300 & 301 - 400 & 401 - 500 & 501 - 600  & 601 - 700   & 701 - 800\\\midrule
$b$ 		 &	10     & 10 		& 20 		& 20 	& 20  & 20 	& 10   & 10 \\ \bottomrule
\end{tabular}}
\end{table}

\begin{table}[!t]
\scriptsize 
\centering
\caption{Demonstration of weights concentration phenomenon on synthetic \textit{SEA-recur} dataset. Consider the epoch in 701 - 800, and following shows weights of previous models $h_1,h_2,\ldots,h_7$.}
\label{table:recurring-weights-result-sea}
\resizebox{\textwidth}{!}{
\begin{tabular}{c|ccccccccccc}\toprule 
Iteration   &    1	 &	2	   &	3		 &	4		& 	5	&	$\ldots$	& 	96 	& 97	&	98			& 99		&	100      \\\midrule
$\beta_1$	&	0.142 &	0.159  &	0.176	 &  0.176	&	0.192	& 	&	0.212 	& 	0.212	&  0.212	&  0.212		&	0.212      \\
$\beta_2$   &	0.142 &	0.159  &	0.176	 &  0.176 	&	0.192	& 	&	0.472 	& 	0.472	&  	0.472	&  0.472	&	0.472 \\
$\beta_3$   &	0.142 &	0.130  &	0.118	 &  0.118	&	0.106	& 	&	4.3E-6 	& 	3.5E-6	&  	3.4E-6	&  2.9E-6		&	2.3E-6      \\
$\beta_4$   &	0.142 &	0.130  &	0.118	 &  0.118 	&	0.106	& 	&	3.2E-7  & 	2.6E-7	&  	2.1E-7	&  1.7E-7	&	1.4E-7 \\
$\beta_5$   &	0.142 &	0.130  &	0.118	 &  0.118 	&	0.106	& 	&	6.4E-6  & 	5.3E-6	&  	5.2E-6	&  4.3E-6	&	3.5E-6 \\
$\beta_6$   &	0.142 &	0.130  &	0.118	 &  0.118  	&	0.106	& 	&	4.3E-6  & 	3.5E-6	&  	3.5E-6	&  2.9E-6	&	2.4E-6 \\
$\beta_7$   &	0.142 &	0.159  &	0.176	 &  0.176 	&	0.192	& 	&	0.316  	& 	0.316 	&  	0.316	&  0.316	&	0.316 \\\bottomrule
\end{tabular}}
\end{table}

\begin{table}[!h]
\scriptsize 
\centering
\caption{Detailed information of concept drift in \textit{Emailing list} dataset. There are in total three different topics,  ``+'' indicates the users are interested in it, while ``$-$'' indicates not interested in it. Say an example, in the first period (time stamp 0-300), users' are only interested in messages of the topic medicine.}
\label{table:recurring-email-list}
\resizebox{0.85\textwidth}{!}{
\begin{tabular}{cccccc}\toprule
Topic 	 & 1 - 300 & 301 - 600 & 601 - 900 & 901 - 1,200  & 1,201 - 1,500\\\midrule
Medicine &	$+$ 	& $-$ 	& $+$ 	& $-$ 	& $+$ \\
Space 	 &	$-$ 	& $+$ 	& $-$ 	& $+$ 	& $-$ \\
Baseball &	$-$ 	& $+$ 	& $-$ 	& $+$ 	& $-$ \\ \bottomrule
\end{tabular}}
\end{table}

\begin{table}[!h]
\scriptsize 
\centering
\caption{Demonstration of weights concentration phenomenon on Email list dataset. Consider the epoch in 1,201 - 1,500, and following reports weights associated with previous models $h_1,h_2,h_3,h_4$.}
\label{table:recurring-weights-result}
\resizebox{\textwidth}{!}{
\begin{tabular}{c|ccccccccccc}\toprule 
Iteration   &    1	 &	2	   &	3		 &  4 		&  5  & $\ldots$	&	296 &  297 & 298			& 299		&	300      \\\midrule
$\beta_1$	&	0.25 &	0.299  &	0.345	 &  0.384 	&  0.416  &   		&	0.5 & 0.5	& 0.5			& 0.5		&	0.5      \\
$\beta_2$   &	0.25 &	0.201  &	0.155	 &  0.116 	&  0.084  &   		&	1.9E-52 	& 1.3E-52	 & 8.5E-53	& 5.7E-53	&	3.8E-53 \\
$\beta_3$   &	0.25 &	0.299  &	0.345	 &  0.384 	&  0.416  & 		&	0.5 		& 0.5 	& 0.5			& 0.5		&	0.5      \\
$\beta_4$   &	0.25 &	0.201  &	0.155	 &  0.116  	&  0.084  & 		&	1.9E-52 & 1.3E-52 & 8.5E-53	& 5.7E-53	&	3.8E-53 \\\bottomrule
\end{tabular}}
\end{table}

\subsection{Recurring Concept Drift}
\label{sec:recurring}
In this paragraph, we conduct the performance comparison on the \emph{recurring concept drift} scenario, a special sub-type of concept drift, in which previous concepts may disappear and then re-appear in the future. Therefore, previous models may be beneficial for future learning. Previous studies show that one needs to specifically consider the recurring structure, otherwise the performance will dramatically drop, even for approaches dealing with gradually evolving concept drift.

\textbf{Datasets.} We adopt two popular real-world datasets with recurring concept drift, i.e., \textit{Email list} and \textit{Spam filtering} datasets~\citep{conf/ecai/KatakisTV08,journals/kais/KatakisTV10,conf/iconip/JaberCT13a}. Both are datasets extracted from email corpus, and concept is decided by users' personal interests, which changes in a recurring manner. Detailed information are included in Section~\ref{sec:real-world-info}.

\textbf{Comparisons.} We compare our proposed approach to approaches in the following four categories, (a) Sliding window based approaches, including $\mathtt{SVM}$-$\mathtt{fix}$ (batch implementation by SVM) and $\mathtt{NB}$-$\mathtt{sw}$ (update only use the data in sliding window based on incremental Naive Bayes). (b) Ensemble based approaches, including $\mathtt{Learn}^{\mathtt{++}}.\mathtt{NSE}$, $\mathtt{DWM}$ and $\mathtt{AddExp}$. (c) Model-Reuse based approaches, including $\mathtt{TIX}$ and $\mathtt{DTEL}$. (d) Recurring approaches, which are specifically designed for recurring concept drift scenarios, including Conceptual Clustering and Prediction ($\mathtt{CCP}$) approach, an ensemble method that handle recurring concept drift via similarity clustering~\citep{journals/kais/KatakisTV10}. Also, we compare with Dynamic Adaptation to Concept Change ($\mathtt{DACC}$) and its adaptive variant $\mathtt{ADACC}$, detecting recurring concept drift based on a new second-order online learning mechanism~\citep{conf/iconip/JaberCT13a}.

Since codes of CCP, DACC and ACACC are not available, we directly use the results reported in their papers, as we use the whole dataset without any random splitting according to their settings. The experimental results are reported in Table~\ref{table:recurring-concept-drift}, we can see that \textsc{Condor} exhibits an encouraging performance on both datasets over three different performance measures. It performs significantly better than general concept drift approaches, and is comparable or even better than those approaches specifically designed for recurring concept drift scenario. 

The effectiveness of \textsc{Condor} in recurring datasets lies in the effect of \emph{weight concentration}, since our approach guarantees the weight concentrates on the best-fit previous models (See Observation~\ref{obser:weight-concentration}).

\begin{table}[!t]
\scriptsize 
\centering
\caption{Performance comparison on recurring concept drift datasets, \textit{Email list} and \textit{Spam filtering}.}
\label{table:recurring-concept-drift}
\resizebox{0.95\textwidth}{!}{
\begin{tabular}{ccccc|ccc}\toprule
\multirow{2}{*}{Category} & \multirow{2}{*}{Approach} & \multicolumn{3}{c|}{\textit{Email list}} & \multicolumn{3}{c}{\textit{Spam filtering}}                          \\
                          & & Accuracy  & Precision & Recall & Accuracy & Precision & \multicolumn{1}{c}{Recall} \\\midrule
\multirow{2}{*}{Window} & $\mathtt{SVM}$-$\mathtt{fix}$ & 71.4 	& 73.7  & 72.1  & 88.1  & 82.0  & 68.5  \\
& $\mathtt{NB}$-$\mathtt{sw}$ 	& 74.7 	& 77.9  & 73.2  & 91.9  & 90.2  & 77.0  \\\noalign{\smallskip}\cline{2-8}\noalign{\smallskip}
\multirow{3}{*}{Ensemble}  & $\mathtt{Learn}^{\mathtt{++}}.\mathtt{NSE}$ &  70.0 & 76.5  &  76.5  &  90.4 &  84.5 & 79.6 \\
% & $\mathtt{DWM}$         & 43.8  & 47.0  & 42.5  & 91.8  & 84.8  & 83.1  \\
& $\mathtt{DWM}$    		& 78.2    & 75.1  & 81.4 	& 91.9	&	89.1 	& 84.0 \\
& $\mathtt{AddExp}$   		& 70.4	  & 68.2  & 71.4  	& 91.3	&	90.0 	& 80.7  \\\noalign{\smallskip}\cline{2-8}\noalign{\smallskip}
\multirow{2}{*}{Model-Reuse} 	& $\mathtt{TIX}$  & 86.2 & 88.2 & 88.2 & 88.5 & 82.3 & 69.3\\
&  $\mathtt{DTEL}$   &   86.2    &  88.2     &   88.2    &   86.3    &   73.4    &   71.4    \\\noalign{\smallskip}\cline{2-8}\noalign{\smallskip}
\multirow{3}{*}{Recurring}&  $\mathtt{CCP}$ 		& 77.5  & 79.7  & 77.6  & 92.3  & 85.7  & 83.9  \\
&  $\mathtt{DACC}$ 	& 76.2  & 73.8  & 75.9  & 94.7  & 95.1  & \textbf{97.8}  \\
&  $\mathtt{ADACC}$	& 77.5 	& 75.2 	& 77.2 	& 94.9 	& \textbf{95.6} 	& 97.6  \\\noalign{\smallskip}\cline{2-8}\noalign{\smallskip}
Ours &  \textsc{Condor}		&  \textbf{95.6}    &  \textbf{93.2}     &   \textbf{99.8}     &  \textbf{95.4}	    &   91.1    & 	90.8	\\ \bottomrule
\end{tabular}}
\end{table}

\subsection{Parameter Study}
In this paragraph, we study the effects of parameters in influencing final accuracy. There are three parameters who play a crucial role in the procedure of \textsc{Condor}, and they are model pool size $K$, regularization coefficient $\lambda$ and step size $\eta$.
\begin{figure}[!h]
    \raggedright
    \subfigure[model pool size $K$]{ \label{fig:para-K}
        \includegraphics[clip, trim=3.6cm 10cm 4.2cm 10.0cm, height=0.35\textwidth]{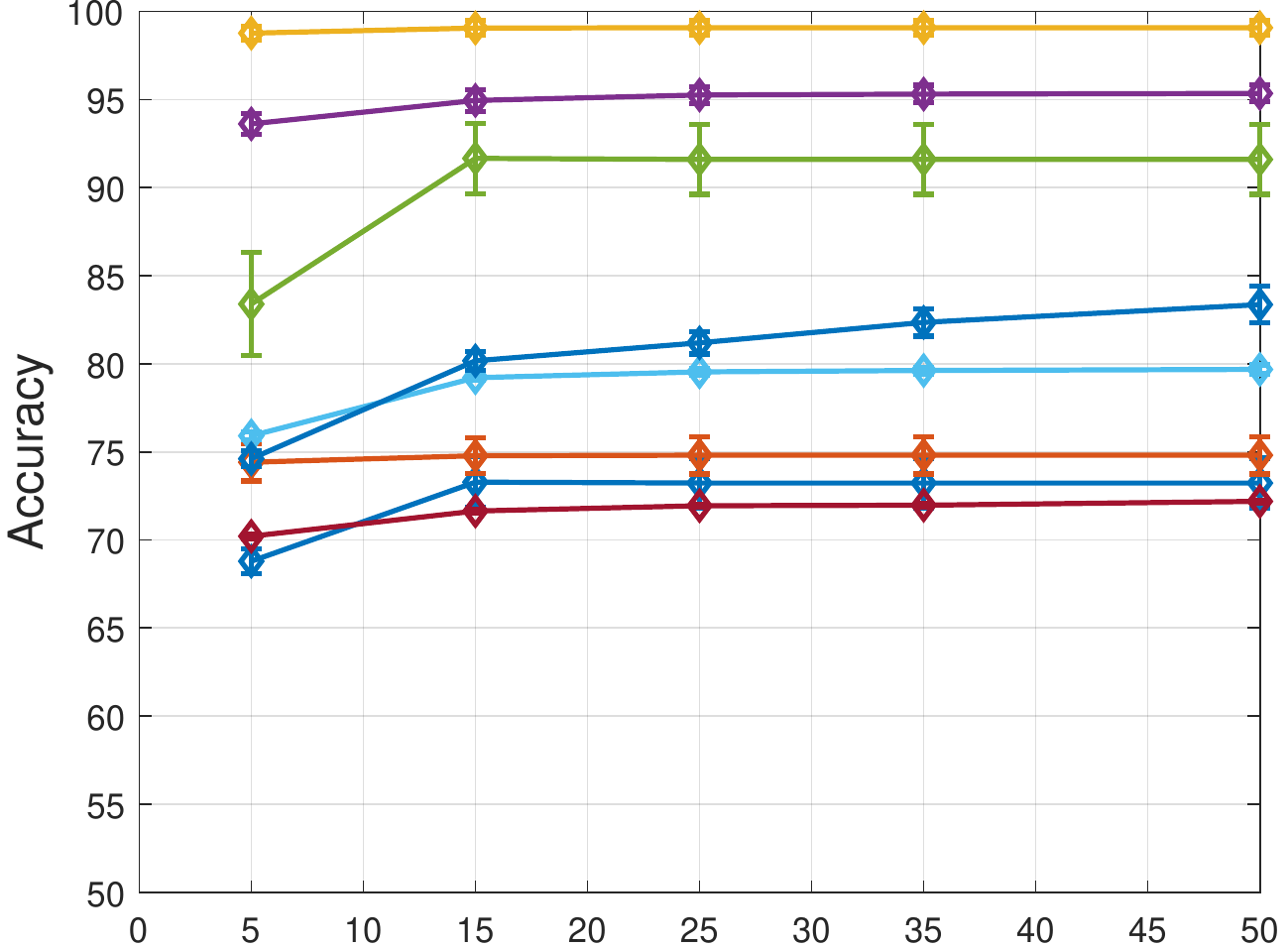}}\hspace{2mm}
    \subfigure[regularization coefficient $\lambda$]{ \label{fig:para-lambda} 
        \includegraphics[clip, trim=3.6cm 10cm 4.2cm 10.0cm, height=0.35\textwidth]{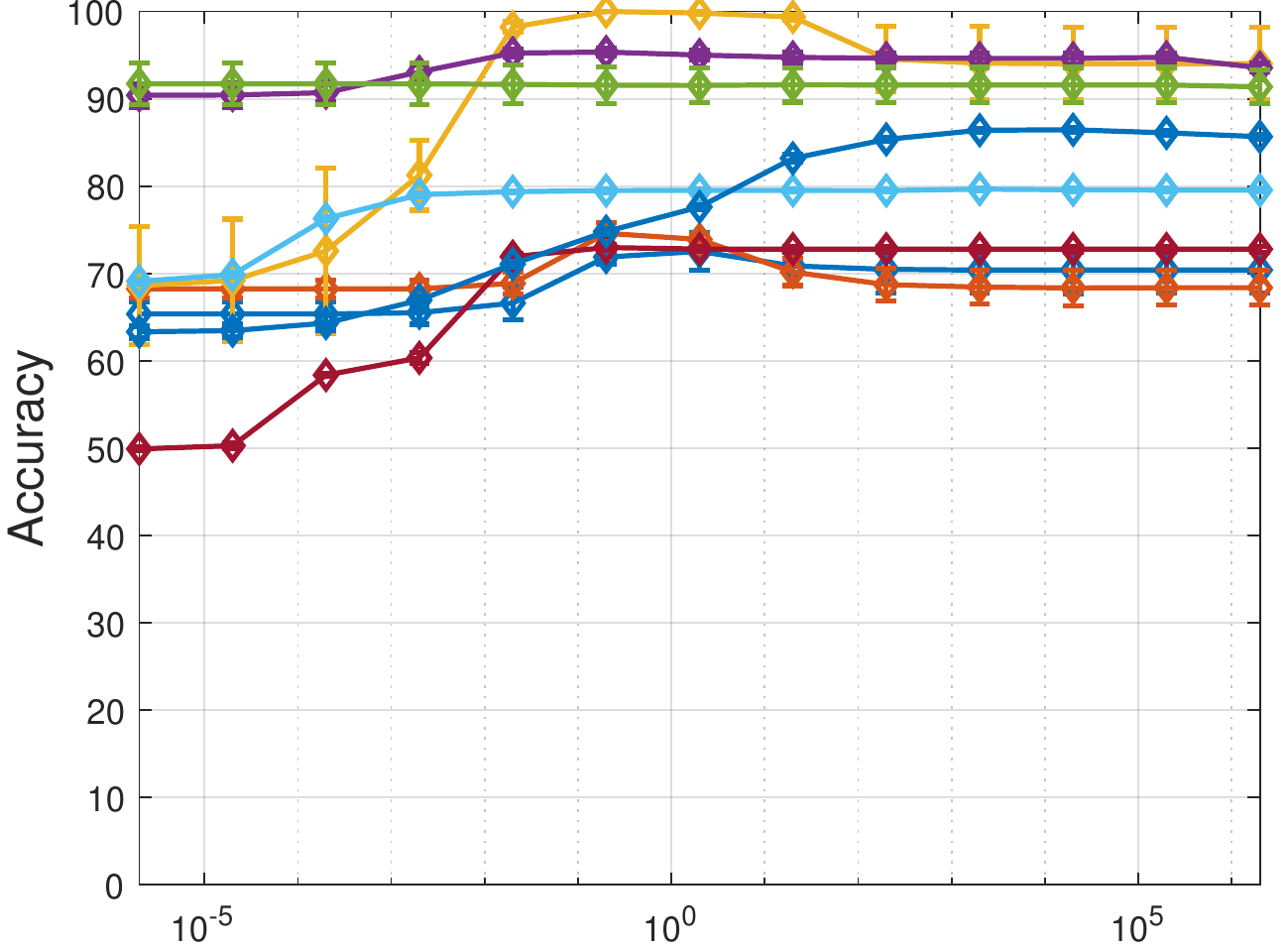}}
    \subfigure[step size $\eta$]{ \label{fig:para-eta} 
        \includegraphics[clip, trim=1.4cm 10cm 6.4cm 10.0cm, height=0.35\textwidth]{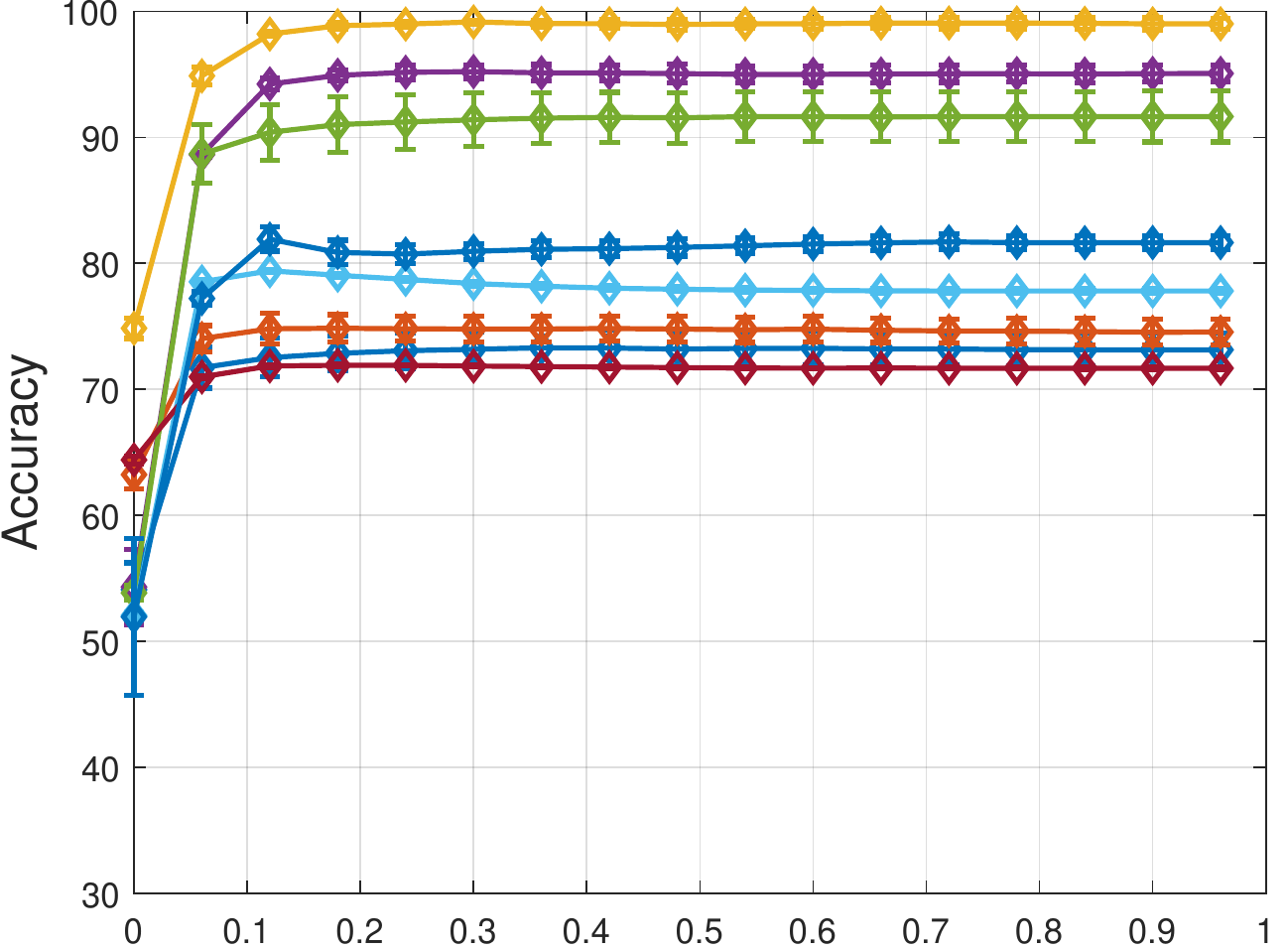}}\hspace{2mm}
    \subfigure{ \label{fig:para-bar} 
        \includegraphics[clip, trim=13.2cm 10cm 1.2cm 10.0cm, height=0.35\textwidth]{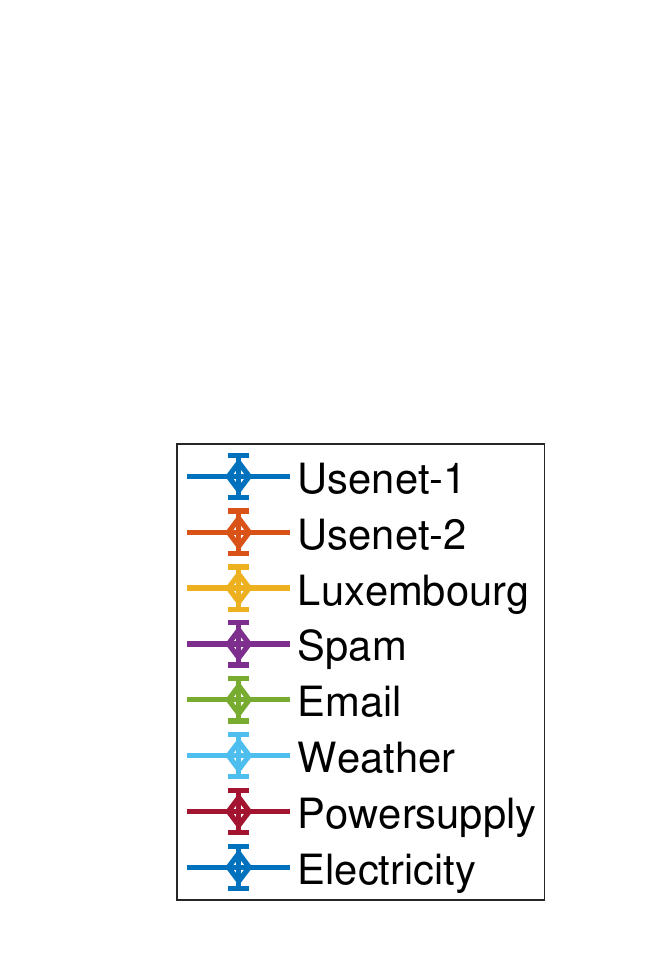}}
             \caption{Parameter study on different datasets.}
    \label{fig:parameter-study}
\end{figure}

\textbf{Model Pool Size.} set the value of model pool size $K$ from $5$ to $50$, and conduct the experiments on all real-world datasets\footnote{Here, we do not include GasSensor (multi-class) and Covertype (extremely large) datasets. Thus, their default settings epoch size is set as $p=200$, different from others, as we have mentioned before.} for 10 times, and plot the mean and standard variance of predictive accuracy with respect to different parameter $K$ values in Figure~\ref{fig:para-K}. We can see that the predictive accuracy rises up as the model pool size increases, and is not benefit from even larger model pool. Although a larger $K$ might be benign for performance improvement, the memory cost will also significantly increase with a larger $K$.
In this paper, we set default value of model pool size $K$ as $25$. 

\textbf{Regularization Coefficient.} We set the value of regularization coefficient $\lambda$ from $2\times10^{-6}$ to $2\times10^{6}$, and conduct the experiments on all real-world datasets for 10 times, and plot the mean and standard variance of predictive accuracy with respect to different parameter $\lambda$ values in Figure~\ref{fig:para-lambda}. We can see that when we set a relative large $\lambda$ value, all datasets basically achieve the best performance, and are not sensitive to the $\lambda$ value. This accords with our intuition, since the $\lambda$ value represents a trade-off between empirical loss and biased regularization term, a larger value addresses more importance on biased regularization. In other words, when $\lambda$ is large, it tends to leverage more information to build the new model. Thus, the results in Figure~\ref{fig:para-lambda} implies the effectiveness of model reuse. In this paper, we set default value of regularization coefficient $\lambda$ as $200$. 

\textbf{Step Size.} We set the value of step size $\eta$ from $0$ to $1$, and conduct the experiments on all real-world datasets for 10 times, and plot the mean and standard variance of predictive accuracy with respect to different parameter $\eta$ values in Figure~\ref{fig:para-eta}. We can see that when step size is set relatively large, say larger than 0.5, then the performance is satisfying and stable. This phenomenon matches the theoretical suggestion value in Lemma~\ref{lemma:local-regret-convex}, as the theoretical suggestion value can be calculated by $\eta_{\texttt{theory}} = \sqrt{8\ln(K/p)} = \sqrt{8\ln(25)/50} \approx 0.718$, where $K$ is the model size default as 25 and $p$ is epoch size default as 50. In this paper, we set default value of step size $\lambda$ as $0.75$.

\subsection{Robustness Comparisons}
Additionally, we conduct robustness comparisons on all approaches over 22 datasets. The robustness measures the performance of a particular approach over all datasets. Concretely speaking, for a particular algorithm \textit{algo}, similar to definition in~\cite{conf/kdd/VlachosDGKK02}, the robustness here is defined as the proportion between its accuracy and the smallest accuracy among all compared algorithms,
\[
r_{algo} = \frac{acc_{algo}}{\min_{\alpha} acc_\alpha}.
\]

Apparently, the worst algorithm has $r_{algo}=1$, and the others have $r_{algo}\geq 1$, the greater the better. Hence, the sum of $r_{algo}$ over all datasets indicates the robustness of for algorithm $algo$. The greater the value of the sum, the better the performance of the algorithm.
\begin{figure}[!t]
\centering
\includegraphics[clip, trim=5.4cm 9.4cm 5.3cm 10cm, width=0.45\textwidth]{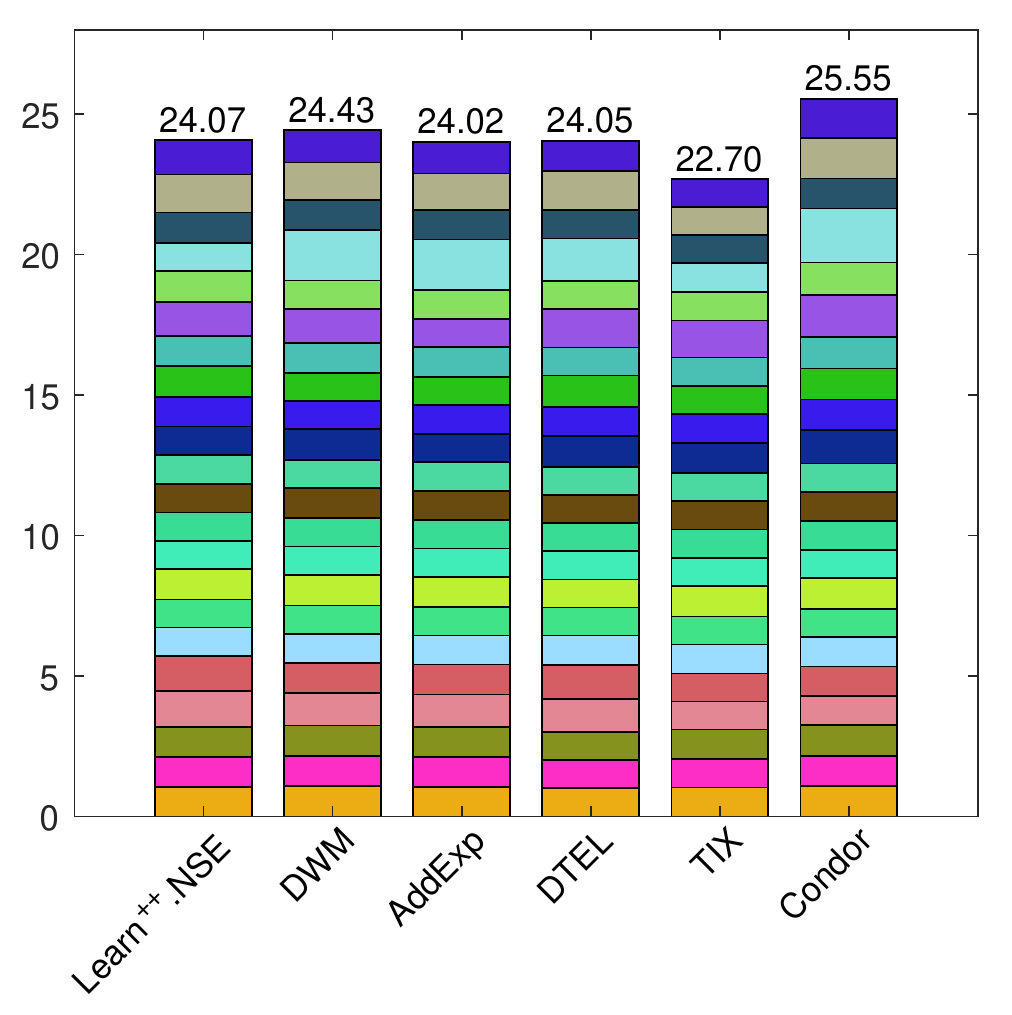}
\caption{Robustness comparisons of accuracy on five compared approaches and \textsc{Condor} over 22 datasets with concept drift.}
\label{figure:robustness}
\end{figure}

We plot the robustness comparison for five compared approaches and \textsc{Condor} over 22 datasets in Figure~\ref{figure:robustness}. From the figure, we can see that \textsc{Condor} achieves the best over all datasets, both ensemble category and model-reuse category approaches. In particular, our approach shows a significant advantage over DTEL and TIX, which verifies the effectiveness of our model reuse strategy. 

\section{Dataset Descriptions}
\label{sec:dataset-description}
In this section, we provide detailed descriptions of datasets with concept drift, which are adopted in the experiments. 

\subsection{Descriptions of Synthetic Datasets}
In the experiments, we adopt four commonly used synthetic datasets and their variants: SEA (SEA200A, SEA200G and SEA500G), CIR500G, SIN500G and STA500G. 

The first family is SEA dataset~\cite{conf/kdd/StreetK01}, which consists of three attributes $x_1,x_2,x_3$, and $0\leq x_i \leq 10.0$. The target concept is determined by $x_1+x_2 \leq b$. For the three variants, there are 24,000 instances. The drift period of SEA200A and SEA200G is 200, and for SEA500G, the drift period is 500. `A' indicates $b \in \mathcal{A}$ and `G' indicates $b\in \mathcal{G}$, where $\mathcal{A}=\{10,7,3,7,10,13,16,13\}$ and $\mathcal{G}= \{10,8,6,8,10,12,14,12\}$.

The other three synthetic datasets are 

CIR500G is a variant of CIRCLE datasets~\citep{journals/tnn/ElwellP11}, which applies a circle as the decision boundary in a 2-D feature space and simulates concept drift by changing the radius of the circle. The target label is $x_1 + x_2^2 \leq r$ with $r = \{3,2.5,2,2.5,3,3.5,4,3.5\}$. The drift period is 500.

SIN500G is a variant of SINE datasets~\citep{journals/tnn/ElwellP11}, which applies a sine curve as the decision boundary in a 2-D feature space and simulates concept drift by changing the angle. The target label is $\sin(x_1+\theta) \leq x_2$ with $\theta_0 = 0$ and $\Delta\theta = \pi/60$. The drift period is 500.

STA500G is a variant of STAGGER Boolean Concepts~\citep{journals/ml/SchlimmerG86}, which generate the data with categorical features using a set of rules to determine the class label. Details is included in~\cite{journals/tnnls/suny18}. The drift period is 500.

The other six synthetic datasets are 1CDT, 1CHT, UG-2C-2D, UG-2C-3D, UG-2C-5D and GEARS-2C-2D. Their basic information are reported in Table~\ref{table:dataset-info}. For more details, one can refer to the paper~\citep{conf/sdm/SouzaSGB15}.

\subsection{Descriptions of Real-World Datasets}
\label{sec:real-world-info}
In the experiments, we adopt nine real-world datasets: Usenet-1, Usenet-2, Luxembourg, Spam, Email, Weather, Powersupply, Electricity and Covertype. The number of data items varies from 1,500 to 581,012. Basic statistics are included in Table~\ref{table:dataset-info}, and we provide detailed descriptions as follows.

\begin{enumerate}[leftmargin=0cm,itemindent=.25cm,labelwidth=\itemindent,labelsep=0cm,align=left]
	\item[-] \textit{Usenet}~\citep{conf/ecai/KatakisTV08} is split into \textit{Usenet-1} and \textit{Usenet-2} which both consist of 1,500 instances with 100 attributes based on 20 newsgroups collection. They simulate a stream of messages from different newsgroups that are sequentially presented to a user, who then labels them according to his/her personal interests. 
	\item[-] \textit{Luxembourg}~\citep{journals/ida/Zliobaite11} is constructed by using European Social Survey data. There are 1,900 instances with 32 attributes in total, and each instance is an individual and attributes are formed from answers to the survey questionnaire. The label indicates high or low internet usage. 
	\item[-] \textit{Spam Filtering}~\citep{journals/jiis/KatakisTBBV09} is a real-world textual dataset that uses email messages from the Spam Assassin Collection, and boolean bag-of-words approach is adopted to represent emails. It consists of 9,324 instances with 500 attributes, and label indicates spam or legitimate.
	\item[-] \textit{Email List}~\citep{journals/jiis/KatakisTBBV09} is a stream of 1,500 examples and 913 attributes which are words that appeared at least 10 times in the corpus (boolean bag-of-words representation), which are collected from 20 Newsgroup collection. The users' personal interests are changing in a recurring manner. 
    \item[-] \textit{Weather}~\citep{journals/tnn/ElwellP11} dataset is originally collected from the Offutt Air Force Base in Bellevue, Nebraska. 18,159 instances are presented with an extensive range of 50 years ($1949-1999$) and diverse weather patterns. Eight features are selected based on their availability, eliminating those with a missing feature rate above 15\%. The remaining missing values are imputed by the mean of features in the preceding and following instances. Class labels are based on the binary indicator(s) provided for each daily reading of rain with 18,159 daily readings: 5698 (31\%) positive (rain) and 12,461 (69\%) negative (no rain).
    \item[-] \textit{GasSensor}~\citep{journal/chemistry/vergara2012chemical} is a dataset contains 4,450 measurements from 16 chemical sensors utilized in simulations for drift compensation in a discrimination task of six gases (six classes) at various levels of concentrations.
    \item[-] \textit{Powersupply}~\citep{journals/archive/chen2015ucr} contains three year power supply records including 29,928 instances with 2 attributes from 1995 to 1998, and our learning task is to predict which hour the current power supply belongs to. We relabel into binary classification according to p.m. or a.m.
    \item[-] \textit{Electricity}~\citep{journal/harries1999splice} is wildly adopted and collected from the Australian New South Wales Electricity Market where prices are affected by demand and supply of the market. The dataset contains 45,312 instances with 8 features. The class label identifies the change of the price relative to a moving average of the last 24 hours. 
    \item[-] \textit{Covertype}~\citep{conf/kdd/GamaRM03,journals/tnnls/suny18} is a real-world data set for describing the observation of a forest area with 51 cartographic variables obtained from US Forest Service (USFS) Region 2 Resource Information System (RIS). Binary class labels are involved to represent the corresponding forest cover type. 
    \item[-] \textit{PokerHand}~\citep{book/cattral2002evolutionary} describes the suits and ranks of a hand of five playing cards, which consists of 1,000,000 instances and 11 attributes. Each card is described using two attributes (suit and rank). Each card is described using two attributes (suit and rank), for a total of 10 predictive attributes. There is one class attribute that describes the ``Poker Hand''.
    % \item[-] \textit{Christmas} is a data set generated from twitter, a real-world application, and involves tweet data posted around Christmas Day in 2009 and 2010. There are three classes (music, job, and Christmas) and 15000 examples in Christmas data, with 247 features. Since Christmas is about periodic event, the recurring concepts are involved.
\end{enumerate}
%--------------------------------------------------------------
\end{document}